\documentclass{article}
\usepackage[english]{babel}
\usepackage[letterpaper,top=2cm,bottom=2cm,left=3cm,right=3cm,marginparwidth=1.75cm]{geometry}

\usepackage{authblk}
\usepackage{algorithm,algorithmic,amsthm}

\usepackage{threeparttable}
\usepackage{hyperref}
\usepackage{natbib}

\usepackage[utf8]{inputenc} 
\usepackage[T1]{fontenc}    
\usepackage{hyperref}       
\usepackage{url}            
\usepackage{booktabs}       
\usepackage{amsfonts}       
\usepackage{nicefrac}       
\usepackage{microtype}      
\usepackage{algorithm,algorithmic}
\usepackage{xcolor}         
\usepackage{multirow}

\usepackage{amsmath,amscd,amsbsy,amssymb,latexsym,url,bm}
\allowdisplaybreaks
\usepackage{cleveref}
\usepackage{verbatim}
\usepackage{color}
\usepackage{dsfont}

\newtheorem{theorem}{Theorem}

\newtheorem{lemma}{Lemma}

\newtheorem{definition}{Definition}

\newcommand{\bOne}[1]{\mathds{1} \! \left\{#1\right\}}

\mathchardef\mhyphen="2D

\usepackage{hyperref}
\hypersetup{
  colorlinks   = true, 
  urlcolor     = blue, 
  linkcolor    = blue, 
  citecolor   = blue 
}

\usepackage{mathtools}
\usepackage{dsfont}
\usepackage{algorithm}
\usepackage{algorithmic}
\usepackage{bm}
\usepackage{booktabs}
\usepackage{array}

\usepackage{graphicx}
\graphicspath{{Figures/}}
\usepackage{caption}
\usepackage[export]{adjustbox}
\usepackage{float}
\usepackage[all]{hypcap} 

\newcommand{\E}{\mathbb{E}}

\newcommand{\1}{\mathbb{I}}

\newcommand{\hyb}{\mathrm{hyb}}

\newcommand{\Arms}{[K]}
\newcommand{\Tmax}{T}

\newcommand{\muon}[1]{\mu^{\mathrm{(on)}}_{#1}}
\newcommand{\muoff}[1]{\mu^{\mathrm{(off)}}_{#1}}

\newcommand{\muhybhat}[2]{\hat{\mu}^{\mathrm{(hyb)}}_{#1}(#2)}
\newcommand{\muonhat}[2]{\hat{\mu}^{\mathrm{(on)}}_{#1}(#2)}
\newcommand{\muoffhat}[1]{\hat{\mu}^{\mathrm{(off)}}_{#1}}

\begin{document}

\title{Sample-Mean Anchored Thompson Sampling for Offline-to-Online Learning with Distribution Shift}

\author{
\begin{minipage}{0.45\textwidth}
\centering
Bochao Li\par
Southern University of Science and Technology\par
\vspace{-0.15em}
{\small\texttt{12531179@mail.sustech.edu.cn}}
\end{minipage}
\hfill
\begin{minipage}{0.45\textwidth}
\centering
Yao Fu\par
Southern University of Science and Technology\par
\vspace{-0.15em}
{\small\texttt{12211653@mail.sustech.edu.cn}}
\end{minipage}

\vspace{1.0em}

\begin{minipage}{0.45\textwidth}
\centering
Wei Chen\par
Microsoft Research\par
\vspace{-0.15em}
{\small\texttt{weic@microsoft.com}}
\end{minipage}
\hfill
\begin{minipage}{0.45\textwidth}
\centering
Fang Kong\thanks{Corresponding author.}\par
Southern University of Science and Technology\par
\vspace{-0.15em}
{\small\texttt{kongf@sustech.edu.cn}}
\end{minipage}
}
\date{}

 \maketitle

\begin{abstract} 
Offline-to-online learning aims to improve online decision-making by leveraging offline logged data. A central challenge in this setting is the distribution shift between offline and online environments. 
While some existing works attempt to leverage shifted offline data, they largely rely on UCB-type algorithms. Thompson sampling (TS) represents another canonical class of bandit algorithms, well known for its strong empirical performance and naturally suited to offline-to-online learning through its Bayesian formulation. However, unlike UCB indices, posterior samples in TS are not guaranteed to be optimistic with respect to the true arm means. This makes indices constructed from purely online and hybrid data difficult to compare and complicates their use.
To address this issue, we propose sample-mean anchored TS (Anchor-TS), which introduces a novel median-based anchoring rule that defines the arm index as the median of an online posterior sample, a hybrid posterior sample, and the online sample mean. The median anchoring systematically corrects bias induced by distribution shift by mitigating over-estimation for suboptimal arms and under-estimation for optimal arms, while exploiting offline information to obtain more accurate estimates when the shift is small.
We establish theoretical guarantees showing that the proposed algorithm safely leverages offline data to accelerate online learning, and quantifying how the degree of distribution shift and the size of offline data affect the resulting regret reduction. 
Extensive experiments demonstrate consistent improvements of our algorithm over baselines.
\end{abstract}

\section{Introduction}


Stochastic multi-armed bandits (MAB) provide a fundamental framework for sequential decision-making under uncertainty \citep{lai1985asymptotically, lattimore2020bandit}, where a learning agent repeatedly selects actions to learn the unknown environment and maximize cumulative reward. The upper confidence bound (UCB) \citep{Auer2002} and Thompson sampling (TS) \citep{agrawal2013ts} are two canonical algorithm types to solving the problem: the former relies on optimism-driven confidence intervals, and the latter selects actions by sampling from posterior distributions. 
Despite their importance, classical bandit algorithms typically start from scratch. However, learning purely from online interactions can be costly or risky. Many applications provide access to offline logged data collected by historical policies, which motivates the offline-to-online learning setting that leverages such data to accelerate subsequent online learning \citep{nair2020awac, lee2022offline, shivaswamy2012multi, wagenmaker2023leveraging}.


Recently, there has been increasing interest in offline-to-online algorithms under distribution shift \citep{CheungLyu2024, yin2025multi, he2024learning, qu2025hybrid}, where the offline reward distribution differs from the online environment due to system updates, non-stationarity, or sim-to-real gaps. A recurring insight in this line of work is that achieving performance no worse than purely online learning requires some form of prior knowledge about the distribution shift. Existing approaches mainly adopt UCB-type algorithms. A representative strategy is to construct two UCBs: a hybrid UCB that incorporates offline data together with a bias correction term, and a purely online UCB based solely on online observations \citep{CheungLyu2024}. Actions are selected according to the minimum of the two, which guarantees regret no worse than that of pure online UCB.

Thompson sampling (TS) represents another canonical class of bandit algorithms and has been extensively studied across a wide range of online learning settings \citep{agrawal2012analysis,daniel2018tutorial}. It is well known for its strong empirical performance, often outperforming UCB-based methods in practice, and its Bayesian formulation makes it particularly appealing for offline-to-online learning as unbiased offline data can be naturally incorporated into the prior distribution \citep{oetomo2023cutting, agnihotri2024online}.

Despite these advantages, the posterior-sampling nature of TS makes it fundamentally more challenging under distribution shift. In UCB-based methods, the index is optimistic, and regret can be attributed to inaccurate estimates of suboptimal arms. By contrast, TS relies on posterior samples that may fall on either side of the true arm means, and its regret depends jointly on the accuracy of both the optimal and suboptimal arms. 
This principal difference has important implications for incorporating offline data. For UCB-based methods, conservatively taking the minimum of a purely online index and a hybrid index provides a safe guard against distribution shift. For TS, however, no analogous comparison rule exists: taking the minimum risks underestimating the optimal arm, while taking the maximum may overestimate suboptimal arms. This lack of a principled mechanism for comparing posterior samples makes it nontrivial to exploit biased offline data for TS.
To address this challenge, we propose sample-mean anchored Thompson sampling (Anchor-TS), a modified TS algorithm based on a median-of-indices aggregation scheme. At each round, Anchor-TS assigns to each arm an index defined as the median of three statistics: (i) an online posterior sample, (ii) a hybrid posterior sample that integrates offline and online data, and (iii) the online sample mean, which serves as an unbiased anchor. This median-based aggregation induces an inherent robustness–efficiency trade-off. When the offline data are reliable, the hybrid posterior sample typically exhibits reduced variance and remains close to the anchor, thereby accelerating learning. Conversely, under severe bias in the offline component, the hybrid sample tends to behave as an outlier and is filtered out by the median, so that the resulting index is primarily driven by online observations. 
Furthermore, to prevent the underestimation of the optimal arms' online sample mean from being amplified under median aggregation when its offline reward is smaller, we additionally introduce a right-hand shift to the hybrid posterior distribution.

Our analysis develops a new regret decomposition tailored to median-based TS. In particular, we analyze the behavior of posterior samples for both suboptimal and optimal arms relative to the online sample mean, which serves as an anchor, and characterize their joint contributions to regret accordingly. This enables us to bound the leading regret terms in a way that captures the more favorable behavior between online TS and hybrid TS. Specifically, we obtain a regret upper bound 
\begin{multline*}
O\Bigg( \sum_{i\in[K]\setminus\{1\}} \Delta_i \Big( 
\big(\log T/\Delta_i^2 - N_i \max\{0, 1-3\omega_i/\Delta_i\}\big)_+
+ \big(\log T/\Delta_i^2 - N_1\big)_+ \Big) \Bigg).
\end{multline*}
%
%
where $T$ is online horizon, $K$ is the arm set size, arm $1$ is the optimal arm, $\Delta_i$ is arm $i$'s sub-optimality gap compared with $1$, $\omega_i$ is an effective discrepancy related to $i$'s offline and online expected reward, $N_i$ is the offline sample size of $i$, $(\cdot)_+$ represents $\max\{0,\cdot\}$. 

The above regret guarantee formalizes the robustness–efficiency trade-off under distribution shift: it is no worse than the pure online TS \citep{agrawal2013ts}, and strictly improves when the distribution shift $\{\omega_i\}_{i\in [K]}$ is mild. Compared to existing UCB-based algorithms \citep{CheungLyu2024}, the regret reduction attributable to suboptimal arms estimations is of similar order. Crucially, our analysis further reveals an additional source of improvement that is unique to TS: offline data also reduces regret by accelerating concentration on the optimal arm, as reflected in the dependence on $N_1$. Such an advantage is particularly relevant in practice, where offline data are often collected by prior learning policies or expert behavior and therefore tend to contain a large number of samples from the optimal arm. In such cases, Anchor-TS can effectively exploit this abundance of optimal-arm data, whereas UCB-based methods are unable to benefit in the same way. 
Extensive empirical results further validate our theory. Across all considered settings, Anchor-TS consistently and substantially outperforms UCB-based algorithms. The performance gap becomes even more pronounced when more offline data are on the optimal arm. Importantly, even in the unbiased and pure online settings, the sample-mean–based mechanism helps reduce the excessive exploration caused by the high variance of a single posterior sample in vanilla TS. As a result, it achieves better empirical performance than the corresponding TS baselines in these settings.

\section{Related Work}\label{sec:related work}

\paragraph{Thompson sampling (TS). }
TS is a canonical Bayesian framework for stochastic MAB. 
It has been widely adopted since the seminal work of \cite{thompson1933likelihood}, but the establishment of convergent regrets lagged for decades \citep{agrawal2012analysis, kaufmann2012thompson, agrawal2013ts, jin2021mots, jin2023}. In many problem settings, although UCB-based algorithms have been applied, researchers continue to devote significant effort to TS-type algorithms \citep{agrawal2013thompson, komiyama2015optimal, wang2018thompson, verstraeten2020multi, zhang2020neural}, driven by their superior empirical performance and ease of implementation \citep{granmo2010solving, scott2010modern, chapelle2011empirical}. 
More recently, TS-type algorithms are also proposed for offline-to-online learning problems with offline data naturally encoded in the prior \citep{oetomo2023cutting,agnihotri2024online}. These approaches typically do not account for distribution shift between offline and online environments.

\paragraph{Offline-to-online Learning. }
Offline-to-online learning aims to accelerate online decision-making by leveraging pre-collected logged data. This paradigm has been extensively explored empirically in both bandit and reinforcement learning (RL) settings \citep{lee2022offline, nair2020awac, ball2023efficient, yu2023actor, nakamoto2023cal, xia2024toward}, and has also motivated a growing body of theoretical work establishing performance guarantees. 

Early theoretical studies primarily focused on the unbiased setting, where offline and online reward distributions coincide.
In the bandit setting, \citet{shivaswamy2012multi} show that historical data can yield constant regret, a result later extended to contextual bandits through informative priors \citep{oetomo2023cutting} and meta-algorithmic approaches \citep{banerjee2022artificial}. More recently, \citet{sentenac2025balancing} analyze the trade-off between optimism in online learning and pessimism in offline learning in the offline-to-online context.
For RL, theoretical frameworks have expanded from tabular Markov decision processes \citep{xie2021policy} to linear models \citep{wagenmaker2023leveraging, tan2024natural, huang2025augmenting} and further to general function approximation \citep{songhybrid}, with a common focus on relaxing offline data coverage assumptions.  

In practice, offline data often originate from heterogeneous or outdated sources, making it essential to achieve both efficiency gains from informative offline data and robustness to distribution shift. Fundamental lower bounds establish that without a priori knowledge of the distribution shift, no policy can uniformly outperform purely online algorithms \citep{zhang2019warm, CheungLyu2024, zhang2025contextual, qu2025hybrid}. Consequently, existing works typically incorporate some shift information into the decision process. Such information is encoded through lower bounds \citep{qu2025hybrid} or upper bounds on the shift \citep{CheungLyu2024,ahn2025online}. Our work follows the last line of works. A representative strategy in this line is to construct a hybrid UCB index by combining offline data with a known upper bound on the bias, and to select actions according to the minimum of this hybrid index and a purely online UCB \citep{CheungLyu2024}. This strategy is applied to a variety of settings, including best arm identification (BAI) \citep{yang2025best}, heterogeneous feedback structures \citep{he2024learning}, auxiliary rewards \citep{yin2025multi},  combinatorial MAB \citep{zhou2025hybrid}, and linear bandits \citep{zhang2025contextual}. To the best of our knowledge, TS-type algorithms for offline-to-online learning under distribution shift is still open. 

\paragraph{TS with misspecified priors. }Our work is also related to studies on misspecified priors in TS. 
\citet{liu2016prior,simchowitz2021bayesian} show that when TS is initialized with strongly biased priors,
	where the prior assigns very little probability to the true model, it can suffer linear regret and perform much worse than methods with uninformative priors. 
These results demonstrate the negative impact of using an incorrect prior. 
Compared with this line of work, our setting explicitly models the prior as being induced by offline data. 
Our setting requires not only robustness to biased priors, but also the ability to achieve regret improvement when offline data are informative.


\section{Preliminaries}
\label{sec: preliminaries}

We begin with the classical stochastic multi-armed bandit problem. An agent interacts with an environment over a time horizon $\Tmax$ by repeatedly selecting arms from a finite set $\Arms = \{1,\dots,K\}$. Each arm $i\in\Arms$ is associated with an unknown reward distribution $P_i^{\mathrm{(on)}}$ supported on $[0,1]$\footnote{Our analysis extends to sub-Gaussian reward distributions by replacing Hoeffding-type concentration bounds with their sub-Gaussian counterparts. The regret order remains unchanged.} and unknown mean $\muon{i}$. At each round $t=1,2,\dots,\Tmax$, the agent selects an arm $A(t)\in\Arms$ and observes a reward $R_{A(t)}(t)$ drawn independently from the corresponding online distribution $P_{A(t)}^{\mathrm{(on)}}$. Without loss of generality, let arm $1 \in \arg \max_{j\in\Arms} \muon{j}$ denote the unique\footnote{The uniqueness assumption is only for the convenience of
		the analysis. The setting with multiple optimal arms can only decrease the regret as shown in \cite{agrawal2012analysis}. } optimal arm. 
For each suboptimal arm $i\neq 1$, define the sub-optimality gap $\Delta_i = \muon{1} - \muon{i} $.

In this work, we study a stochastic bandit setting augmented with
\emph{offline data}.
In addition to the online interaction described above, each arm $i\in\Arms$
is associated with an offline dataset
$S_i = \{X_{i,1},\dots,X_{i,N_i}\}$ of size $N_i$.
The samples in $S_i$ are assumed to be i.i.d. draws from an
offline distribution $P_i^{\mathrm{(off)}}$ supported on $[0,1]$, with unknown mean $\muoff{i} = \E_{X_i\sim P_i^{\mathrm{(off)}}}\left[X_i \right]$.


Importantly, for any arm $i$, the offline and online reward distributions are not assumed to coincide. To explicitly model the potential distribution shift between them, we assume that for each arm $i$, 
there exists a known upper bound $V_i \ge 0$ such that $|\muoff{i} - \mu_i^{\mathrm{(on)}}| \le V_i$. This assumption provides a priori control over the magnitude of the distribution shift and is information-theoretically necessary. Existing works establish fundamental lower bounds that no policy can uniformly outperform purely online algorithms without any prior knowledge \citep{zhang2019warm, CheungLyu2024, zhang2025contextual, qu2025hybrid} and such upper bound is widely adopted in previous works \citep{CheungLyu2024,zhang2025contextual,he2024learning,ahn2025online,yin2025multi}.

The performance of a policy $\pi$ is measured by its cumulative regret,
which quantifies the expected loss relative to always pulling the optimal arm:
\begin{equation}
  \label{eq:reg-def}
  \mathrm{Reg}_\pi(\Tmax)
  = \Tmax \mu_1^{\mathrm{(on)}}
  - \E\!\left[\sum_{t=1}^{\Tmax} R_{A(t)}(t)\right] = \sum_{i\neq 1}\Delta_i \E\left[ \sum_{t=1}^T \bOne{A(t)=i} \right] \,,
\end{equation}
where the expectation is taken over all sources of randomness, including the internal randomness of $\pi$, the offline data draw from $P^{\mathrm{(off)}}$, and the online rewards generated under $P^{\mathrm{(on)}}$. 

\noindent\textbf{Useful Notations. } For each arm $i\in[K]$, let $T_i(t)$ denote the number of times arm $i$ is pulled
during rounds $1,\dots,t-1$, and define $n_i(t) = T_i(t) + N_i$ as the total number of samples available for arm $i$, combining online and
offline data.
Let $\muoffhat{i}$ denote the sample mean of the offline dataset for arm $i$,
and let $\muonhat{i}{t}$ be the sample mean computed from the online rewards
observed up to round $t-1$.
We further define $\muhybhat{i}{t}$ as the hybrid sample mean that aggregates
both offline samples and online observations for arm $i$. 
\section{Algorithm}\label{sec: alg}

In the offline-to-online setting, it is natural to maintain two types of estimators: a purely online estimator constructed solely from online observations, and a hybrid estimator that aggregates offline data with online observations. Due to the distribution shift, the hybrid estimators may be biased and can mislead the learning process if used indiscriminately. The key algorithmic challenge is therefore how to balance the use of the hybrid estimator against the purely online estimator. In this section, we present our sample-mean anchored TS (Anchor-TS, Algorithm \ref{alg:hyb-ts}), an efficient and robust mechanism for combining online and hybrid estimators under the TS framework.

\begin{algorithm}[tbh!]
  \caption{Sample-Mean Anchored Thompson sampling (Anchor-TS)}
  \label{alg:hyb-ts}
  \begin{algorithmic}[1]
    \STATE \textbf{Input:} Arm set $[K]$; 
           offline sample mean $\hat\mu_i^{\mathrm{(off)}}$ and sample size $N_i$, 
           bias bound $V_i$,  $\forall i\in[K]$
    \STATE Initialization: $T_i(1) \gets 0$, $\hat\mu_i^{(\mathrm{on})}(1) \gets 0$, $\hat\sigma_{i,\mathrm{on}}^2(t) \gets 1$,  $\hat\mu_i^{\mathrm{(hyb)}}(1) \gets \hat\mu_i^{\mathrm{(off)}}$, $\hat\sigma_{i,\mathrm{hyb}}^2(1)
        \gets
        {1}/{(N_i+1)}$, $Z_{i,1}\gets V_i$, $\forall i\in[K]$
    \FOR{$t=1,\dots$}
      \FOR{each arm $i\in[K]$}
        \STATE Sample $\theta_i^{\mathrm{(on)}}(t)$ from $\mathcal{N}\left(\hat\mu_i^{(\mathrm{on})}(t),\; \hat\sigma_{i,\mathrm{on}}^2(t) \right)$  \label{alg:line:theta_on}
        \STATE Sample $\theta_i^{\mathrm{(hyb)}}(t)$ from  $\mathcal{N}\left(\hat\mu_i^{(\mathrm{hyb})}(t)+Z_{i,t},\; \hat\sigma_{i,\mathrm{hyb}}^2(t) \right)$      \label{alg:line:theta_hybrid}     
        \STATE $\hat\theta_i(t)
                \gets
                \mathrm{median}\bigl\{
                  \hat\mu_i^{\mathrm{(on)}}(t),\
                  \theta_i^{\mathrm{(on)}}(t),\
                  \theta_i^{\mathrm{(hyb)}}(t)
                \bigr\}$  \label{alg:line:median}
      \ENDFOR
      \STATE Select arm $A(t) \gets \arg\max_{i\in[K]} \hat\theta_i(t)$ and observe reward $R_{A(t)}(t)$\label{alg:line:select}
      \STATE \textbf{// Update online posterior of $A(t)$}
      \STATE $T_{A(t)}(t+1) \gets T_{A(t)}(t) + 1$  
      \STATE $
              \hat\mu_{A(t)}^{\mathrm{(on)}}(t+1)
              \gets
              \frac{{}T_{A(t)}(t)\cdot \hat\mu_{A(t)}^{\mathrm{(on)}}(t) + R_{A(t)}(t)}
                   {T_{A(t)}(t+1)+1}$,
      \quad $
        \hat\sigma_{A(t),\mathrm{on}}^2(t+1)
              \gets \frac{1}{T_{A(t)}(t+1)+1}$ \label{alg:eq:update:on}
      \STATE \textbf{// Update hybrid posterior of $A(t)$}
      \STATE $
              \hat\mu_{A(t)}^{\mathrm{(hyb)}}(t+1)
              \gets
              \frac{
                T_{A(t)}(t+1)\cdot\hat\mu_{A(t)}^{\mathrm{(on)}}(t+1)
                + N_{A(t)}\cdot\hat\mu_{A(t)}^{\mathrm{(off)}}
              }{T_{A(t)}(t+1)+N_{A(t)}+1}$ \label{alg:eq:update:hybrid:mean}
       \STATE $
    \hat\sigma_{A(t),\mathrm{hyb}}^2(t+1)
              \gets
              \frac{1}{N_{A(t)}(t+1)+T_{A(t)}+1}
              $,\quad $Z_{A(t),t+1} \gets \frac{N_{A(t)}V_{A(t)}}{T_{A(t)}(t+1)+N_{A(t)}}$ \label{alg:eq:update:hybrid:variance}
        \FOR{$i\neq A(t)$}
            \STATE Update
             $\hat\mu_i^{\mathrm{(on)}}(t+1)\gets\hat\mu_i^{\mathrm{(on)}}(t)$,
             $\hat\mu_i^{\mathrm{(hyb)}}(t+1)\gets\hat\mu_i^{\mathrm{(hyb)}}(t)$,
             $\hat\sigma_{i,\mathrm{on}}^2(t+1)\gets\hat\sigma_{i,\mathrm{on}}^2(t)$,
             $\hat\sigma_{i,\mathrm{hyb}}^2(t+1)\gets\hat\sigma_{i,\mathrm{hyb}}^2(t)$,\quad $Z_{i,t+1}\gets Z_{i,t}$
        \ENDFOR
    \ENDFOR
  \end{algorithmic}
\end{algorithm}

For each arm $i\in[K]$, the algorithm maintains an online posterior that is updated exclusively based on rewards observed during online interaction, which is the same as the traditional TS algorithm for the purely online setting \citep{agrawal2013ts}.
At time $t$, this posterior is characterized by the online sample mean $\hat{\mu}_i^{\mathrm{(on)}}(t)$ and an associated variance ${\sigma}_{i,\mathrm{on}}^{2}(t)$ (Line \ref{alg:eq:update:on}).
An online posterior sample ${\theta}_i^{\mathrm{(on)}}(t)$ is drawn from the corresponding Gaussian distribution (Line \ref{alg:line:theta_on}).
Because it relies solely on online data, this posterior remains unbiased with respect to the true arm mean, although it may exhibit high variance in the early stages of learning.

In parallel, Anchor-TS constructs a hybrid posterior that integrates offline empirical information with online observations. Specifically, the hybrid posterior mean $\hat{\mu}_i^{(\mathrm{hyb})}(t)$ is obtained by combining the offline sample mean $\hat{\mu}_i^{(\mathrm{off})}$ and the current online sample mean $\hat{\mu}_i^{(\mathrm{on})}(t)$, weighted according to their respective sample sizes (Line~\ref{alg:eq:update:hybrid:mean}). The corresponding variance $\hat{\sigma}^2_{i,\mathrm{hyb}}(t)$ reflects the increased effective sample size enabled by incorporating offline data (Line~\ref{alg:eq:update:hybrid:variance}). 
Notably, to encourage sufficient exploration of the optimal arm~$1$ and reduce regret, we apply a rightward shift of magnitude $Z_{i,t}$ to the hybrid posterior distribution of each arm. A detailed discussion of the intuition behind this design choice will be provided in the end of this section. 
A hybrid posterior sample $\theta_i^{(\mathrm{hyb})}(t)$ is then drawn from the resulting Gaussian distribution (Line \ref{alg:line:theta_hybrid}).

At each time step, Anchor-TS computes three indices for every arm: the online sample mean $\hat{\mu}_i^{(\mathrm{on})}(t)$, an online posterior sample $\theta_i^{(\mathrm{on})}(t)$, and a hybrid posterior sample $\theta_i^{(\mathrm{hyb})}(t)$.
The arm score is obtained by taking the \emph{median} of these three quantities (Line \ref{alg:line:median}). And Anchor-TS selects the arm with the highest score in each round (Line \ref{alg:line:select}).

\paragraph{Intuition behind the median aggregation of three indices. }Intuitively, the role of median aggregation is to adaptively select the index that is closest to the true mean rather than to favor large or small posterior realizations like UCB-type algorithms \citep{CheungLyu2024}. This distinction is intrinsic to TS: posterior samples in TS are neither guaranteed to be optimistic nor pessimistic. Naively selecting the minimum or maximum of multiple indices would either underestimating the optimal arm or overestimating suboptimal arms, leading to uncontrolled regret. 
The sample mean $\hat{\mu}^{\mathrm{(on)}}$, estimated from online observations, therefore serves as a natural stabilizing anchor as it concentrates rapidly around the true arm mean.
By taking the median of $\hat{\mu}^{\mathrm{(on)}}$, an online-only and a hybrid posterior sample, Anchor-TS selects the index that is most consistent with this anchor.

When the bias is small, the hybrid posterior, benefiting from a larger effective sample size, tends to concentrate more tightly around the true mean than the online-only posterior.
In this regime, the median naturally favors the hybrid posterior sample, allowing Anchor-TS to exploit offline data for faster learning.
Conversely, when the offline bias is large, the hybrid posterior may deviate from the true mean, while the online-only posterior remains centered around the unbiased online signal.
In this case, the median suppresses the biased hybrid sample and tends to select the online sample instead, yielding robustness against offline-induced distortion. 
Through this adaptive selection mechanism, Anchor-TS automatically interpolates between efficiency and robustness, leveraging offline data when it is reliable and reverting to online evidence when offline bias is substantial.

\paragraph{Intuition behind the right-hand shift on the hybrid posterior distribution. } Recall that for TS-type algorithms, the regret can still accumulate when the estimates of suboptimal arms are accurate, as long as the optimal arm is poorly estimated. Specific to the offline-to-online setting, if the offline data underestimate the optimal arm, i.e., $\mu_1^{\mathrm{(off)}}<\mu_1^{\mathrm{(on)}}$, and the online sample mean of arm $1$ happens to be pessimistic, arm $1$ may receive too little posterior probability due to the three-index voting mechanism. 
In this case, it is difficult to correct the estimation error of arm $1$ through additional online observations. As a result, the persistent underestimation of arm $1$ amplifies the number of selections of suboptimal arms, leading to increased regret despite their accurate estimation. 

Our right-shift operation counteracts this effect by preventing the hybrid posterior distribution from underestimating the online mean of arm $1$. This guarantees adequate exploration of the optimal arm and prevents regret from being dominated by prolonged under-exploration of the optimal arm. 

Since the identity of the optimal arm is unknown, the right-hand shift is applied uniformly across arms. Importantly, this shift does not compromise the benefit of offline data for suboptimal arms: although overestimation by the hybrid posterior may initially increase their selection frequency, the influence of offline information quickly diminishes as online observations accumulate, and the estimate becomes dominated by online samples through median aggregation.


\section{Theoretical Results and Discussions}
\label{sec: theoretical}

The following theorem summarizes the regret bound of our algorithm:
\begin{theorem}\label{thm:main}
The cumulative regret of Algorithm \ref{alg:hyb-ts} can be bounded by
\begin{align*}
    \mathrm{Reg}(T)\le O \left( \sum_{i\neq 1} \Delta_i \left( \left( \frac{C_1\log T}{\Delta_i^2}
  - N_i\left(1 - \frac{3 \omega_i}{\Delta_i}\right)_+\right)_+   +\left(\frac{C_2\log T}{\Delta_i^2}
  - N_1 \right)_+ 
  + \frac{C_3}{\Delta_i^2} \right) \right)\,,
\end{align*}
where $C_1, C_2, C_3$ are constants, $(\cdot)_+$ represents $\max\{\cdot, 0\}$, $\omega_i := V_i+\mu_i^{\mathrm{(off)}}-\mu_i^{\mathrm{(on)}}$ represents the effective discrepancy by adding $V_i$ in the hybrid posterior distribution. 
\end{theorem}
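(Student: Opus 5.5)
We follow the Agrawal--Goyal template for Gaussian Thompson sampling \citep{agrawal2013ts}, adapted to the median index. Fix a suboptimal arm $i$ and thresholds $x_i=\muon{i}+\Delta_i/3$ and $y_i=\muon{1}-\Delta_i/3$. Let $E_i^{\mu}(t)$ be the event $\muonhat{i}{t}\le x_i$ and $E_i^{\theta}(t)$ the event $\hat\theta_i(t)\le y_i$. The expected number of pulls of $i$ splits into three terms:
\begin{align*}
\E[T_i(T)]\le \sum_t \Pr\bigl(A(t)=i,\overline{E_i^{\mu}(t)}\bigr)+\sum_t \Pr\bigl(A(t)=i,E_i^{\mu}(t),\overline{E_i^{\theta}(t)}\bigr)+\sum_t \Pr\bigl(A(t)=i,E_i^{\theta}(t)\bigr).
\end{align*}
The first term is $O(1/\Delta_i^2)$ by Hoeffding applied to the online mean, since each pull increments $T_i$.

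For the second term, the key property is that the median of the three quantities lies between $\muonhat{i}{t}$ and $\max\{\theta_i^{\mathrm{(on)}}(t),\theta_i^{\mathrm{(hyb)}}(t)\}$. Moreover, under $E_i^{\mu}(t)$ the median exceeds $y_i$ only if \emph{both} samples exceed $y_i$. This gives
\[
\Pr(\hat\theta_i(t)>y_i\mid \mathcal F_{t-1})\le \min\bigl\{\Pr(\theta_i^{\mathrm{(on)}}>y_i),\Pr(\theta_i^{\mathrm{(hyb)}}>y_i)\bigr\}.
\]
The online sample yields the usual bound $L_i=O(\log T/\Delta_i^2)$ pulls. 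For the hybrid sample, the mean is $\muhybhat{i}{t}+Z_{i,t}$. Its expectation is a weighted average: $\muon{i}$ with weight $T_i$ and $\muoff{i}+V_i$ with weight $N_i$, so the excess over $\muon{i}$ is $\omega_i N_i/(T_i+N_i)$. The variance is about $1/(T_i+N_i)$. A Gaussian tail bound combined with Hoeffding on the hybrid mean shows that this probability is $\le 1/T$ once
\[
(T_i+N_i)\bigl(\Delta_i/3-\omega_i N_i/(T_i+N_i)\bigr)^2\gtrsim \log T .
\]
Solving for $T_i$ gives roughly $T_i\ge C_1\log T/\Delta_i^2-N_i(1-3\omega_i/\Delta_i)_+$; the condition is vacuous in $N_i$ when $\omega_i\ge\Delta_i/3$. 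Taking the minimum of the two thresholds produces the first bracket of Theorem~\ref{thm:main}.

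The third term is the optimal-arm term and is the main obstacle. Following \citet{agrawal2013ts}, we bound it by $\sum_k \E\bigl[1/p_{1,\tau_k}-1\bigr]$, where $p_{1,t}=\Pr(\hat\theta_1(t)>y_i\mid\mathcal F_{t-1})$ and $\tau_k$ is the $k$-th pull of arm~1. The right shift ensures $\E[\muhybhat{1}{t}]+Z_{1,t}\ge \muon{1}$, so $\theta_1^{\mathrm{(hyb)}}$ is centered at least at $\muon{1}$ up to Hoeffding error, with variance $1/(k+N_1)$. The median exceeds $y_1:=y_i$ whenever two of the three quantities do. We would split into cases. If $\muonhat{1}{t}>y_1$, which holds except with probability $e^{-ck\Delta_i^2}$, it suffices that one of the samples exceeds $y_1$, and the hybrid sample does so with probability $\ge 1-e^{-c(k+N_1)\Delta_i^2}$. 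Otherwise we need both samples to exceed $y_1$; we handle this as in the anti-concentration argument of \citet{agrawal2013ts}, giving $\E[1/p]$ bounded by a constant when $k$ is small. The delicate step is this second case, where the online mean is pessimistic and the online posterior carries the usual $O(1)$ anti-concentration cost. Summing over $k$ yields a total of $O\bigl((C_2\log T/\Delta_i^2-N_1)_+ + C_3/\Delta_i^2\bigr)$: for $k+N_1\gtrsim\log T/\Delta_i^2$ the hybrid sample makes $1/p_{1,\tau_k}-1=O(1/T)$, and a geometric sum controls the contributions $e^{-ck\Delta_i^2}$ times the constant. Multiplying the three terms by $\Delta_i$ and summing over $i\ne1$ gives the theorem.
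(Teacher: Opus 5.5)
Your decomposition, your treatment of the suboptimal-arm term, and your handling of the optimal-arm case $\hat\mu^{(\mathrm{on})}_1>y_i$ all match the paper. In the suboptimal-arm term, under $\hat\mu^{(\mathrm{on})}_i(t)\le x_i$ the median exceeds $y_i$ only if both samples do, so you may take the better of the online and hybrid counting arguments. You also reduce the optimal-arm term to $\sum_k\mathbb{E}[1/p-1]$ as the paper does. In the case $\hat\mu^{(\mathrm{on})}_1>y_i$, one sample suffices, and the right-shifted hybrid posterior gives the $(C_2\log T/\Delta_i^2-N_1)_+$ term.

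\textbf{The gap.} It lies in the remaining case $\hat\mu^{(\mathrm{on})}_1(\tau_k+1)\le y_i$. You close it with ``a geometric sum controls the contributions $e^{-ck\Delta_i^2}$ times the constant''. That amounts to $\mathbb{E}\bigl[(1/p-1)\bOne{\hat\mu^{(\mathrm{on})}_1\le y_i}\bigr]\lesssim \mathbb{E}[1/p]\cdot\Pr(\hat\mu^{(\mathrm{on})}_1\le y_i)$, and this factorization is not justified. On this event $p=p^{(\mathrm{on})}p^{(\mathrm{hyb})}$, where $p^{(\mathrm{on})}=\Pr\bigl(\mathcal N(\hat\mu^{(\mathrm{on})}_1,1/(k+1))>y_i\mid\mathcal F_{\tau_k}\bigr)$. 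This quantity is small precisely when the indicator is on, roughly like $e^{-(k+1)(y_i-\hat\mu^{(\mathrm{on})}_1)^2/2}$. So $1/p$ and the indicator are positively correlated. The Agrawal--Goyal anti-concentration argument only gives a first-moment bound on $1/p^{(\mathrm{on})}$ averaged over all histories. It says nothing about $\mathbb{E}[1/p\mid\hat\mu^{(\mathrm{on})}_1\le y_i]$, and that conditional regime is exactly where the large values of $1/p$ arise. Even the unconditional bound $\mathbb{E}[1/(p^{(\mathrm{on})}p^{(\mathrm{hyb})})]=O(1)$ does not follow from the two single-sample bounds, because both factors depend on the same online rewards of arm $1$.

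\textbf{How the paper handles it.} The paper flags this coupling as the main difficulty and decouples with H\"older's inequality. It bounds $\sup_k\mathbb{E}[(1/p^\wedge_k)^\alpha]\le M_\alpha$ for some $\alpha\in(1,3/2)$, where $p^\wedge_k$ is the conditional probability that both samples exceed $y_i$. It then uses
$\mathbb{E}\bigl[(1/p^\wedge_k)\bOne{\hat\mu^{(\mathrm{on})}_1\le y_i}\bigr]\le M_\alpha^{1/\alpha}\Pr(\hat\mu^{(\mathrm{on})}_1(\tau_k+1)\le y_i)^{1-1/\alpha}$.
Since that probability decays like $e^{-ck\Delta_i^2}$, the sum over $k$ is $O(1/\Delta_i^2)$.

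\textbf{What you need to add.} You need a step of this kind; two options work.
\begin{enumerate}
\item A higher-moment bound on $1/p^\wedge$ plus H\"older. Here the moment bound must respect the product structure $1/p^\wedge=(1/p^{(\mathrm{on})})(1/p^{(\mathrm{hyb})})$, for example via Cauchy--Schwarz and moments of order above $2$ for each single-sample hitting time.
\item A direct integration of $1/p^\wedge$ against the Hoeffding tails of arm $1$'s online and offline sample means. This can be made to work because the posterior variances $1/(k+1)$ and $1/(k+N_1+1)$ dominate the sub-Gaussian proxies of $[0,1]$-valued means, so the growth of $1/p^\wedge$ is beaten by the tail decay.
\end{enumerate}
Without such a step, the sum over $k$ in this case is not controlled, and this is the term the constant $C_3/\Delta_i^2$ in the theorem must absorb.
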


Due to the space limit, the complete proof of Theorem \ref{thm:main} is deferred to Appendix \ref{sec:regret}. In the following, we first provide some discussions and then show the proof sketch. 

\paragraph{Intuition of the regret upper bound.}
This upper bound consists of three terms. The first term arises from inaccurate estimation of suboptimal arms, the second from inaccurate estimation of the optimal arm, and the third is a constant term required for convergence of the online sample mean.

This upper bound can be interpreted as the regret in a purely online setting minus the benefit provided by the offline data. When no offline data are available, the regret reduces to the standard purely online regret \citep{agrawal2013ts}. When the distribution shift $\mu_i^{\mathrm{(on)}}-\mu_i^{\mathrm{(off)}}$ is zero, the offline samples for both the sub-optimal arms and the optimal arm can be viewed as offsetting a portion of the regret. In particular, if the sub-optimal arms have sufficiently many offline samples, their means can be accurately estimated from the outset, rendering the corresponding regret term constant. Similarly, if the optimal arm has sufficiently many offline samples, the regret arising from inaccurate estimation of the optimal arm is likewise reduced to a constant.

Regarding the bias, for a sub-optimal arm $i$, if the offline mean is smaller than the online mean and \(V\) is a tight bound, i.e., $V = \mu_i^{\mathrm{(on)}} - \mu_i^{\mathrm{(off)}} > 0$, 
then \(\omega_i = 0\) and the offline improvement scales linearly with \(N_i\). However, if the offline mean \(\mu_i^{\mathrm{(off)}}\) is larger, the hybrid samples may overestimate the true online mean \(\mu_i^{\mathrm{(on)}}\). In this case, the contribution of the offline data is attenuated by a discount factor \((1 - 3\omega_i / \Delta_i)\), which decreases as \(\omega_i\) increases, resulting in diminishing effective information from the offline samples. When \(3\omega_i > \Delta_i\), the hybrid samples can mislead the identification of the optimal arm, and as the result the offline benefit vanishes and the regret reduces to that of the purely online setting. 
In contrast, for the optimal arm (arm~1), adding the correction term \(V\) to the hybrid samples always leads to an overestimation of the true mean \(\mu_1^{\mathrm{(on)}}\). Such overestimation consistently favors the identification of the optimal arm. Consequently, the contribution of the offline data for the optimal arm is not subject to any discount factor.


\paragraph{Comparison with \citet{CheungLyu2024}. }
\citet{CheungLyu2024} study the same offline-to-online setting as ours, but focus on a UCB-based approach. They derive an upper bound on the sub-optimal $i$'s selection time $O({C \log T}/{\Delta^2_i}-N_i \cdot \max\left\{1-{\omega_i}/{\Delta_i}, 0  \right\}^2 
)_+ $. 
The first term in our Theorem~\ref{thm:main}, which captures the regret due to inaccurate estimation of suboptimal arms, is of the same order as this bound. The main difference lies in how the size of the offline dataset $N_i$ is discounted. Specifically, their bound uses the discount factor $(1-\omega_i/\Delta_i)^2$ whereas ours involves the factor $(1-3\omega_i/\Delta_i)$. This constant coefficient discrepancy primarily stems from differences in the analysis techniques. In TS, posterior samples may be either optimistic or pessimistic, which requires partitioning the gap between $\mu_1$ and $\mu_i$ into three regions, leading to the $\Delta_i/3$ factor in our formula. 
Importantly, our analysis avoids an additional squaring of the discount factor.

Compared with this result, our Theorem~\ref{thm:main} further reveals an additional improvement arising from the offline data $N_1$ on the optimal arm, as reflected in the second term of our bound. This improvement stems from the property that TS incurs regret due to inaccurate estimation of both suboptimal and optimal arms. When offline data are available for arm $1$, the estimation of the optimal arm becomes more accurate, thereby reducing this source of regret. 
Such a scenario is common in practical applications, since offline data are typically collected using expert or near-optimal policies, and therefore observations of the optimal arm are often abundant. Our Anchor-TS algorithm is able to exploit this advantage, whereas UCB-based methods do not benefit from offline data on the optimal arm in the same way.

Another aspect worth discussion is the right-shift operation applied to the hybrid samples. Although both the MINUCB algorithm and our approach incorporate a bias-correction term into the hybrid index, the underlying motivations are fundamentally different. In MINUCB, the correction term is introduced to preserve the optimism of the hybrid UCB index, ensuring that it upper-bounds the true reward.
In contrast, TS does not intrinsically rely on optimism. The primary purpose of introducing the right-shift in our method is to encourage exploration of arm $1$, thereby mitigating the regret caused by underestimation of this arm. While omitting this correction term would in fact improve the first regret term from over-estimation of sub-optimal arms, it would significantly complicate the analysis of $\hat{\mu}_1^{\mathrm{(on)}}$. In particular, inaccuracies in estimating $\hat{\mu}_1^{\mathrm{(on)}}$ would introduce regret terms that grow exponentially with $N_1$, since underestimation of the hybrid sample $\theta_1^{\mathrm{(hyb)}}$ would make arm~1 increasingly unlikely to be selected, leaving little opportunity for correction.


\paragraph{Reduction to the pure online setting and the role of median aggregation. } In the pure online setting, where no offline data are available, the hybrid posterior distribution coincides with the online posterior distribution. In this case, Anchor-TS can be interpreted as sampling two online indices, $\theta_{i,1}^{\mathrm{(on)}}$, $\theta_{i,2}^{\mathrm{(on)}}$, and selecting the median among these two samples and the online sample mean $\hat{\mu}_i^{\mathrm{(on)}}$. We analyze this setting separately and show that this modification preserves the classical instance-dependent regret bound, while improving the leading regret order by a factor of $1/2$. We further provide empirical results that illustrate this advantage. The corresponding theoretical analysis and experimental results are presented in Appendix~\ref{appendix: discussion}. 
This observation may, to some extent, reflect a similar idea in \citet{jin2023}, who show that TS with less exploration can perform better. But we leverage different ideas to reduce exploration: they divert a fixed probability mass $\epsilon$ from vanilla TS to selecting the sample mean, whereas our method incorporates the sample mean in a more adaptive manner through median aggregation.



\section{Proof Sketch}

In this section, we present a proof sketch for Theorem~\ref{thm:main} and highlight the key ideas of our analysis. Our technical contributions operate at two levels: at a high level, we tightly couple the median-based algorithmic design with the regret decomposition, enabling the regret bound to adapt to the more informative of the online and hybrid posterior distributions; at a more technical level, we develop a refined probabilistic analysis that decouples multiple random indices appearing within probability and expectation operators, allowing their individual regret contributions to be explicitly controlled.

Standard TS analyses consider regret contributions from a single posterior distribution. In contrast, our setting involves two posterior sources, where a naive extension yields additive regret bounds and fails to exploit the more informative one. To address this, we use the online sample mean as an anchor: its estimation errors contribute only a constant-order regret of $O(1/\Delta_i)$, allowing us to condition on accurate estimation and isolate the effects of the two posterior distributions.




For convenience, define $x_i := \mu^{\mathrm{(on)}}_i + {\Delta_i}/{3}$ and $y_i := \mu^{\mathrm{(on)}}_i + {2\Delta_i}/{3}$ as two thresholds, $E_i^{\mu(\mathrm{on})}(t):= \{ \hat\mu^{\mathrm{(on)}}_i(t) \le x_i \}$ as the good event that arm $i$'s online sample mean is accurate. 
Then, 
\begin{align*}
    \E \left[\sum_{t=1}^T\bOne{ A(t)=i }\right]  = & \sum_{t=1}^T \Pr(A(t)=i,\; \neg E_i^{\mu(\mathrm{on})}(t)) + \sum_{t=1}^T \Pr(A(t)=i,\; E_i^{\mu(\mathrm{on})}(t)) \,.
\end{align*}
The first term can be upper bounded by $O(1/\Delta_i^2)$ without relying on $T$. The second term representing regret from selecting $i$ when its online sample mean is accurate can be decomposed as 
\begin{align}
    \sum_{t=1}^T \Pr(A(t)=i, E_i^{\mu(\mathrm{on})}(t), \hat{\theta}_i(t)>y_i )+ \sum_{t=1}^T \Pr( A(t)=i, E_i^{\mu(\mathrm{on})}(t), \hat{\theta}_i(t)\le y_i )\,. \label{eq:main:sketch:two-terms}
\end{align}
The first term in \eqref{eq:main:sketch:two-terms} corresponds to the case where arm $i$'s median index is over-estimated. 
Conditional on the good event $\{ \hat{\mu}_i^{\mathrm{(on)}}(t) \le x_i\}$, the event $\{\hat{\theta}_i(t)>y_i \}$ implies that $\{{\theta}^{\mathrm{(on)}}_i(t)>y_i \} \cap \{{\theta}^{\mathrm{(hyb)}}_i(t)>y_i \} $. The corresponding regret event is therefore the intersection of two unfavorable events, whose probability is strictly smaller than that induced by either posterior alone,  
\begin{align*}
   \sum_{t=1}^T \min \left\{  
     \Pr(A(t)=i, E_i^{\mu(\mathrm{on})}(t), {\theta}_i^{\mathrm{(on)}}(t)>y_i   ), 
     \Pr(A(t)=i, E_i^{\mu(\mathrm{on})}(t), {\theta}_i^{\mathrm{(hyb)}}(t)>y_i   )
    \right\} \,.
\end{align*}
This yields the first term in Theorem~\ref{thm:main} by taking the minimum of the corresponding regret contributions from the online and hybrid posterior distributions.

The second term in \eqref{eq:main:sketch:two-terms} can be transformed to the case where arm $1$'s median index is under-estimated. 
Dealing with this term is commonly recognized as a key challenge in TS analyses \citep{agrawal2013ts,jin2021mots,jin2023}, which is typically upper bounded by $\E[1/p-1]$ where $p$ is the conditional probability (given the history) that $\hat{\theta}_1>y_i$. In vanilla TS, $p$ is the tail probability of a single posterior sample $\hat{\theta}_1:=\theta_1^{\mathrm{(on)}}$, so one can directly convert $\E[1/p]$ into the expectation of a geometric hitting time related to the behavior of the index. In our setting, however, the arm index is the median of three random quantities with distinct concentration behaviors, which prevents a direct reduction to a one-dimensional geometric hitting-time argument as in vanilla TS.

To handle this term, we further condition on the behavior of $\hat{\mu}_1^{\mathrm{(on)}}$. Under the good event $\{\hat{\mu}_1^{\mathrm{(on)}}(t) > y_i\}$, 
it suffices that either the online or the hybrid posterior sample of arm $1$ exceeds $y_i$ for the optimal arm to be selected. This yields a union of favorable events, implying that
$$p \ge \max\{p^{\mathrm{(on)}}, p^{\mathrm{(hyb)}}\}, \text{ thus  } 
{1}/{p}-1 \le \min \{{1}/{p^{\mathrm{(on)}}}-1,\; {1}/{p^{\mathrm{(hyb)}}}-1 \}\,,$$
where $p^{\mathrm{(on)}}$ and $p^{\mathrm{(hyb)}}$ represent the conditional probability that $\{{\theta}_1^{\mathrm{(on)}}(t)>y_i \}$ and $\{{\theta}_1^{\mathrm{(hyb)}}(t)>y_i \}$, respectively. This leads to the second term in Theorem~\ref{thm:main}, where the right-hand shift of the hybrid posterior distribution ensures that $p^{\mathrm{(hyb)}}>p^{\mathrm{(on)}}$ when offline data arm $1$ are available.



Another key difficulty arises when $\hat{\mu}_1^{\mathrm{(on)}}(t)<y_i$. In this regime, the median index can no longer exploit the more favorable of the two posterior samples, and the success probability $p$ reduces to an intersection event involving both samples. This joint event can significantly shrink $p$ and amplify regret. Moreover, since $\hat{\mu}_1^{\mathrm{(on)}}(t)$ and $\hat{\theta}_1(t)$ are statistically coupled, their contributions cannot be bounded separately and combined multiplicatively.

To address this challenge, we first apply Hölder’s inequality to decompose the regret term into two components that can be handled separately. The first component involves the success probability $1/p$ corresponding to a joint hitting event of two posterior samples. Here, the right-hand shift of the hybrid posterior distribution plays a crucial role by preventing this term from becoming excessively large, ensuring that it remains bounded by a constant. The second component captures the probability of inaccurate estimation of the online sample mean, which decreases exponentially as the selection of arm $1$. As a result, the combined contribution of these two components can be controlled by an $O(1/\Delta^2_i)$ upper bound.

\section{Experiments} \label{sec: experiments}
In this section, we compare our Anchor-TS with three classes of baselines: online-only methods including standard TS~\citep{agrawal2013ts} and standard UCB~\citep{Auer2002}; naive offline-to-online methods including hybrid TS and hybrid UCB \citep{shivaswamy2012multi} which use offline data for initialization while treating it as unbiased; and bias-aware offline-to-online method MINUCB \citep{CheungLyu2024}. For each experiment, we report cumulative regret over $10k$ rounds, averaged over $50$ runs with error bars showing the standard error. 
We consider a basic setting with $K=10$ arms, where the optimal arm has online mean reward $0.8$ and all suboptimal arms have mean $0.5$, yielding a gap of $0.3$. Rewards are drawn from a Gaussian distribution with unit variance.
The total number of offline samples is $2k$. To investigate the effect of offline data coverage, we consider three coverage patterns: uniform coverage across arms, coverage concentrated on the optimal arm $1$ (with $80\%$ of samples), and coverage concentrated on a suboptimal arm $2$ (with $80\%$ of samples). We further explore algorithm performance by varying parameters of this basic setting.

\paragraph{Unbiased offline data.}
We first consider a simple setting with zero bias under different sub-optimality gaps $\Delta\in \{0.3, 0.1\}$. 
The results are reported in Figure \ref{fig:fig1}.
There are two common observations: methods that leverage offline data consistently outperform purely online methods, and TS-type algorithms typically outperform their UCB-type counterparts.

In the easiest setting with uniformly distributed and sufficiently abundant offline data (Figure~\ref{fig:fig1} left(a)), both hybrid UCB and hybrid TS achieve zero regret. In contrast, Anchor-TS incurs a small constant regret, as it deliberately requires a constant number of online samples to form a reliable anchor. 
In other settings, Anchor-TS consistently achieves the lowest regret and outperforms hybrid TS, despite both methods leveraging the same offline information. This gap comes from different exploration control: hybrid TS relies on a single high-variance posterior sample and may tend to over-explore, whereas Anchor-TS reduces exploration through sample-mean–based anchoring. A similar phenomenon was also observed in \citet{jin2023}, where incorporating the sample mean into the decision process leads to improved performance. 

Compared with UCB-based algorithms, Anchor-TS consistently achieves substantial gains, particularly when offline data are concentrated on the optimal arm. This behavior aligns with our theoretical analysis, which predicts that the benefit of Anchor-TS scales with the amount of offline data available for the optimal arm, while UCB-based methods do not benefit in the same way. This advantage becomes even more pronounced in harder settings with a smaller gap where hybrid UCB and MINUCB even underperform pure online UCB (Figure \ref{fig:fig1} right (b)). This is because the UCB of the optimal arm is excessively reduced when incorporating offline data, forcing other arms to be pulled more often before their sub-optimality can be identified.

\begin{figure}[t]
\centering
\begin{minipage}[t]{0.496\textwidth}
  \centering
  \includegraphics[width=\linewidth]{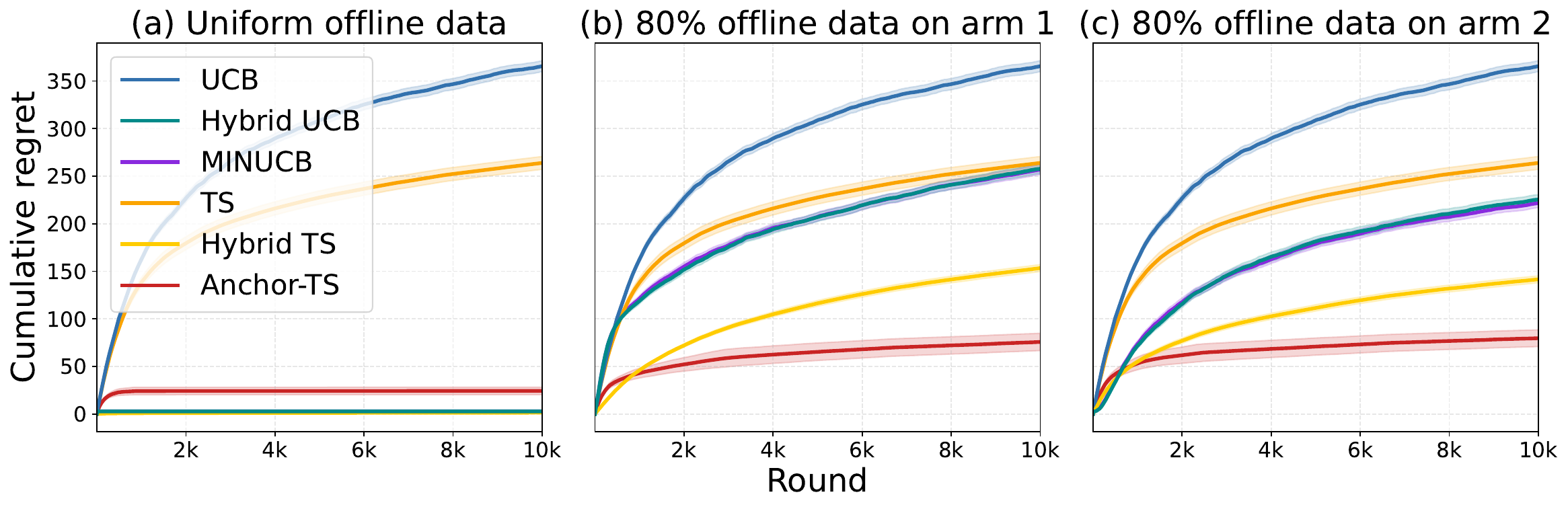}
\end{minipage}%
\hspace{0.3em}%
\begin{minipage}[t]{0.496\textwidth}
  \centering
  \includegraphics[width=\linewidth]{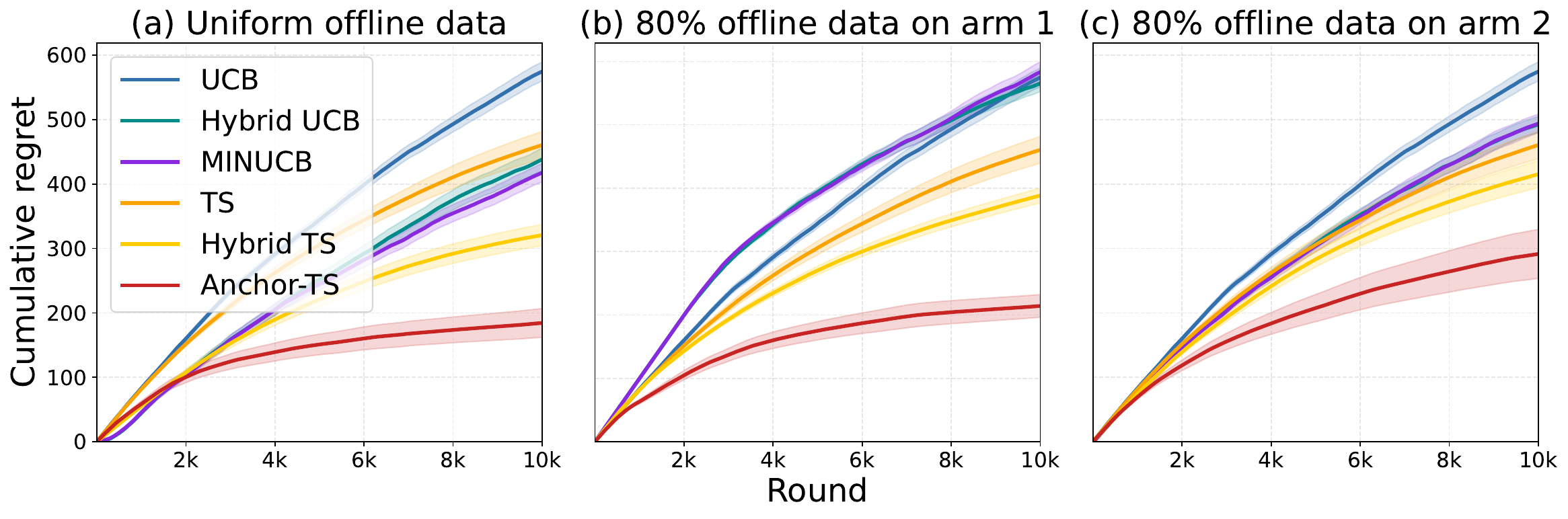}
\end{minipage}
\caption{Cumulative regret in the unbiased setting under varying offline coverage regimes and suboptimality gaps (left: $\Delta = 0.3$; right: $\Delta = 0.1$). }
\label{fig:fig1}
\end{figure}

\paragraph{Biased offline data.}We then evaluate the performance of algorithms under distribution shift between offline and online environments. We include both pure online methods and bias-aware approaches as baselines. To ensure that the offline data is sufficiently misleading, we consider a base setting in which $\mu_1^{\mathrm{(off)}} = 0.5$ and $\mu_i^{\mathrm{(off)}} = 0.6$ for $i \neq 1$, so that the optimal arm is underestimated in the offline data while the suboptimal arms are overestimated. Figure~\ref{fig:fig2} summarizes the algorithms performances under biased offline data by varying key problem parameters, including the total offline sample size, the real bias level, the hyper-parameter $V$,  and the arm set size. Unless otherwise specified, the hyperparameter $V$ is  set to the true bias magnitude for each arm. Across all settings, Anchor-TS consistently achieves the lowest regret, demonstrating robustness to offline bias and markedly outperforming UCB-based methods in its ability to effectively utilize offline information.

\captionsetup[figure]{skip=2pt}
\begin{figure*}[t]
\centering

\begin{minipage}[t]{0.491\textwidth}
  \centering
  \hspace*{-8pt}
  \includegraphics[width=\linewidth]{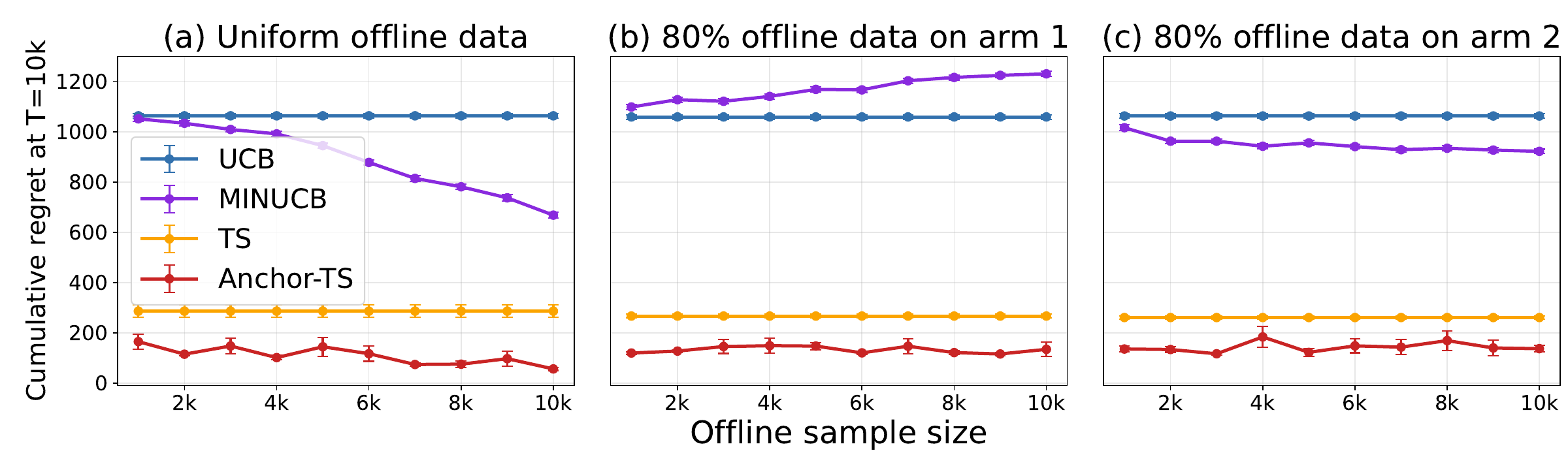}
\end{minipage}\hspace{0.01\textwidth}%
\begin{minipage}[t]{0.493\textwidth}
  \centering
  \includegraphics[width=\linewidth]{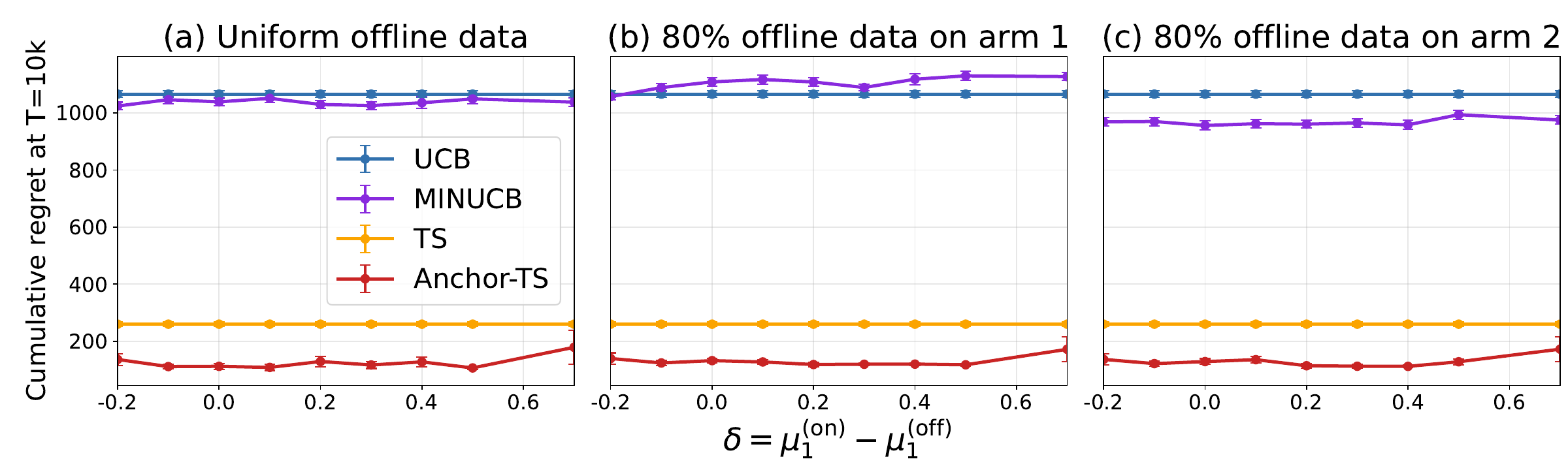}
\end{minipage}

\vspace{2pt}

\begin{minipage}[t]{0.493\textwidth}
  \centering
  \hspace*{-8pt}
  \includegraphics[width=\linewidth]{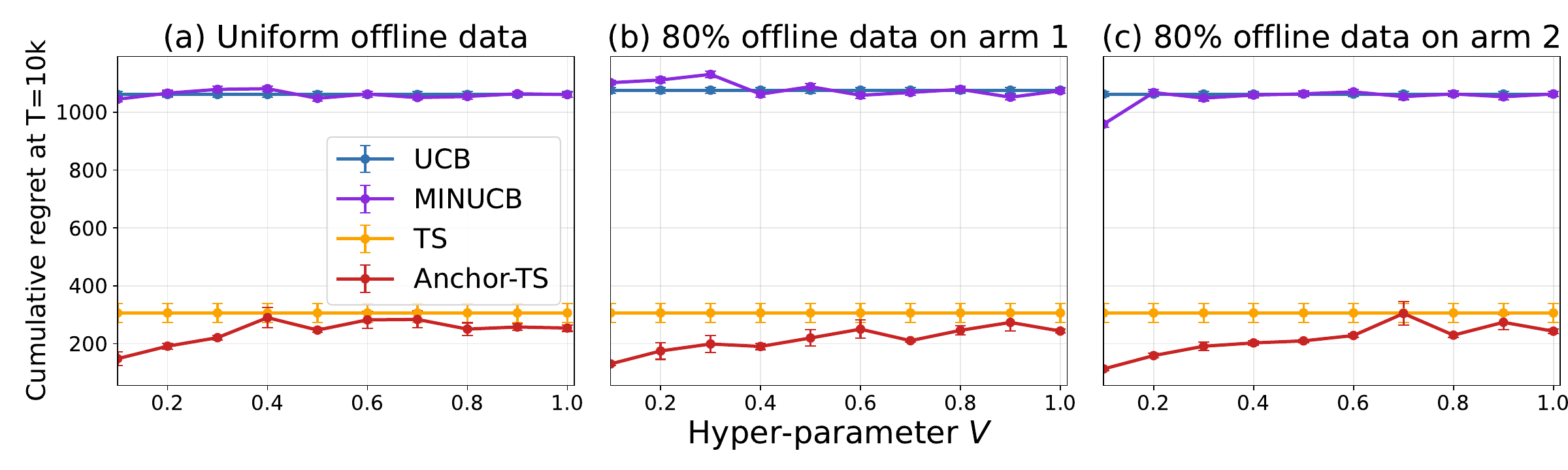}
\end{minipage}\hspace{0.01\textwidth}%
\begin{minipage}[t]{0.493\textwidth}
  \centering
  \includegraphics[width=\linewidth]{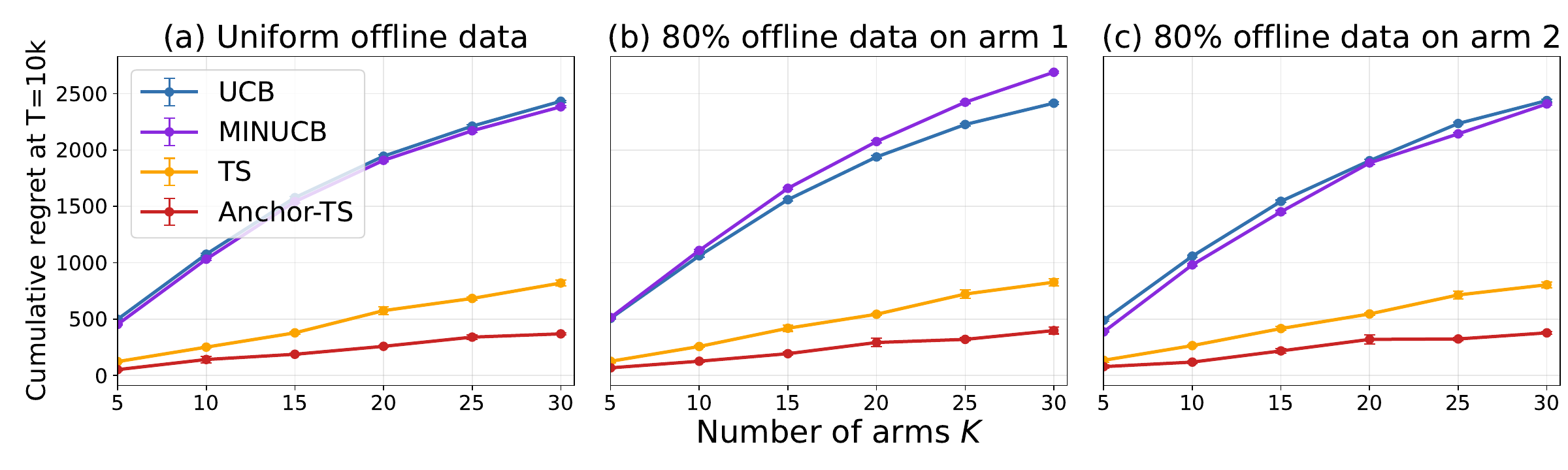}
\end{minipage}

\caption{Cumulative regret under varying offline coverage regimes and problem parameters
(top-left: total offline sample size; top-right: bias level $\delta$; bottom-left: hyperparameter $V$; bottom-right: number of arms $K$).}
\label{fig:fig2}
\end{figure*}

The top-left panel shows the algorithms performance under different offline sample sizes $\sum_i N_i$. Increasing offline data generally improves bias-aware methods, especially under uniform coverage.

The top-right panel studies the effect of the bias level $\delta$. 
For ease of implementation, we fix the offline mean of sub-optimal arms and vary $\mu_1^{\mathrm{(off)}}$ by setting different choices of $\delta := \mu_1^{\mathrm{(on)}} - \mu_1^{\mathrm{(off)}}$. Larger $\delta$ corresponds to stronger underestimation of the optimal arm in the offline data. Nevertheless, Anchor-TS maintains a stable low-regret profile and consistently outperforms baselines.

The bottom-left panel examines the impact of hyper-parameter $V$. For MINUCB and Anchor-TS, we set $V$ to the maximum of the true bias magnitude and the tested value. When $V$ is small, tighter prior control allows Anchor-TS to exploit offline data more aggressively. As $V$ increases, its performance degrades but never falls below pure online TS. Even when offline data are uninformative with large $V$, Anchor-TS retains a slight advantage due to sample-mean based exploration. 

The bottom-right panel evaluates scalability with respect to the number of arms $K$. While regret increases with $K$ for all methods, UCB-based algorithms grow much faster, whereas Anchor-TS scales more favorably by suppressing unnecessary exploration.

\section{Conclusion}
In this work, we study the TS-type algorithm for offline-to-online learning. Such algorithms face a fundamental challenge in assessing the reliability of the hybrid posterior distribution through comparison with the online posterior, stemming from the lack of inherent optimism in posterior samples. We address this challenge by introducing a novel median-based aggregation mechanism that combines the online sample mean, an online posterior sample, and a hybrid posterior sample, enabling adaptive exploitation of informative offline data. Our theoretical results demonstrate that our algorithm can effectively cope with bias in offline data, while achieving strictly better regret improvement when the offline data are sufficiently informative. Compared with UCB-type algorithms, our TS-based approach can further leverage abundant offline data on the optimal arm to obtain larger performance gains. Moreover, even in the pure online setting, the proposed median-index mechanism yields provable improvement in the leading constant of the regret bound compared with pure online TS. Extensive empirical evaluations further validate our theoretical findings, demonstrating that our algorithm consistently outperforms state-of-the-art baselines across various settings of data coverage and bias magnitudes. 

Several promising directions merit further investigation. First, it would be valuable to establish gap-independent regret bounds that explicitly quantify the benefits of offline data. Second, extending the proposed anchor-based framework to contextual TS is an important direction for addressing high-dimensional, real-world decision-making problems.



\bibliography{Reference}
\bibliographystyle{named}

\appendix

\newpage

\section{Proof of Theorem \ref{thm:main}}
\label{sec:regret}

We first introduce some useful definitions that will be used in the proof.

For each sub-optimal arm $i\neq 1$, we define two thresholds: $x_i := \mu^{\mathrm{(on)}}_i + {\Delta_i}/{3}$ and $y_i := \mu^{\mathrm{(on)}}_i + {2\Delta_i}/{3}
       = \mu^{\mathrm{(on)}}_1 - {\Delta_i}/{3}$. It holds that $x_i<y_i$ and $y_i-x_i = \Delta_i/3$.

 Define the confidence radius $L_t := 2\log(2t/\delta_t)$ with uncertainty level $\delta_t= 1/ (t^2 \Delta_i^2)$. 

Let $\mathcal{F}_t:= \cup_{i=1}^K S_i \cup\{(A(\tau), R_{A(\tau)}(\tau)):1\le \tau \le t \}$ to represent the history comprising the offline dataset and the sequence of online observations collected up to round $t$. 

For the optimal arm $1$ and arm $i$, define $p_{i,t}
:=\Pr\bigl(\hat\theta_1(t)>y_i\mid\mathcal F_{t-1}\bigr)$ to represent the conditional probability that the median-index is larger than $y_i$. Similarly, define $p^{(\mathrm{on})}_{i,t}
:=\Pr\bigl(\theta^{(\mathrm{on})}_1(t)>y_i\mid\mathcal F_{t-1}\bigr)$ and $p^{(\mathrm{hyb})}_{i,t}
:=\Pr\bigl(\theta^{(\mathrm{hyb})}_1(t)>y_i\mid\mathcal F_{t-1}\bigr)$. 

Further define the following good events representing the estimators are not far from their centers. 

\begin{definition}[Good events $E_i^{\mu(\mathrm{on})}(t), E_i^{\mu(\mathrm{hyb})}(t), E_i^{\theta(\mathrm{hyb})}(t), \mathcal{G}_i(t), E_1^{\mu(\mathrm{on})}(t)$]\label{def: good events}
 For $i\ne 1$, define the following good event 
 \begin{align}
  E_i^{\mu(\mathrm{on})}(t)
  &:= \Bigl\{
       \hat\mu^{\mathrm{(on)}}_i(t) \le x_i
     \Bigr\}\,, \nonumber\\
  E_i^{\mu(\mathrm{hyb})}(t)
  &:= \Biggl\{
       \bigl|\hat\mu^{\mathrm{(hyb)}}_i(t)
             - \mu^{\mathrm{(on)}}_i\bigr|
       \le
       \sqrt{\frac{L_t}{N_i(t)+T_i}}
       + \frac{T_i V_i}{N_i(t)+T_i}
     \Biggr\}\,, \nonumber
     \\
  E_i^{\theta(\mathrm{hyb})}(t)
  &:= \Biggl\{
       \bigl|\theta_i^{(\mathrm{hyb})}(t) - \hat\mu^{\mathrm{(hyb)}}_i(t)\bigr|
       \le
       \sqrt{\frac{L_t}{N_i(t)+T_i}}+\frac{T_i V_i}{N_i(t)+T_i}
     \Biggr\} \,. \nonumber
\end{align}
and the global good event
\begin{align}
    \mathcal{G}_i(t)
  := E_i^{\mu(\mathrm{on})}(t)
     \cap E_i^{\mu(\mathrm{hyb})}(t)
     \cap E_i^{\theta(\mathrm{hyb})}(t)\,. \nonumber
\end{align}
Similarly, for optimal arm $1$, define 
\begin{align}
    E_1^{\mu(\mathrm{on})}(t):=\bigl\{\hat\mu^{(\mathrm{on})}_1(t)>y_i\bigr\}\,. \nonumber
\end{align}
\end{definition}
 

In the following, we provide the detailed proof of Theorem \ref{thm:main}. 

\begin{proof}[Proof of Theorem \ref{thm:main}]
    
We first analyze the regret by bounding the number of selections of each sub-optimal arm $i\neq 1$.
{\allowdisplaybreaks
\begin{align}
    \E \left[\sum_{t=1}^T\bOne{ A(t)=i }\right]  = & \sum_{t=1}^T \Pr(A(t)=i,\; \neg E_i^{\mu(\mathrm{on})}(t)) + \sum_{t=1}^T \Pr(A(t)=i,\;\neg E_i^{\mu(\mathrm{hyb})}(t))\nonumber \\
    & + \sum_{t=1}^T \Pr(A(t)=i,\;                \neg E_i^{\theta(\mathrm{hyb})}(t))+ \sum_{t=1}^T \Pr(A(t)=i, \mathcal{G}_i(t))\label{eq:NiT-4parts}  \\
    \le &  O\left(\frac{1}{\Delta_i^2}\right) + \sum_{t=1}^T \Pr(A(t)=i, \mathcal{G}_i(t))\label{eq:due:bad:events} \\
    \le & O\left(\frac{1}{\Delta_i^2}\right)  + \sum_{t=1}^T \Pr\underbrace{(A(t)=i, \mathcal{G}_i(t), \hat{\theta}_i(t)>y_i )}_{\mathcal{E}_{1,t}} \nonumber \\
    & \quad \quad \quad \quad \quad \quad \quad + \sum_{t=1}^T \Pr\underbrace{( A(t)=i, \mathcal{G}_i(t), \hat{\theta}_i(t)\le y_i )}_{\mathcal{E}_{2,t} }
    \nonumber \\
    \le & O\left(\frac{1}{\Delta_i^2}\right)  + \sum_{t=1}^T \Pr(\mathcal{E}_{1,t}) + \sum_{k=0}^{T-1}
\mathbb E\!\left[
  \frac{1-p_{i,\tau_k+1}}{p_{i,\tau_k+1}}\,\bOne{E_1^{\mu(\mathrm{on})}(\tau_k+1)}
\right]  \notag \\
&+ \sum_{k=0}^{T-1}
\mathbb E\!\left[
  \frac{1-p_{i,\tau_k+1}}{p_{i,\tau_k+1}}\,\bOne{ \neg E_1^{\mu(\mathrm{on})}(\tau_k+1)}
\right]  \label{eq:E1andE2bound} \\
    \le & \frac{C_3}{\Delta_i^2}  + \left(
  \frac{C_1 \log T }{\Delta_i^2}
  - N_i\max\left\{\Bigl(1 - \frac{3\omega_i}{\Delta_i}\Bigr)
  ,0 \right\}\right)_+ + c\left(\frac{C_2\log T}{\Delta_i^2}
- N_1\right)_+  \,. \label{proof:final} 
\end{align}}
Here \eqref{eq:due:bad:events} follows from Lemma~\ref{lemma: vanilla-bad-event-mu}, Lemma \ref{lem:term2-bound}, and Lemma \ref{lemma:term3-bound} for bounding the bad events. \eqref{eq:E1andE2bound} is due to Lemma \ref{lemma: median-trans}. \eqref{proof:final} comes from the upper bound for $\mathcal{E}_{1}$ in Lemma \ref{lem:goodEvent:sub-optimalArm} and the upper bound for two terms of $\mathcal{E}_{2}$ in  Lemma \ref{lemma: optim-good-inter} and Lemma \ref{lemma: optim-bad-term}. $C_1, C_2, C_3$ are constant terms independent of the problem parameters, $c = \max\left\{e^{11}, \exp\left\{\tfrac{28+16\sqrt{3}}{N_1+1}\right\}\right\}+5$ is a constant smaller than the coeffient in the pure online TS \citep{agrawal2013ts} and decreases as $N_1$ increases. 
$O$ hides constant terms.

The final regret can then be obtained by 
\begin{align*}
    \mathrm{Reg}(T) = & \sum_{i\neq 1} \Delta_i \mathbb{E} \left[ \sum_{t=1}^T \bOne{A(t)=i} \right] \\
    \le &  \sum_{i\neq 1} \Delta_i \left(  \frac{C_3}{\Delta_i^2} + \left(
  \frac{C_1 \log T }{\Delta_i^2}
  - N_i\max\left\{\Bigl(1 - \frac{3\omega_i}{\Delta_i}\Bigr)
  ,0 \right\}\right)_+ + c\left(\frac{C_2\log T}{\Delta_i^2}
- N_1\right)_+ \right) \,.
\end{align*}

\end{proof}

We next introduce the lemmas used in the above proof and provide the proof of lemmas in Appendix \ref{appendix:proof:lemmas}. The first two lemmas establish constant upper bounds on the probability of bad events.

\begin{lemma}\label{lem:term2-bound}
    For sub-optimal arm $i\ne 1$, the second term in \eqref{eq:NiT-4parts} can be bounded by
    $$
    \sum_{t=1}^T \Pr(A(t)=i,\;\neg E_i^{\mu(\mathrm{hyb})}(t))\le\sum_{t=1}^{T} \delta_{t}= O\left(\frac{1}{\Delta_i^2}\right).
    $$
\end{lemma}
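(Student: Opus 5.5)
We drop the event $\{A(t)=i\}$, using $\Pr(A(t)=i,\neg E_i^{\mu(\mathrm{hyb})}(t))\le \Pr(\neg E_i^{\mu(\mathrm{hyb})}(t))$. It then suffices to show that, for each fixed round $t$, the hybrid sample mean lies in the stated confidence band with probability at least $1-\delta_t$. The notation $N_i(t)+T_i$ in the definition stands for the total sample count $n_i(t)=T_i(t)+N_i$. The bias part of the band absorbs the systematic offset $\mu_i^{\mathrm{(off)}}-\mu_i^{\mathrm{(on)}}$, weighted by the offline fraction; I read the printed term $T_iV_i/(N_i(t)+T_i)$ as this offline-weighted bound $N_iV_i/n_i(t)$, matching $Z_{i,t}$ in Algorithm~\ref{alg:hyb-ts}.

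\textbf{Step 1 (decomposition).} Write $\hat\mu^{\mathrm{(hyb)}}_i(t)$ as a weighted average of the $N_i$ offline samples and the $T_i(t)$ online samples. Subtracting $\mu_i^{\mathrm{(on)}}$ gives three pieces:
\begin{itemize}
\item a centered noise term, namely the average of $X-\E X$ over all $n_i(t)$ samples;
\item a deterministic bias term $\frac{N_i}{n_i(t)}(\mu_i^{\mathrm{(off)}}-\mu_i^{\mathrm{(on)}})$, of magnitude at most $\frac{N_iV_i}{n_i(t)}$ by the assumption $|\muoff{i}-\muon{i}|\le V_i$;
\item a shrinkage term from the ``$+1$'' in the denominator, of order $1/n_i(t)$, which I absorb into the $\sqrt{L_t/n_i(t)}$ radius since $L_t\ge 1$.
\end{itemize}

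\textbf{Step 2 (random count).} Because $T_i(t)$ is random and depends on past rewards, I avoid conditioning on it directly. I use the standard device of indexing the online rewards of arm $i$ by pull number, $R_{i,1},R_{i,2},\dots$, so that they are i.i.d. Then I take a union bound over the possible values $s\in\{0,\dots,t-1\}$ of $T_i(t)$. For each fixed $s$, the $N_i+s$ samples are independent and bounded in $[0,1]$. Hoeffding's inequality gives that the centered average exceeds $\sqrt{L_t/(N_i+s)}$ with probability at most $2\exp(-2(N_i+s)\cdot L_t/(N_i+s))=2e^{-2L_t}$, and a factor-2 slack in the constants would be equally fine. With $L_t=2\log(2t/\delta_t)$ this is at most $2(\delta_t/2t)^4\le \delta_t/t$. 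Summing over the $t$ values of $s$ gives $\Pr(\neg E_i^{\mu(\mathrm{hyb})}(t))\le\delta_t$.

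\textbf{Step 3 (summation).} Since $\delta_t=1/(t^2\Delta_i^2)$, we get $\sum_{t=1}^T\delta_t\le \frac{\pi^2}{6\Delta_i^2}=O(1/\Delta_i^2)$.

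The main obstacle is Step 2: handling the data-dependent sample count correctly, which the reindexing plus union bound resolves at the cost of a factor $t$ absorbed by $L_t$. A secondary issue is matching the bias term in the event definition: if the term is taken literally as $T_iV_i/n_i(t)$ rather than $N_iV_i/n_i(t)$, the argument needs an extra justification of the bias bound that I would have to revisit.
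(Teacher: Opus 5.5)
Your proposal is correct and follows the paper's proof essentially step for step. Both arguments drop $\{A(t)=i\}$, split off the bias $\frac{N_i}{T_i(t)+N_i}\bigl(\mu_i^{\mathrm{(off)}}-\mu_i^{\mathrm{(on)}}\bigr)$ and bound it by $N_iV_i/(T_i(t)+N_i)$, apply Chernoff--Hoeffding for each fixed value $s$ of $T_i(t)$, union-bound over the $t$ possible values of $s$ to get $\delta_t$, and sum over $t$. The paper's proof does read the bias term as $N_iV_i/(T_i(t)+N_i)$, as you guessed. Your pull-number reindexing is a cleaner formalization of the paper's ``given $T_i(t)=s$'' step. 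Your explicit absorption of the $+1$ shrinkage, using the factor $2$ in Hoeffding's exponent that the paper does not use, handles a term the paper's proof silently ignores.
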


\begin{lemma}\label{lemma:term3-bound}
For sub-optimal arm $i\ne 1$, the third term in \eqref{eq:NiT-4parts} can be bounded by
    $$
    \sum_{t=1}^T \Pr(A(t)=i,\;                \neg 
    E_i^{\theta(\mathrm{hyb})}(t))\le \sum_{t=1}^T\frac{\delta_t}{t} =O\left(\frac{1}{\Delta_i^2}\right)\,.
    $$
\end{lemma}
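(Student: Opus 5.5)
The plan is to bound each summand by the conditional probability that $\theta_i^{(\mathrm{hyb})}(t)$ leaves its confidence band, dropping the event $\{A(t)=i\}$. Write
\[
\Pr\bigl(A(t)=i,\;\neg E_i^{\theta(\mathrm{hyb})}(t)\bigr)\le \E\Bigl[\Pr\bigl(\neg E_i^{\theta(\mathrm{hyb})}(t)\mid\mathcal F_{t-1}\bigr)\Bigr].
\]
Conditionally on $\mathcal F_{t-1}$, the sample $\theta_i^{(\mathrm{hyb})}(t)$ is Gaussian with mean $\hat\mu_i^{(\mathrm{hyb})}(t)+Z_{i,t}$ and variance $\hat\sigma^2_{i,\mathrm{hyb}}(t)=1/(N_i+T_i+1)$. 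Here I read the somewhat garbled $N_i(t)+T_i$ in the definition as the total sample count $N_i+T_i(t)$.

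The first step is to handle the shift $Z_{i,t}=N_iV_i/(T_i+N_i)$. The deviation $|\theta_i^{(\mathrm{hyb})}-\hat\mu_i^{(\mathrm{hyb})}|$ is at most $|G|+Z_{i,t}$, where $G$ is centered Gaussian with the variance above. The bias allowance in the definition of $E_i^{\theta(\mathrm{hyb})}(t)$ is meant to absorb this shift: it should equal $Z_{i,t}$, so I would read $T_iV_i$ as a typo for $N_iV_i$, which matches the update rule for $Z$. With that reading, $\neg E_i^{\theta(\mathrm{hyb})}(t)$ implies
\[
|G|>\sqrt{\frac{L_t}{N_i+T_i}}\ge \sqrt{\frac{L_t}{N_i+T_i+1}}.
\]

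The second step is the Gaussian tail bound. Using $\Pr(|G|>a)\le 2\exp(-a^2/(2\sigma^2))$ gives
\[
\Pr\bigl(\neg E_i^{\theta(\mathrm{hyb})}(t)\mid\mathcal F_{t-1}\bigr)\le 2\exp(-L_t/2).
\]
Since $L_t=2\log(2t/\delta_t)$, we have $\exp(-L_t/2)=\delta_t/(2t)$, so the bound is $\delta_t/t$, uniformly in the history. Summing, $\sum_t \delta_t/t=\sum_t 1/(t^3\Delta_i^2)\le \zeta(3)/\Delta_i^2=O(1/\Delta_i^2)$.

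The main obstacle is only the bookkeeping of the bias term and the variance normalization ($N_i+T_i$ versus $N_i+T_i+1$). If the definition is taken literally with $T_iV_i$, the shift $Z_{i,t}$ is not covered when $N_i>T_i$. In that case I would fall back on the fact that $Z$ shifts every arm upward. Alternatively I would note that the lemma is only used in an upper bound where a two-sided band around $\hat\mu^{(\mathrm{hyb})}_i+Z_{i,t}$ suffices, so the event can be recentered at the sampling mean without changing the downstream argument.
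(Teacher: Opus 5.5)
Your argument is correct and is essentially the paper's proof. It conditions on the history, uses the triangle inequality to absorb the deterministic shift $N_iV_i/(T_i(t)+N_i)$ into the bias allowance, applies a two-sided Gaussian tail with variance $1/(T_i(t)+N_i+1)$ to get $2e^{-L_t/2}=\delta_t/t$, and sums over $t$. The paper's proof silently adopts exactly your reading ($N_iV_i$ in the band, total count $T_i(t)+N_i$), so the fallback for the literal $T_iV_i$ version is unnecessary, and your conditioning on $\mathcal F_{t-1}$ is the correct one (the paper writes $\mathcal F_t$).
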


The following lemma establishes an upper bound on $\sum_{t=1}^T \Pr(\mathcal{E}_{1,t})$ in the regret, corresponding to the regret incurred when the global good event $\mathcal{G}_i(t)$ holds and the index $\hat{\theta}_i(t)$ of arm $i$ is inaccurate.

\begin{lemma}\label{lem:goodEvent:sub-optimalArm}
For sub-optimal arm $i\ne 1$,
    $$
    \sum_{t=1}^T \Pr(\mathcal{E}_{1,t})  \le\left(
  36\frac{\log (T\Delta_i^2+e^{6})}{\Delta_i^2}
  - N_i\max\left\{\Bigl(1 - \frac{3\omega_i}{\Delta_i}\Bigr)
  ,0 \right\}\right)_++ \frac{1}{\Delta_i^2}.
    $$
\end{lemma}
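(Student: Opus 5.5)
Under $\mathcal{E}_{1,t}$ we have $E_i^{\mu(\mathrm{on})}(t)$, so $\hat\mu^{(\mathrm{on})}_i(t)\le x_i<y_i$. The median of $\{\hat\mu^{(\mathrm{on})}_i(t),\theta^{(\mathrm{on})}_i(t),\theta^{(\mathrm{hyb})}_i(t)\}$ can exceed $y_i$ only if at least two of the three values exceed $y_i$. This forces both $\theta^{(\mathrm{on})}_i(t)>y_i$ and $\theta^{(\mathrm{hyb})}_i(t)>y_i$. I would therefore bound $\sum_t\Pr(\mathcal{E}_{1,t})$ by the minimum of two sums, one for each posterior, and evaluate each separately. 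The online sum gives the $36\log(T\Delta_i^2+e^6)/\Delta_i^2$ term. The hybrid sum gives the same quantity minus $N_i(1-3\omega_i/\Delta_i)_+$. Taking the better of the two yields the $(\cdot)_+$ form. Since a single sample path must be used throughout, I would implement the minimum by choosing, for each instance, whichever of the two events gives the smaller bound; this is legitimate because each event contains $\mathcal{E}_{1,t}$.

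\textbf{Online bound.} I would index time by the number of pulls of arm $i$. For $\tau_k$ the $k$-th pull, $A(t)=i$ can occur at most once per value of $T_i(t)=k$. Hence $\sum_t\Pr(A(t)=i,E_i^{\mu(\mathrm{on})},\theta^{(\mathrm{on})}_i>y_i)\le L+\sum_t\Pr(T_i(t)>L,\ \theta^{(\mathrm{on})}_i(t)>y_i,\ \hat\mu^{(\mathrm{on})}_i\le x_i)$. I would take $L=36\log(T\Delta_i^2+e^6)/\Delta_i^2$. When $T_i(t)>L$, the Gaussian tail bound with variance $1/(T_i+1)$ and gap $y_i-x_i=\Delta_i/3$ gives per-round probability at most $\exp(-(T_i+1)\Delta_i^2/18)\le 1/(T\Delta_i^2)$. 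Summing over $t\le T$ contributes at most $1/\Delta_i^2$.

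\textbf{Hybrid bound.} On $E_i^{\mu(\mathrm{hyb})}\cap E_i^{\theta(\mathrm{hyb})}$, the sample $\theta^{(\mathrm{hyb})}_i(t)$ is within twice the confidence width of $\mu^{(\mathrm{on})}_i$ plus the bias. I would track this bias exactly. The hybrid mean has bias $\frac{N_i(\mu^{(\mathrm{off})}_i-\mu^{(\mathrm{on})}_i)}{N_i+T_i+1}$, and adding the shift $Z_{i,t}=N_iV_i/(N_i+T_i)$ gives total shift of order $N_i\omega_i/(N_i+T_i)$. The bad event $\theta^{(\mathrm{hyb})}_i>y_i=\mu_i^{(\mathrm{on})}+2\Delta_i/3$ then requires $\frac{N_i\omega_i}{N_i+T_i}+2\sqrt{L_t/(N_i+T_i)}\gtrsim 2\Delta_i/3$, roughly. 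It suffices to show this fails once $N_i+T_i\ge L'$ with $L'=36\log(\cdot)/\Delta_i^2+3N_i\omega_i/\Delta_i$. In that regime the bias term is at most $\Delta_i/3$ and the width term at most $\Delta_i/3$. (Alternatively, I would keep only the $\Delta_i/3$ margin above $x_i$ and allocate it as $\Delta_i/3$ to the bias and the remainder to concentration, which is where the factor $3$ comes from.) Consequently arm $i$ can be pulled on this event only while $T_i<L'-N_i=36\log/\Delta_i^2-N_i(1-3\omega_i/\Delta_i)$. Past that point the event is controlled by the Gaussian tail as in the online case, adding $O(1/\Delta_i^2)$. If $\omega_i\ge\Delta_i/3$, I would fall back on the online bound, which gives the $\max\{\cdot,0\}$.

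\textbf{Main obstacle.} The main obstacle is the bookkeeping in the hybrid step. The good events in Definition~\ref{def: good events} are phrased with $N_i(t)$, $T_i$ and a $T_iV_i$ term that does not obviously match the algorithm's actual bias $N_i(\omega_i)/(N_i+T_i)$. I would first rederive the exact center of $\theta^{(\mathrm{hyb})}_i$, then verify that the confidence widths in $E_i^{\mu(\mathrm{hyb})}$ and $E_i^{\theta(\mathrm{hyb})}$ fit within the $\Delta_i/3$ budget. The $+1$ denominators and the $L_t$-versus-$\log(T\Delta_i^2)$ mismatch should be absorbed into the constant $36$ and the $e^6$.
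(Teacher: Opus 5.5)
Your proposal is correct and follows the paper's proof essentially step for step. The shared steps are: reducing $\{\hat\theta_i(t)>y_i\}$ to both posterior samples exceeding $y_i$ on $E_i^{\mu(\mathrm{on})}(t)$; bounding by the minimum of the online sum (the vanilla TS count, which you re-derive where the paper cites Agrawal--Goyal) and the hybrid sum; and handling the hybrid sum by a deterministic contradiction on $\mathcal{G}_i(t)$ via the signed bias $N_i\omega_i/(T_i+N_i)$, plus counting pulls below $36L/\Delta_i^2+3N_i\omega_i/\Delta_i$. Past that threshold the event is empty on $\mathcal{G}_i(t)$, so no extra Gaussian-tail term is needed.

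The one looseness, which the paper shares, is that $L_T=2\log(2T^3\Delta_i^2)$ cannot be absorbed into the fixed constant $36$ in front of $\log(T\Delta_i^2+e^6)$. The hybrid branch therefore honestly yields a larger absolute constant, which is harmless for the $O(\cdot)$ form of the main theorem.
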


The following lemma establishes an upper bound on $\sum_{t=1}^T \Pr(\mathcal{E}_{2,t})$ in the regret. 
This term can be transformed into the case where the index $\hat{\theta}_1(t)$ of arm $1$ is inaccurate, which may lead to the selection of arm $i$.

\begin{lemma}\label{lemma: median-trans}
    For all $t$, $i\ne 1$ and all instantiations $F_{t-1}$ of $\mathcal{F}_{t-1}$, 
\begin{equation}
\Pr\bigl(A(t)=i,\;\mathcal G_i(t), \; \hat{\theta}_i(t)\le y_i \mid F_{t-1}\bigr)
\;\le\;
\frac{1-p_{i,t}}{p_{i,t}}\,
\Pr\bigl(A(t)=1,\;\mathcal G_i(t),\; \hat{\theta}_i(t)\le y_i \mid F_{t-1}\bigr).\nonumber 
\end{equation}
Further,  
$$
\sum_{t=1}^T\Pr\bigl(\mathcal{E}_{2,t}\bigr) \le \sum_{k=0}^{T-1}
\mathbb E\!\left[
  \frac{1-p_{i,\tau_k+1}}{p_{i,\tau_k+1}}\,\bOne{E_1^{\mu(\mathrm{on})}(\tau_k+1)}
\right]+\sum_{k=0}^{T-1}
\mathbb E\!\left[
  \frac{1-p_{i,\tau_k+1}}{p_{i,\tau_k+1}}\,\bOne{ \neg E_1^{\mu(\mathrm{on})}(\tau_k+1)}
\right].
$$
\end{lemma}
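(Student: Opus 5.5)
Lemma \ref{lemma: median-trans} is the standard Agrawal--Goyal transfer argument applied to the median index. I would prove the first inequality first and then sum it using the epochs defined by pulls of arm $1$. Here $\tau_k$ denotes the round of the $k$-th pull of arm $1$, with $\tau_0=0$.

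\textbf{Step 1: the per-round inequality.} Fix an instantiation $F_{t-1}$. The posterior samples of different arms are conditionally independent given $F_{t-1}$. The events $E_i^{\mu(\mathrm{on})}(t)$ and $E_i^{\mu(\mathrm{hyb})}(t)$ are $F_{t-1}$-measurable. The event $E_i^{\theta(\mathrm{hyb})}(t)$ and the event $\{\hat\theta_i(t)\le y_i\}$ depend only on arm $i$'s own samples. Let $M_i(t):=\max_{j\ne 1}\hat\theta_j(t)$.

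If $A(t)=i$ and $\hat\theta_i(t)\le y_i$, then $M_i(t)=\hat\theta_i(t)\le y_i$ and $\hat\theta_1(t)\le y_i$. Conditioning additionally on all samples of the arms $j\neq 1$, the event $\{\hat\theta_1(t)\le y_i\}$ is independent of them. This gives
\[
\Pr\bigl(A(t)=i,\mathcal G_i(t),\hat\theta_i(t)\le y_i\mid F_{t-1}\bigr)\le (1-p_{i,t})\Pr\bigl(M_i(t)\le y_i,\mathcal G_i(t),\hat\theta_i(t)\le y_i\mid F_{t-1}\bigr).
\]
Conversely, on $\{M_i(t)\le y_i\}\cap\{\hat\theta_1(t)>y_i\}$ arm $1$ is selected. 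Hence
\[
\Pr\bigl(A(t)=1,\mathcal G_i(t),\hat\theta_i(t)\le y_i\mid F_{t-1}\bigr)\ge p_{i,t}\Pr\bigl(M_i(t)\le y_i,\mathcal G_i(t),\hat\theta_i(t)\le y_i\mid F_{t-1}\bigr).
\]
Dividing the first display by the second yields the claimed inequality.

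The only care needed is that the median index $\hat\theta_1(t)$ is a function solely of arm $1$'s two samples and of the $F_{t-1}$-measurable quantity $\hat\mu^{(\mathrm{on})}_1(t)$. Therefore $p_{i,t}$ is a genuine conditional probability, independent of the other arms' samples. This makes the factorization valid.

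\textbf{Step 2: summing over rounds.} Take expectations and sum over $t$, using $\Pr(A(t)=1,\cdot\mid F_{t-1})\le \Pr(A(t)=1\mid F_{t-1})$. Then
\[
\sum_t\Pr(\mathcal E_{2,t})\le\sum_t\E\Bigl[\frac{1-p_{i,t}}{p_{i,t}}\bOne{A(t)=1}\Bigr].
\]
Between consecutive pulls of arm $1$, i.e.\ for $t\in(\tau_k,\tau_{k+1}]$, arm $1$'s statistics do not change. Hence $p_{i,t}=p_{i,\tau_k+1}$ on that interval.

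Each epoch contains exactly one round with $A(t)=1$, namely $t=\tau_{k+1}$, and we set $\tau_{k+1}=T$ if arm $1$ is not pulled again. The sum therefore regroups as
\[
\sum_{k=0}^{T-1}\E\Bigl[\frac{1-p_{i,\tau_k+1}}{p_{i,\tau_k+1}}\Bigr].
\]
The same constancy holds for $\hat\mu^{(\mathrm{on})}_1$, so $E_1^{\mu(\mathrm{on})}(t)$ is also constant on each epoch. Splitting the indicator as $\bOne{E_1^{\mu(\mathrm{on})}}+\bOne{\neg E_1^{\mu(\mathrm{on})}}$ gives the two displayed sums.

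\textbf{Main obstacle.} The delicate point is the epoch regrouping. We need $p_{i,t}$ to depend on $\mathcal F_{t-1}$ only through arm $1$'s data, including $Z_{1,t}$ and the hybrid statistics. We also need to handle $\tau_k$ being a stopping time, so that $\E[\cdot]$ at $\tau_k+1$ is well defined.

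The argument I would use for the regrouping is as follows. The quantity $\frac{1-p}{p}\bOne{A(t)=1}$ summed within an epoch is at most $\frac{1-p_{\tau_k+1}}{p_{\tau_k+1}}$ times the number of arm-$1$ pulls in that epoch, and that number is at most $1$. This is a pathwise bound, so no martingale argument is needed.
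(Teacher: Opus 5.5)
Your proposal is correct and follows the paper's proof essentially step for step: the Agrawal--Goyal factorization through the conditional independence of arm $1$'s samples from those of the other arms given $F_{t-1}$, then the epoch-wise regrouping using that $p_{i,t}$ and $E_1^{\mu(\mathrm{on})}(t)$ are constant between pulls of arm $1$. The one slip is the claim that each epoch contains \emph{exactly} one pull of arm $1$; the final epoch may contain none, but your closing pathwise ``at most one'' bound is all that is needed and is exactly what the paper uses.
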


\begin{lemma}\label{lemma: optim-good-inter}
For any $t\ge 0$, 
    \begin{equation}
\sum_{k=0}^{T-1}
\mathbb E\!\left[
  \frac{1-p_{i,\tau_k+1}}{p_{i,\tau_k+1}}\,\bOne{E_1^{\mu(\mathrm{on})}(\tau_k+1)}
\right]
\;\le\;
\sum_{k=0}^{T-1}
\min\Bigl\{
  \mathbb E\bigl[\tfrac1{p^{(\mathrm{on})}_{i,\tau_k+1}}-1\bigr],\;
  \mathbb E\bigl[\tfrac1{p^{(\mathrm{hyb})}_{i,\tau_k+1}}-1\bigr]
\Bigr\}.\nonumber
\end{equation}
\end{lemma}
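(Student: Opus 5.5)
The plan is to prove a pointwise bound conditional on $\mathcal F_{\tau_k}$, namely
\[
\frac{1-p_{i,\tau_k+1}}{p_{i,\tau_k+1}}\,\bOne{E_1^{\mu(\mathrm{on})}(\tau_k+1)} \le \min\Bigl\{\tfrac1{p^{(\mathrm{on})}_{i,\tau_k+1}}-1,\ \tfrac1{p^{(\mathrm{hyb})}_{i,\tau_k+1}}-1\Bigr\},
\]
and then take expectations. Here $\tau_k$ denotes the round of the $k$-th pull of arm $1$ (with $\tau_0=0$), as in Lemma \ref{lemma: median-trans}. Since $\hat\mu^{(\mathrm{on})}_1(\tau_k+1)$ is $\mathcal F_{\tau_k}$-measurable, the indicator is deterministic given the history. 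On the complement of the event the left side is $0$, while the right side is nonnegative because probabilities are at most $1$. So it suffices to work on $E_1^{\mu(\mathrm{on})}(\tau_k+1)=\{\hat\mu^{(\mathrm{on})}_1>y_i\}$.

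On this event, the median of three numbers, one of which exceeds $y_i$, exceeds $y_i$ as soon as at least one of the other two exceeds $y_i$. That is, if $\theta_1^{(\mathrm{on})}>y_i$ or $\theta_1^{(\mathrm{hyb})}>y_i$, then at least two of the three entries exceed $y_i$, and hence so does the median. This gives the inclusion
\[
\{\theta_1^{(\mathrm{on})}(t)>y_i\}\cup\{\theta_1^{(\mathrm{hyb})}(t)>y_i\}\subseteq\{\hat\theta_1(t)>y_i\}.
\]
Taking conditional probabilities given $\mathcal F_{t-1}$ yields $p_{i,t}\ge\max\{p^{(\mathrm{on})}_{i,t},p^{(\mathrm{hyb})}_{i,t}\}$. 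Because $x\mapsto 1/x-1$ is decreasing, we get $\frac{1-p_{i,t}}{p_{i,t}}\le\min\{1/p^{(\mathrm{on})}_{i,t}-1,\,1/p^{(\mathrm{hyb})}_{i,t}-1\}$.

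Taking expectations of the pointwise bound gives $\E[\min\{\cdot,\cdot\}]$, which is at most $\min\{\E[\cdot],\E[\cdot]\}$ since the expectation of a minimum is at most each expectation. Summing over $k$ gives the claim.

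The only delicate point is the case where some $p$ is $0$. This cannot happen, because the samples are Gaussian with positive variance, so every tail probability is strictly positive. Hence no real obstacle arises; the measurability of the indicator with respect to $\mathcal F_{\tau_k}$ is the step to state carefully.
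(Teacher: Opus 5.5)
Your proposal is correct and follows essentially the same route as the paper: on $E_1^{\mu(\mathrm{on})}(\tau_k+1)$ the median inclusion gives $p_{i,t}\ge\max\{p^{(\mathrm{on})}_{i,t},p^{(\mathrm{hyb})}_{i,t}\}$. Monotonicity of $x\mapsto 1/x-1$ then gives the pointwise bound by the minimum, and taking expectations and summing over $k$ finishes the argument. Your explicit remarks on the complement of the event, on $\E[\min\{X,Y\}]\le\min\{\E[X],\E[Y]\}$, and on the strict positivity of the Gaussian tail probabilities make precise steps that the paper leaves implicit.
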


 We then prove a similar lemma corresponding to Lemma~\ref{lemma: vanilla optim-term} for the hybrid estimator.

\begin{lemma}\label{lemma:hybrid bound-optimal}
Let $\tau_k$ be the time of the $k$-th play of arm~1. 
Then
\[
\mathbb E\Big[\tfrac1{p^{(\hyb)}_{i,\tau_k+1}}-1\Big]
\le
\begin{cases}
\max\left\{e^{11}, \exp\left(\frac{28+16\sqrt{3}}{N_1+1}\right)\right\} + 5 , &  \forall\; k,\\
\dfrac{5}{T\Delta_i^2}, & k> L_i^{\mathrm{(hyb)}}(T),
\end{cases}
\]
where $L_i^{\mathrm{(hyb)}}(T) = \left(\frac{288\ln(T\Delta_i^2 + e^{6})}{\Delta_i^2} -N_1\right)_+.$
\end{lemma}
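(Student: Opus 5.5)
This is the analogue of the Agrawal--Goyal Gaussian-TS bound on $\mathbb E[1/p_{\tau_k+1}-1]$, applied to the shifted hybrid posterior of arm~1. After the $k$-th play of arm~1 we have $T_1=k$, so
\[
\theta_1^{(\hyb)}\sim\mathcal N\Big(\hat\mu_1^{(\hyb)}+Z_1,\ \tfrac{1}{k+N_1+1}\Big),
\qquad
Z_1=\tfrac{N_1V_1}{k+N_1}.
\]
Set $n:=k+N_1$. The hybrid mean is a weighted average of the $k$ online rewards and the offline mean, and $V_1\ge \mu_1^{(\mathrm{on})}-\mu_1^{(\mathrm{off})}$. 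Hence the shifted center
\[
m:=\hat\mu_1^{(\hyb)}+Z_1
\]
has conditional expectation at least $\mu_1^{(\mathrm{on})}$, up to the harmless $+1$ in the denominators, which we absorb. Writing $\hat\mu_1^{(\hyb)}+Z_1-\mu_1^{(\mathrm{on})}$ as a sum of bounded centered terms plus a nonnegative bias $\tfrac{N_1}{n}\big(V_1-(\mu_1^{(\mathrm{on})}-\mu_1^{(\mathrm{off})})\big)\ge 0$, Hoeffding over the $n$ samples gives
\[
\Pr\big(m\le \mu_1^{(\mathrm{on})}-\varepsilon\big)\le e^{-2n\varepsilon^2}.
\]
This is the key effect of the right-shift: the offline samples behave like extra unbiased samples in the lower tail, so the problem reduces to vanilla Gaussian TS with $n=k+N_1$ effective samples.

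With $\Phi$ the standard normal cdf, $p^{(\hyb)}=1-\Phi\big(\sqrt{n+1}\,(y_i-m)\big)$. I split on $m$ exactly as in the proof of Lemma~\ref{lemma: vanilla optim-term}, which I take as a template.

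\textbf{First bullet (all $k$).} Work with $\mathbb E[1/p^{(\hyb)}]$.
\begin{itemize}
\item On $\{m\ge y_i\}$ we have $p^{(\hyb)}\ge 1/2$, contributing at most $2$.
\item On $\{m\in[y_i-r,y_i)\}$, use the Gaussian anti-concentration bound $1/(1-\Phi(x))\le C e^{x^2/2}(x+1)$ with $x=\sqrt{n+1}\,r$.
\item The event $m<y_i-r$ forces $m<\mu_1^{(\mathrm{on})}-\Delta_i/3-r$, whose probability is at most $e^{-2n(\Delta_i/3+r)^2}$.
\end{itemize}
Integrate over $r$ using the layer-cake formula. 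The Gaussian growth $e^{(n+1)r^2/2}$ is beaten by the Hoeffding decay $e^{-2nr^2}$, since $2n>(n+1)/2$ once $n\ge 1$. The resulting integral is bounded by a constant. When $n$ is small the $+1$ mismatch between $n$ and $n+1$ leaves a residual factor of the form $\exp(c/(N_1+1))$. Tracking constants should produce $\max\{e^{11},\exp((28+16\sqrt3)/(N_1+1))\}+5$.

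\textbf{Second bullet ($k>L_i^{(\hyb)}(T)$).} Here $n\ge 288\ln(T\Delta_i^2+e^6)/\Delta_i^2$. Bound $\mathbb E[1/p-1]$ by splitting on $\{m\ge y_i+\Delta_i/6\}$.
\begin{itemize}
\item On this event, $1-p\le \exp\!\big(-(n+1)\Delta_i^2/72\big)\le (T\Delta_i^2)^{-4}$, and $p\ge 1/2$, so the contribution is tiny.
\item The complement requires $m<\mu_1^{(\mathrm{on})}-\Delta_i/6$. Its probability $e^{-2n\Delta_i^2/36}$ is polynomially small in $T\Delta_i^2$. Combine this with the first-bullet argument restricted to the bad region, via Cauchy--Schwarz or the same integral started at $r\ge\Delta_i/6$.
\end{itemize}
Together these give $5/(T\Delta_i^2)$. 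The constant $288=8\cdot 36$ leaves room for these losses.

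\textbf{Main obstacle.} The delicate step is the first bullet uniformly in $N_1$ and $k$, including $k=0$ where $n=N_1$. One must check that the shift $Z_1$, computed with denominator $n$, really dominates the bias of the mean, whose denominator is $n+1$. One must also check that the Hoeffding and Gaussian exponents compare favorably, which yields the explicit $N_1$-dependent constant. I would first settle this by the direct integral comparison $\int e^{(n+1)r^2/2-2n(\Delta_i/3+r)^2}\,dr$ and only afterwards optimize the constants.
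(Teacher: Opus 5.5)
Your route is correct but differs from the paper's. The paper reuses the Agrawal--Goyal template. It writes $\mathbb{E}[1/p^{(\mathrm{hyb})}_{i,\tau_k+1}]=\mathbb{E}[G_k]$, where $G_k$ is the number of i.i.d.\ hybrid draws until one exceeds $y_i$. It then lower-bounds $\Pr(G_k\le r)$ by two events: the maximum of $r$ draws exceeds the level $\hat\mu_1^{(\mathrm{hyb})}+\sqrt{\ln r}/\sqrt{k+N_1+1}+N_1V_1/(k+N_1)$, and this level is at least $y_i$. The first event is controlled by Gaussian anti-concentration ($\ge 1-r^{-2}$ for $r\ge e^{11}$). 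The second is controlled by Hoeffding, using the shift exactly as you do to dominate the offline bias ($\ge 1-r^{-3/2}$ for $r\ge\exp((28+16\sqrt{3})/(N_1+1))$). The large-$k$ case repeats this with a $-\Delta_i/6$ slack and splits $\sum_r\Pr(G_k\ge r)$ at $r=(T\Delta_i^2+e^6)^2$.

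You instead write $p^{(\mathrm{hyb})}=1-\Phi(\sqrt{n+1}(y_i-m))$, bound $1/p$ by a Mills-ratio estimate, and integrate against the Hoeffding tail of $y_i-m$. This makes the mechanism explicit: the Gaussian exponent $(n+1)r^2/2$ is beaten by the Hoeffding exponent $2nr^2$. It also yields an absolute constant far below $e^{11}$ and reduces the large-$k$ case to a two-event split. The price is that you will not recover the stated expression $\max\{e^{11},\exp((28+16\sqrt{3})/(N_1+1))\}$, which is an artifact of the $\mathrm{MAX}_r$ argument with $z=\sqrt{\ln r}$. So your claim that tracking constants ``should produce'' it is not accurate: you get a different, smaller constant, which still suffices.

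A few points to tighten when writing it out:
\begin{itemize}
\item The shifted center has bias $-\mu_1^{(\mathrm{on})}/(n+1)$ from the $+1$ denominators. The correct tail is therefore $\Pr(m\le\mu_1^{(\mathrm{on})}-\varepsilon)\le\exp(-2((n+1)\varepsilon-1)_+^2/n)$, not $e^{-2n\varepsilon^2}$. Its cross term costs only a factor $e^{O(1/n)}$.
\item For very small $n$ (including $n=0$), simply use $y_i-m\le 1$, which holds since rewards lie in $[0,1]$ and $V_1\ge 0$. This gives $1/p\le 1/(1-\Phi(\sqrt{n+1}))$.
\item In the second bullet, keep the bound as $(T\Delta_i^2+e^6)^{-4}$. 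The weaker $(T\Delta_i^2)^{-4}$ is not below $1/(T\Delta_i^2)$ when $T\Delta_i^2<1$.
\item Rather than Cauchy--Schwarz, which would need a separate bound on $\mathbb{E}[1/p^2]$, keep the factor $e^{-2n\Delta_i^2/9}$ coming from $\Pr(y_i-m>r)\le e^{-2n(\Delta_i/3+r)^2}$ inside your first-bullet integral. This handles the bad region directly.
\end{itemize}
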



\begin{lemma}\label{lemma:optimal-good-part}
    \begin{equation}
        \sum_{k=0}^{T-1}
\mathbb E\!\left[
  \frac{1-p_{i,\tau_k+1}}{p_{i,\tau_k+1}}\,\bOne{E_1^{\mu(\mathrm{on})}(\tau_k+1)}
\right]\;\le\;
C_1\left(\,\frac{\log (T\Delta_i^2+e^{32})}{\Delta_i^2}
- N_1\right)_+ + C_2\,\frac1{\Delta_i^2},\nonumber
    \end{equation}
for suitable constants $C_1,C_2>0$ independent of $T$. 
\end{lemma}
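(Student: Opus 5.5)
The plan is to combine Lemma~\ref{lemma: optim-good-inter} with Lemma~\ref{lemma:hybrid bound-optimal} and an online counterpart.

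\textbf{Step 1 (reduction to the hybrid sample).} By Lemma~\ref{lemma: optim-good-inter}, the left-hand side is at most
\[
\sum_{k=0}^{T-1}\min\Bigl\{\mathbb E\bigl[\tfrac1{p^{(\mathrm{on})}_{i,\tau_k+1}}-1\bigr],\ \mathbb E\bigl[\tfrac1{p^{(\mathrm{hyb})}_{i,\tau_k+1}}-1\bigr]\Bigr\}.
\]
It suffices to upper bound each summand by the hybrid term. The online term is used only if a bound free of $N_1$ is desired.

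\textbf{Step 2 (split the sum over $k$).} I will split at $L_i^{\mathrm{(hyb)}}(T)=\bigl(288\ln(T\Delta_i^2+e^{6})/\Delta_i^2-N_1\bigr)_+$.

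For $k\le L_i^{\mathrm{(hyb)}}(T)$, the first case of Lemma~\ref{lemma:hybrid bound-optimal} bounds each summand by the constant $c_0:=\max\{e^{11},\exp((28+16\sqrt3)/(N_1+1))\}+5$. Since $N_1\ge 0$, this constant is at most $e^{28+16\sqrt3}+5$, which is independent of $T$ and the instance. So these terms contribute at most $c_0\bigl(L_i^{\mathrm{(hyb)}}(T)+1\bigr)$.

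For $k>L_i^{\mathrm{(hyb)}}(T)$, the second case gives $5/(T\Delta_i^2)$ per summand. Summed over at most $T$ values of $k$, this is at most $5/\Delta_i^2$.

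\textbf{Step 3 (collect constants).} Combining the two ranges gives
\[
c_0\Bigl(\tfrac{288\ln(T\Delta_i^2+e^6)}{\Delta_i^2}-N_1\Bigr)_+ + c_0+\tfrac{5}{\Delta_i^2}.
\]
Three adjustments finish the argument.
\begin{itemize}
\item Replace $e^6$ by $e^{32}$ inside the logarithm; this only enlarges the bound since the log is monotone.
\item Absorb the additive $c_0$ into $C_2/\Delta_i^2$, using $\Delta_i\le 1$.
\item Take $C_1=288c_0$, noting that $(288a-N_1)_+\le 288(a-N_1)_+$ is not generally valid. So I will instead write $c_0(288a-N_1)_+\le 288c_0(a-N_1/288)_+$ and keep the slack within the constants.
\end{itemize}

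The last point, reconciling the coefficient on $N_1$ with the stated form, is the only delicate step. To resolve it, I would either let $C_1$ multiply only the $\log$ term by writing the bound as $\bigl(C_1\log(\cdot)/\Delta_i^2-N_1\bigr)_+$ times a constant (matching how the result is used in \eqref{proof:final}), or use the online bound from the vanilla analysis on the range where $N_1$ is small.

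The main obstacle is this bookkeeping, not any probabilistic argument. The substantive work already sits in Lemma~\ref{lemma:hybrid bound-optimal}, and the lemma here is essentially its summation over $k$.
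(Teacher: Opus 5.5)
Your route is the paper's. Its proof of this lemma is the one-line combination of Lemma~\ref{lemma: optim-good-inter}, Lemma~\ref{lemma: vanilla optim-term} and Lemma~\ref{lemma:hybrid bound-optimal}. Your split of the sum at $L_i^{(\mathrm{hyb})}(T)$ is exactly that combination written out, and Steps 1--3 are correct up to the last bullet.

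The bookkeeping issue you flag is genuine, and these lemmas cannot repair it. The form with $C_1$ multiplying the whole bracket is not what you, or the paper, actually prove. Write $a := \log(T\Delta_i^2+e^{32})/\Delta_i^2$ and take $N_1=\lceil a\rceil$, so the claimed right-hand side is just $C_2/\Delta_i^2$.
\begin{itemize}
\item For large $T$ the hybrid lemma still leaves $L_i^{(\mathrm{hyb})}(T)\approx 287a$ indices with only the constant bound $c_0\ge e^{11}+5$.
\item The online lemma leaves about $288a$ such indices.
\end{itemize}
So even the per-$k$ minimum of the two bounds sums to order $\log T/\Delta_i^2$, which the lemmas cannot bring down to $C_2/\Delta_i^2$.

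Your second fallback, using the online bound when $N_1$ is small, therefore does not help. The bad regime is $a\lesssim N_1\lesssim 288a$, not small $N_1$. Moreover, $c_0\le e^{28+16\sqrt3}+5<e^{64}+5$, so the online lemma does not even improve the constants.

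What your argument does give is
\[ c_0\Bigl(\tfrac{288\log(T\Delta_i^2+e^{32})}{\Delta_i^2}-N_1\Bigr)_+ + \tfrac{c_0+5}{\Delta_i^2}. \]
This is your first resolution, with the constant multiplying only the logarithm inside $(\cdot)_+$. It is precisely the form $c\,(C_2\log T/\Delta_i^2-N_1)_+$ used in \eqref{proof:final} and Theorem~\ref{thm:main}, and the paper's one-line proof silently makes the same identification. State the lemma in that form and drop the fallback.
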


\begin{proof}[Proof of Lemma \ref{lemma:optimal-good-part}]
    Combining Lemma~\ref{lemma: optim-good-inter}, Lemma~\ref{lemma: vanilla optim-term} and Lemma~\ref{lemma:hybrid bound-optimal} we get the bound. 
\end{proof}



\begin{lemma}\label{lemma: optim-bad-term}
    \begin{equation}
        \sum_{k=0}^{T-1}
\mathbb E\!\left[
  \frac{1-p_{i,\tau_k+1}}{p_{i,\tau_k+1}}\,\bOne{\neg E_1^{\mu(\mathrm{on})}(\tau_k+1)}
\right]\;\le\;\frac{C}{\Delta_i^2}
,\nonumber
    \end{equation}
for some constant $C$.    
\end{lemma}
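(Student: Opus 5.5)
The plan is to bound each summand by a H\"older split and then sum the resulting exponentially decaying tails. Fix $k$, write $\tau:=\tau_k+1$, and note that $\tau_k$ is the time of the $k$-th play of arm~1. Between consecutive plays of arm~1, the quantities $\hat\mu_1^{(\mathrm{on})}$, $\hat\mu_1^{(\mathrm{hyb})}$, $Z_{1,t}$ and the posterior variances are frozen. Hence $p_{i,\tau}$ and the event $\neg E_1^{\mu(\mathrm{on})}(\tau)$ are both functions of the $k$ online samples of arm~1 (together with the offline data). The key difficulty is that these two random objects are coupled through $\hat\mu_1^{(\mathrm{on})}$. I would decouple them with Cauchy--Schwarz (H\"older with exponents $2,2$):
\[
\mathbb E\!\left[\tfrac{1-p_{i,\tau}}{p_{i,\tau}}\bOne{\neg E_1^{\mu(\mathrm{on})}(\tau)}\right]\le \sqrt{\mathbb E\bigl[p_{i,\tau}^{-2}\bigr]}\,\sqrt{\Pr\bigl(\hat\mu_1^{(\mathrm{on})}(\tau)\le y_i\bigr)}.
\]

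\textbf{The probability factor.} Since $\mu_1^{(\mathrm{on})}-y_i=\Delta_i/3$, I expect Hoeffding applied to the $k$ online samples to give
\[
\Pr\bigl(\hat\mu_1^{(\mathrm{on})}\le y_i\bigr)\le \exp\bigl(-c\,k\Delta_i^2\bigr).
\]
The shrinkage factor $k/(k+1)$ in the algorithm's mean only shifts the threshold by $O(1/k)$. This can be absorbed once $k\gtrsim 1/\Delta_i$; for smaller $k$ I would simply bound the probability by~$1$. After taking the square root, the factor is still $\exp(-c'k\Delta_i^2)$.

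\textbf{Lower-bounding $p_{i,\tau}$ (the main obstacle).} When $\hat\mu_1^{(\mathrm{on})}\le y_i$, the median exceeds $y_i$ only if \emph{both} posterior samples exceed $y_i$. Since the two samples are independent given $\mathcal F_{\tau-1}$,
\[
p_{i,\tau}\ge p^{(\mathrm{on})}_{i,\tau}\,p^{(\mathrm{hyb})}_{i,\tau}.
\]
The plan is to show that the right-hand shift keeps $p^{(\mathrm{hyb})}_{i,\tau}$ bounded below. The hybrid mean plus $Z_{1,\tau}$ equals, up to the $+1$ in the denominator, a weighted average of $\hat\mu_1^{(\mathrm{on})}$ and $\hat\mu_1^{(\mathrm{off})}+V_1$. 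Because $\mu_1^{(\mathrm{off})}+V_1\ge\mu_1^{(\mathrm{on})}>y_i$, the hybrid center lies below $y_i$ only through deviations of the online and offline sample means. So $p^{(\mathrm{hyb})}$ is at least $\Phi$ of a negative standardized deviation.

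Then $\mathbb E[p^{-2}]$ is controlled by the Gaussian tail estimate $1/\Phi(-z)\lesssim (1+z)e^{z^2/2}$. Writing $z$ for the standardized deviation of the online mean, $\sqrt{k+1}\,(y_i-\hat\mu_1^{(\mathrm{on})})_+$, the integrand is of order $e^{z^2}\cdot e^{z^2}$. This is too heavy against the sub-Gaussian tail of $\hat\mu_1^{(\mathrm{on})}$, so I would instead keep everything inside a single expectation. The intended route is to integrate over the Gaussian-like tail of $\hat\mu_1^{(\mathrm{on})}$ directly, treating the rewards as Gaussian or sub-Gaussian in the manner of the standard argument for $\mathbb E[1/p]$ in \citet{agrawal2013ts} (Lemma~\ref{lemma: vanilla optim-term}). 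The posterior variances $1/(k+1)$ and $1/(k+N_1+1)$ are smaller than the data variance factor, so the product of the tail and the blow-up should still integrate to an absolute constant $C_0$. If this fails, I would fall back on H\"older with exponents closer to $1$ on the probability side. Either way, the target is
\[
\mathbb E\bigl[p_{i,\tau}^{-2}\bigr]\le C_0 \quad\text{uniformly in } k.
\]

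\textbf{Summing.} Combining the two factors gives
\[
\sum_{k\ge0}\sqrt{C_0}\,\exp\bigl(-c'k\Delta_i^2\bigr)=O\!\left(\frac{1}{\Delta_i^2}\right),
\]
which is the claimed $C/\Delta_i^2$. The small-$k$ regime, where the probability is bounded by $1$, contributes at most $O(1/\Delta_i)\cdot\sqrt{C_0}$, and this is absorbed into the same bound.
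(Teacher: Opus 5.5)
Your skeleton is the same as the paper's: a H\"older split into a moment of $1/p_{i,\tau}$ times a power of $\Pr(\neg E_1^{\mu(\mathrm{on})}(\tau))$, a moment bound uniform in $k$, and a geometric sum of $e^{-ck\Delta_i^2}$. The gap is the exponent. The step you leave open, $\mathbb E[p_{i,\tau}^{-2}]\le C_0$ uniformly with $C_0$ independent of $\Delta_i$, fails at exponent $2$. On $\neg E_1^{\mu(\mathrm{on})}(\tau)$ you have $p_{i,\tau}=p^{(\mathrm{on})}_{i,\tau}p^{(\mathrm{hyb})}_{i,\tau}$. When $N_1=0$ (or $N_1\ll k$), both samples have essentially the same center and variance $\approx 1/(k+1)$. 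Writing $z=\sqrt{k+1}\,(y_i-\hat\mu_1^{(\mathrm{on})})$ and $\bar\Phi$ for the standard Gaussian upper tail, this gives $p_{i,\tau}^{-2}\approx\bar\Phi(z)^{-4}\asymp z^4e^{2z^2}$ for large $z$. For $[0,1]$ rewards the only available tail is Hoeffding's, $\Pr(z\ge s)\lesssim e^{-2(s+a)^2}$ with $a\asymp\sqrt{k}\,\Delta_i$, and it is essentially sharp for rewards of variance $1/4$ (e.g.\ Bernoulli$(1/2)$). So the quadratic terms cancel exactly, leaving only $e^{-4as-2a^2}$ times a polynomial in $s$. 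For $k\Delta_i^2\lesssim 1$, which is exactly where $\Pr(\neg E_1^{\mu(\mathrm{on})})$ is not small, this integrates to something polynomially large in $1/\Delta_i$, not a $\Delta_i$-free constant. Hence the plan as written does not deliver $C/\Delta_i^2$. You spotted the $e^{z^2}\cdot e^{z^2}$ problem yourself, but keeping everything inside one expectation cannot cure it; it is intrinsic to the second moment. Relatedly, the slack these integrals rely on comes from the posterior variance $1/(k+1)$ being \emph{larger} than the sub-Gaussian proxy $1/(4k)$ of the online mean; you stated the reverse. With unit-variance Gaussian rewards there is no slack at all, and even $\mathbb E\bigl[(p^{(\mathrm{on})}p^{(\mathrm{hyb})})^{-1}\bOne{\neg E_1^{\mu(\mathrm{on})}}\bigr]$ diverges, so the argument must use the bounded-reward Hoeffding tail.

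Your fallback points the wrong way. Moving the probability-side exponent toward $1$ forces a moment of $1/p_{i,\tau}$ of order above $2$, which makes the problem worse. The paper does the opposite: it takes $\alpha\in(1,\tfrac32)$ on $1/p_{i,\tau}$ and $\beta=\alpha/(\alpha-1)$ on the indicator. Then only a $(1+\epsilon)$-moment of $1/p$ is needed, while $\Pr(\neg E_1^{\mu(\mathrm{on})}(\tau_k+1))^{1/\beta}\le e^{-k\Delta_i^2/(18\beta)}$ still sums to $O(\beta/\Delta_i^2)$. With any fixed $\alpha<2$, your direct-integration idea does go through. The exponent becomes $\alpha s^2-2(s+a)^2\le-(2-\alpha)s^2$, which is integrable uniformly in $k$ and $a\ge 0$. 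For $N_1>0$, use $\mu_1^{(\mathrm{off})}+V_1\ge\mu_1^{(\mathrm{on})}$ together with $(k+N_1)\bigl(\tfrac{ku+N_1v}{k+N_1}\bigr)^2\le ku^2+N_1v^2$, where $u,v$ are the independent online and offline deviations, each with its own Hoeffding tail. The paper instead bounds the $\alpha$-moment through a joint geometric hitting time and Jensen. The inclusion it uses there, $\{G_k>r\}\subseteq\{\max_{\ell\le r}\Theta^{(\mathrm{on})}_{k,\ell}\le y_i\}\cup\{\max_{\ell\le r}\Theta^{(\mathrm{hyb})}_{k,\ell}\le y_i\}$, is not valid, because the two coordinates can fail on different trials. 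So direct integration with $\alpha<2$ is the more reliable way to obtain the uniform moment bound.
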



\section{Theoretical Analysis and Experiments in the Pure Online Setting} \label{appendix: discussion}
In this section, we provide a tighter analysis to the pure online setting, where no offline data are available. In this case, the hybrid posterior coincides with the online posterior, and Anchor-TS reduces to selecting arms according to the median index
\begin{equation}\nonumber
    \mathrm{median}\left\{\theta_{i,1}^{\mathrm{(on)}}(t),\theta_{i,2}^{\mathrm{(on)}}(t),\hat{\mu}_i^{\mathrm{(on)}}\right\}\,,
\end{equation}
where $\theta_{i,1}^{\mathrm{(on)}}$ and $\theta_{i,2}^{\mathrm{(on)}}$ are i.i.d. samples from the standard online posterior of arm $i$ at time $t$. We show that this median rule preserves the classical instance-dependent regret guarantee while improving the leading logarithmic term by a factor $1/2$ compared with vanilla TS. 

\subsection{ $1/2$ improvement on the leading $\log T$-term}
We reuse the notation in Appendix~\ref{sec:regret}. Recall that the $\log T$ contribution in Theorem~\ref{thm:main} arises from two parts: (i) controlling over-optimistic indices of suboptimal arms (captured by the term $\sum_{t=1}^T \Pr(\mathcal{E}_{1,t})$ as in Lemma~\ref{lem:goodEvent:sub-optimalArm}), and (ii) controlling under-exploration of the optimal arm (via the ratio term in Lemma~\ref{lemma: median-trans}). We bound these two parts separately.
\paragraph{Suboptimal-arm term. }
Recall that for a suboptimal arm $i\ne 1$, 
\begin{equation}\label{eq: discuss-sub-orin}
    \sum_{t=1}^T \Pr(\mathcal{E}_{1,t}) = \sum_{t=1}^T \Pr\left(A(t)=i, \mathcal{G}_i(t), \hat{\theta}_i(t)>y_i\right)\,.
\end{equation}

Conditioned on the global good event $\mathcal{G}_i(t)$, we have $\hat{\mu}_i^{\mathrm{(on)}}< x_i<y_i$. Therefore, the event $\{\hat{\theta}_i(t)>y_i\}$ can only happen when both posterior samples exceed $y_i$, i.e.,
\begin{equation}\label{discussion: event-equal}
    \left\{\mathrm{median}\{\theta_{i,1}^{\mathrm{(on)}}(t),\theta_{i,2}^{\mathrm{(on)}}(t),\hat{\mu}_i^{\mathrm{(on)}}(t)\} > y_i \right\} = \left\{\theta_{i,1}^{\mathrm{(on)}}(t) > y_i\; \cap\;  \theta_{i,2}^{\mathrm{(on)}}(t) > y_i  \right\} \,.\nonumber
\end{equation}
Since $\theta_{i,1}^{\mathrm{(on)}}(t)$, $\theta_{i,2}^{\mathrm{(on)}}(t)$ are i.i.d. conditional on $F_{t-1}$, 
\begin{align}
    \Pr\!\left(\hat{\theta}_i(t)>y_i \mid \mathcal{F}_{t-1}\right)
=\Pr\!\left(\theta^{(\mathrm{on})}_i(t)>y_i \mid \mathcal{F}_{t-1}\right)^2.\label{discussion: sub-twice}
\end{align}

Using \eqref{discussion: sub-twice} squares the Gaussian tail probability (Lemma~\ref{lemma: gaussian ineq}) in the vanilla TS analysis (Lemma~\ref{lemma: vanilla-bad-event-mu}) and yields a halved exploration threshold as
\begin{align}
    \sum_{t=1}^T \Pr\!\left(T_i(t)> \tfrac12 L_i(T),\;G_i(t),\;\hat{\theta}_i(t)>y_i\right)&\le \sum_{t=\tfrac12 L_i(t)}^T \left(\exp\left\{-\frac{T_i(t)(y_i-x_i)^2}{2} \right\}\right)^2 \nonumber\\
    & \le \sum_{t=1}^T  \left(\exp\left\{-\frac{\tfrac{1}{2}L_i(t)(y_i-x_i)^2}{2} \right\}\right)^2 \nonumber \\
&\le \sum_{t=1}^T \frac{1}{T\Delta_i^2}\nonumber \\
&\le \frac{1}{\Delta_i^2}\label{discuss:constant} \,,
\end{align}

where the threshold $L_i(T)$ is the same as in Lemma~\ref{lemma: vanilla-bad-event-mu}.

Thus take \eqref{discuss:constant}, we get
\begin{align}
    \sum_{t=1}^T \Pr(\mathcal{E}_{1,t})
& \le \sum_{t=1}^T \Pr\!\left(A(t)=i,\;T_i(t)\le \tfrac12 L_i(T),\;G_i(t),\;\hat{\theta}_i(t)>y_i\right) \nonumber\\
& \quad \quad \quad+\sum_{t=1}^T \Pr\!\left(A(t)=i,\;T_i(t)>\tfrac12 L_i(T),\;G_i(t),\;\hat{\theta}_i(t)>y_i\right)\nonumber\\
& \le \tfrac12 L_i(T)\;+\;\sum_{t=1}^T \Pr\!\left(T_i(t)>\tfrac12 L_i(T),\;G_i(t),\;\hat{\theta}_i(t)>y_i\right) \nonumber \\
&\le \tfrac12 L_i(T) + \frac{1}{\Delta_i^2} \,, \nonumber
\end{align}
which matches the classical bound but with a factor $1/2$ on the leading logarithmic term.

\paragraph{Optimal-arm term. }
Next we bound the term arising from Lemma~\ref{lemma: median-trans}, namely
\begin{equation}\label{discussion: optimal-ori}
\sum_{t=1}^T\Pr\bigl(\mathcal{E}_{2,t}\bigr) \le \sum_{k=0}^{T-1}
\mathbb E\!\left[
  \frac{1-p_{i,\tau_k+1}}{p_{i,\tau_k+1}}\,\bOne{E_1^{\mu(\mathrm{on})}(\tau_k+1)}
\right]+\sum_{k=0}^{T-1}
\mathbb E\!\left[
  \frac{1-p_{i,\tau_k+1}}{p_{i,\tau_k+1}}\,\bOne{ \neg E_1^{\mu(\mathrm{on})}(\tau_k+1)}
\right]\,. 
\end{equation}

We mainly focus on the first term since the second term upper bound does not depend on $T$. Under $E_1^{\mu(\mathrm{on})}(\tau_k+1)$, we have $\hat{\mu}_1^{\mathrm{(on)}}(\tau_k+1)>y_i$, hence 
$$
p_{i,\tau_k+1}
=\Pr\!\left(\hat{\theta}_1(\tau_k+1)>y_i\mid \mathcal{F}_{\tau_k}\right)
=\Pr\!\left(\theta^{(\mathrm{on})}_{1,1}(\tau_k+1)>y_i \;\cup\; \theta^{(\mathrm{on})}_{1,2}(\tau_k+1)>y_i \mid \mathcal{F}_{\tau_k}\right).
$$

Let $p^{(\mathrm{on})}_{i,\tau_k+1}:=\Pr(\theta^{(\mathrm{on})}_{1}(\tau_k+1)>y_i\mid \mathcal{F}_{\tau_k})$. Then 
\begin{align*}
    p_{i,\tau_k+1}
    & =1-\left(1-p_{i,\tau_k+1}^{\mathrm{(on)}}\right)^2 \\
    & = 2\,p_{i,\tau_k+1}^{\mathrm{(on)}}-\left(p_{i,\tau_k+1}^{\mathrm{(on)}}\right)^2\,. 
\end{align*}

Therefore, 
\begin{equation}\label{discuss: final}
    \frac{1-p_{i,\tau_k+1}}{p_{i,\tau_k+1}}
=\frac{(1-p^{(\mathrm{on})}_{i,\tau_k+1})^2}{p^{(\mathrm{on})}_{i,\tau_k+1}(2-p^{(\mathrm{on})}_{i,\tau_k+1})}
=\frac{1-p^{(\mathrm{on})}_{i,\tau_k+1}}{p^{(\mathrm{on})}_{i,\tau_k+1}}\cdot \frac{1-p^{(\mathrm{on})}_{i,\tau_k+1}}{2-p^{(\mathrm{on})}_{i,\tau_k+1}}
\le \frac12\cdot \frac{1-p^{(\mathrm{on})}_{i,\tau_k+1}}{p^{(\mathrm{on})}_{i,\tau_k+1}}\,, 
\end{equation}
where we used $(1-p)/(2-p)\le 1/2$ for $p\in[0,1]$. 
 Plugging \eqref{discuss: final} into the first term of \eqref{discussion: optimal-ori} directly yields a factor-$1/2$ reduction on the leading logarithmic contribution in the optimal-arm analysis. The second term in \eqref{discussion: optimal-ori} is handled identically to Appendix~\ref{sec:regret} and remains a lower-order constant term.

\subsection{Experiments in the pure online setting}

\begin{figure}[H]
  \centering
  \includegraphics[width=0.7\linewidth]{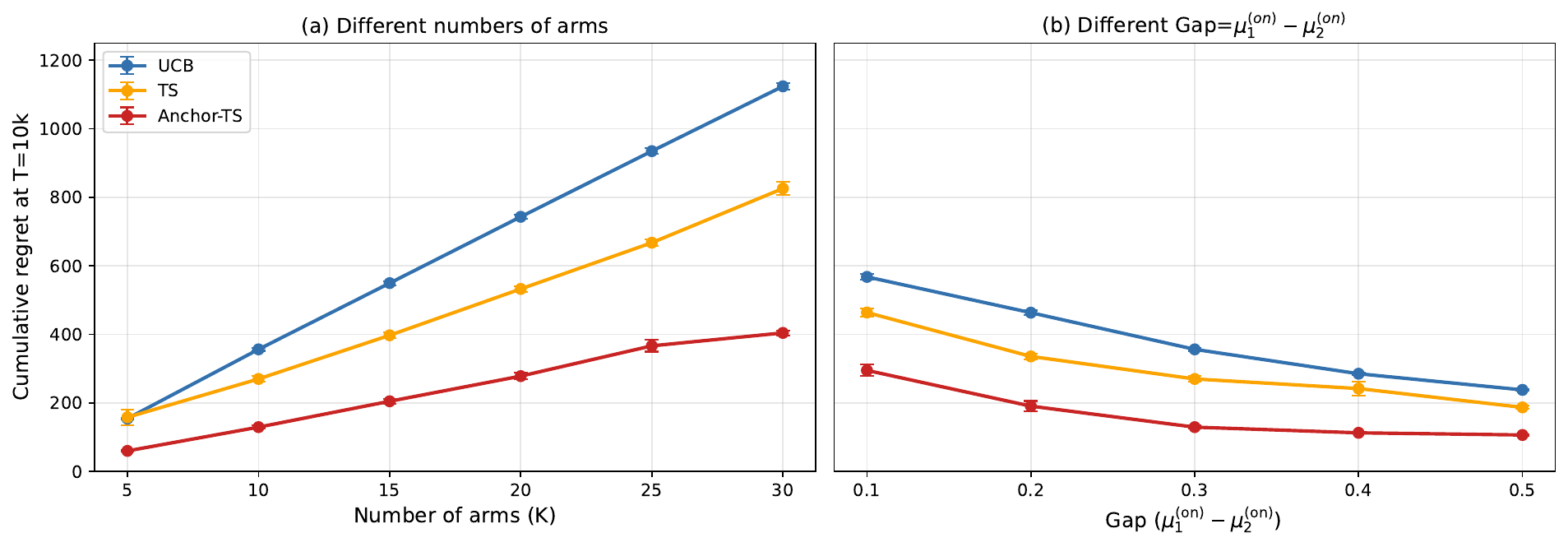}
  \caption{Cumulative regret of our Anchor-TS and baselines in the pure online setting with different arm set size $K$ and sub-optimality gap $\Delta$. }
  \label{fig:fig7}
\end{figure}

We examine the pure online setting in Figure~\ref{fig:fig7}, where no offline data are available. In this case, MIN-UCB, hybrid UCB, and hybrid TS reduce to their respective pure online variants.
The experimental setting follows Section \ref{sec: experiments}. We test different choices of arm set size $K$ and sub-optimality gap $\Delta=\mu_1^{\mathrm{(on)}} - \mu_2^{\mathrm{(on)}}$.
Anchor-TS makes decisions based on the median of two posterior samples and the empirical mean from online data.
As shown in the figure, Anchor-TS consistently achieves the lowest cumulative regret across all tested settings. This demonstrates that even without offline data, the median-based decision rule employed by Anchor-TS effectively suppresses over-exploration caused by the randomness of posterior sampling, thereby exhibiting greater learning efficiency compared with vanilla TS and UCB-based methods.

\section{Proof of Lemmas}
\label{appendix:proof:lemmas}

\begin{proof}[Proof of Lemma \ref{lem:term2-bound}]
    We expand the difference between the empirical hybrid mean and the true mean of the online distribution in $E_i^{\mu(\mathrm{hyb})}(t)$  as follows:
\begin{align}
  \hat\mu_i^{\mathrm{(hyb)}}(t) - \mu^{\mathrm{(on)}}_i
    & = \left(\hat\mu_i^{\mathrm{(hyb)}}(t)-\frac{T_i(t)\mu^{\mathrm{(on)}}_i(t)
           + N_i\mu^{\mathrm{(off)}}_i}
          {T_i(t)+N_i}\right) +\left(\frac{T_i(t)\mu^{\mathrm{(on)}}_i(t)
           + N_i\mu^{\mathrm{(off)}}_i}
          {T_i(t)+N_i}- \mu_i^{\mathrm{(on)}}\right)         \nonumber\\
  & = \left(\hat\mu_i^{\mathrm{(hyb)}}(t)-\frac{T_i(t)\mu^{\mathrm{(on)}}_i(t)
           + N_i\mu^{\mathrm{(off)}}_i}
          {T_i(t)+N_i}\right)+\frac{N_i}{T_i(t)+N_i}
         \bigl(\mu^{\mathrm{(off)}}_i-\mu^{\mathrm{(on)}}_i\bigr).  \nonumber
\end{align}
Using $|\mu^{\mathrm{(off)}}_i-\mu^{\mathrm{(on)}}_i|\le V_i$, we obtain
\begin{equation}
  \bigl|\hat\mu_i^{\mathrm{(hyb)}}(t) - \mu^{\mathrm{(on)}}_i\bigr|
  \le \left|\frac{T_i(t)\hat\mu^{\mathrm{(on)}}_i(t)
           + N_i\hat\mu^{\mathrm{(off)}}_i}
          {T_i(t)+N_i} - \frac{T_i(t)\mu^{\mathrm{(on)}}_i(t)
           + N_i\mu^{\mathrm{(off)}}_i}
          {T_i(t)+N_i}\right| 
     + \frac{N_i V_i}{T_i(t)+N_i}.
  \label{eq:hyb-mean-abs}
\end{equation}
Combining the definition of $\neg E_i^{\mu(\mathrm{hyb})}(t)$ 
\[
  \bigl|\hat\mu_i^{\mathrm{(hyb)}}(t) - \mu^{\mathrm{(on)}}_i\bigr|
  >
  \sqrt{\frac{L_t}{T_i(t)+N_i}} + \frac{N_i V_i}{T_i(t)+N_i},
\]
with \eqref{eq:hyb-mean-abs}, we can derive the following relation:
\begin{equation}
    \neg E_i^{\mu(\mathrm{hyb})}(t) \subseteq \left\{  \left|\frac{T_i(t)\hat\mu^{\mathrm{(on)}}_i(t)
           + N_i\hat\mu^{\mathrm{(off)}}_i}
          {T_i(t)+N_i} - \frac{T_i(t)\mu^{\mathrm{(on)}}_i(t)
           + N_i\mu^{\mathrm{(off)}}_i}
          {T_i(t)+N_i}\right| >  \sqrt{\frac{L_t}{T_i(t)+N_i}}  \right\}.\label{eq:term 2-event} 
\end{equation}
Noting that in (\ref{eq:term 2-event}),
$$
\E\left[\frac{T_i(t)\hat\mu^{\mathrm{(on)}}_i(t)
           + N_i\hat\mu^{\mathrm{(off)}}_i}
          {T_i(t)+N_i}\right] = \frac{T_i(t)\mu^{\mathrm{(on)}}_i(t)
           + N_i\mu^{\mathrm{(off)}}_i}
          {T_i(t)+N_i}.
          $$
Given $T_i(t)=s$,           
by Chernoff-Hoeffding bounds (Lemma~\ref{lemma: chernoff-hoeffding-1}), we have
\begin{align*}
    \Pr\left( \left|\frac{s\hat\mu^{\mathrm{(on)}}_i(t)
           + N_i\hat\mu^{\mathrm{(off)}}_i}
          {s+N_i}-\E\left[\frac{s\hat\mu^{\mathrm{(on)}}_i(t)
           + N_i\hat\mu^{\mathrm{(off)}}_i}
          {s+N_i}\right]\right| >  \sqrt{\frac{L_t}{s+N_i}}     \right) &\leq 2e^{-(s+N_i)\frac{L_t}{s+N_i}} \\ & \le \frac{\delta_t}{t}\,.
\end{align*}
Thus
\begin{align*}
    \Pr\left(\neg E_i^{\mu(\mathrm{hyb})} (t)\right) &= \Pr\left(\exists s \in\{0,\dots,t-1\}:T_i(t)=s\; \text{and}\; \neg E_i^{\mu(\mathrm{hyb})} (t) \right)\\
    &\leq \sum_{s=0}^{t-1}\Pr\left( \left|\frac{s\hat\mu^{\mathrm{(on)}}_i(t)
           + N_i\hat\mu^{\mathrm{(off)}}_i}
          {s+N_i}-\E\left[\frac{s\hat\mu^{\mathrm{(on)}}_i(t)
           + N_i\hat\mu^{\mathrm{(off)}}_i}
          {s+N_i}\right]\right| >  \sqrt{\frac{L_t}{s+N_i}}     \right)\\
      & \le  \sum_{s=0}^{t-1} \frac{\delta_t}{t} \le \delta_t.
\end{align*}
Then we have
\begin{equation}
  \sum_{t=1}^T \Pr(A(t)=i,\;\neg E_i^{\mu(\mathrm{hyb})}(t))\le \sum_{t=1}^{T}\Pr\left(\neg E_i^{\mu(\mathrm{hyb})} (t)\right)\le \sum_{t=1}^{T} \delta_{t}= O\left(\frac{1}{\Delta_i^2}\right)\,. \nonumber
\end{equation}
\end{proof}

\begin{proof}[Proof of Lemma \ref{lemma:term3-bound}]
Recall that
$$
\theta_i^{\mathrm{(hyb)}}(t) \sim \mathcal{N}\!\left(
      \hat\mu^{\mathrm{(hyb)}}_i(t)+\frac{N_i V_i}{T_i(t)+N_i}
     ,
      \frac{1}{T_i(t)+N_i+1}
  \right),
$$
thus $\E[\theta_i^{(\mathrm{hyb})}(t)] = \hat\mu^{\mathrm{(hyb)}}_i(t)+\frac{N_iV_i}{T_i(t)+N_i}$. 
By the triangle inequality, 
$$
\bigl|\theta_i^{(\mathrm{hyb})}(t) - \hat\mu^{\mathrm{(hyb)}}_i(t)\bigr|\le \bigl|\theta_i^{(\mathrm{hyb})}(t) - \E[\theta_i^{(\mathrm{hyb})}(t)]\bigr|+\bigl|\E[\theta_i^{(\mathrm{hyb})}(t)]-\hat\mu^{\mathrm{(hyb)}}_i(t)\bigr|
$$
Thus
\begin{align}
    \neg E_i^{\theta(\mathrm{hyb})}(t)& = \Biggl\{
       \bigl|\theta_i^{(\mathrm{hyb})}(t) - \hat\mu^{\mathrm{(hyb)}}_i(t)\bigr|
       >
       \sqrt{\frac{L_t}{T_i(t)+N_i}}+\frac{N_i V_i}{T_i(t)+N_i}
     \Biggr\}\nonumber
    \\
    &\subseteq \left\{ \bigl|\theta_i^{(\mathrm{hyb})}(t) - \E[\theta_i^{(\mathrm{hyb})}(t)]\bigr|
  \ge \sqrt{\frac{L_t}{T_i(t)+N_i}}\right\}.\label{term(3)}
\end{align}

Conditional on the filtration $\mathcal{F}_t$, 
$\hat\mu^{\mathrm{(hyb)}}_i(t)$ and $T_i(t)$ are deterministic. Thus the right-hand of~\eqref{term(3)} can be bounded by Gaussian tail inequality (Lemma~\ref{lemma: gaussian ineq}) as
$$
\Pr\bigl(\neg E_i^{\theta(\mathrm{hyb})}(t)|\mathcal{F}_t\bigr) \le 2\exp\left(-\frac{1}{2} \frac{L_t}{T_i(t)+N _i}\cdot (T_i(t)+N _i+1)  \right) \le \frac{\delta_t}{t}.
$$

Then we have
\begin{align}
    \sum_{t=1}^T \Pr(A(t)=i,\;\neg E_i^{\mu(\mathrm{hyb})}(t))&= \sum_{t=1}^T \E\left[\Pr(A(t)=i,\;\neg E_i^{\mu(\mathrm{hyb})}(t)|\mathcal{F}_t)\right]\nonumber\\
    &\le   \sum_{t=1}^T \E\left[\Pr(\;\neg E_i^{\mu(\mathrm{hyb})}(t)|\mathcal{F}_t)\right]\nonumber\\
    &\le \sum_{t=1}^T\frac{\delta_t}{t}\nonumber\\
    & = O\left(\frac{1}{\Delta_i^2}\right)\,.\nonumber
\end{align}
\end{proof}

\begin{proof}[Proof of Lemma \ref{lem:goodEvent:sub-optimalArm}]
$\hat\theta_i(t)>y_i$ indicates that the median among three indices is above $y_i$, which
implies that both the online and hybrid indexes are above $y_i$ based on the event $E_i^{\mu(\mathrm{on})}(t)$, i.e., 
\[
  \{\hat\theta_i(t)>y_i\}
  \subseteq
  \{\theta^{\mathrm{(on)}}_i(t)>y_i\}
  \cap
  \{\theta^{\mathrm{(hyb)}}_i(t)>y_i\}\text{ when }\; \mathcal{G}_i(t)\; \text{holds}.
\]
Therefore
\begin{align}
  \Pr(\mathcal{E}_{1,t})
  &= \Pr\bigl(A(t)=i,\; \mathcal{G}_i(t),\; \hat\theta_i(t)>y_i\bigr)
     \nonumber\\
  &\le\min \left\{ \Pr\bigl(A(t)=i,\;\mathcal{G}_i(t),\; \theta^{\mathrm{(on)}}_i(t)>y_i\bigr)
     ,  \Pr\bigl(A(t)=i,\; \mathcal{G}_i(t),\;\theta^{\mathrm{(hyb)}}_i(t)>y_i\bigr)\right\}.
  \nonumber
\end{align}
Then 
\begin{align}
    \sum_{t=1}^T\Pr(\mathcal{E}_{1,t}) &\le \sum_{t=1}^T \min \left\{ \Pr\bigl(A(t)=i,\; \mathcal{G}_i(t),\;\theta^{\mathrm{(on)}}_i(t)>y_i\bigr)
     ,  \Pr\bigl(A(t)=i,\; \mathcal{G}_i(t),\;\theta^{\mathrm{(hyb)}}_i(t)>y_i\bigr)\right\}\nonumber\\
     & \le \min\left\{  \underbrace{\sum_{t=1}^T \Pr\bigl(A(t)=i,\; E_i^{\mu\mathrm{(on)}}(t),\;\theta^{\mathrm{(on)}}_i(t)>y_i\bigr)}_{S_1} ,   \underbrace{\sum_{t=1}^T \Pr\bigl(A(t)=i,\; \mathcal{G}_i(t),\;\theta^{\mathrm{(hyb)}}_i(t)>y_i\bigr)}_{S_2} \right\} \,. \label{eq:4.1 two terms}
\end{align}
Note that $S_{1}$ is exactly the same as the corresponding term in vanilla TS \citep{agrawal2013ts}. By Lemma~\ref{lemma: vanilla-bad-event-theta}, 
\begin{equation}
  S_{1}
  \le
  \frac{2\log(T\Delta_i^2)}{(y_i-x_i)^2}
  + \frac{1}{\Delta_i^2}
  = \frac{18\log(T\Delta_i^2)}{\Delta_i^2}+\frac{1}{\Delta_i^2}
  \, .
  \label{eq:S1:upper}
\end{equation}

Next we bound term $S_{2}$. Based on the good event $\mathcal{G}_{i}(t)$ and triangle inequality, 
\begin{align}
    \theta_i^{(\mathrm{hyb})}(t) - \mu_i^{(\mathrm{on})} & =  \theta_i^{(\mathrm{hyb})}(t)-\mu_i^{(\mathrm{hyb})} + \mu_i^{(\mathrm{hyb})} - \mu_i^{(\mathrm{on})}\nonumber  \\
    & \le 2\sqrt{\frac{L_t}{T_i(t)+N_i}} + \frac{N_iV_i}{T_i(t)+N_i} + \frac{N_i}{T_i(t)+N_i}(\mu^{\mathrm{(off)}}_i-\mu^{\mathrm{(on)}}_i)    \nonumber\\
    & =  2\sqrt{\frac{L_t}{T_i(t)+N_i}}+ \frac{N_i \omega_i}{T_i(t)+N_i}\,.\label{eq:theta-hyb-upper}
\end{align}

Conditional on the event $\{\theta_i^{(\mathrm{hyb})}(t) > y_i\}$, 
\begin{equation}
  \label{eq:theta-hyb-lower}
  \theta_i^{(\mathrm{hyb})}(t) - \mu_i^{(\mathrm{on})}
  > y_i - \mu_i^{(\mathrm{on})}
  = {2\Delta_i}/{3}.
\end{equation}
Combining \eqref{eq:theta-hyb-upper} and \eqref{eq:theta-hyb-lower}, 
\begin{align}
    S_2 \le & \sum_{t=1}^T \mathrm{Pr} \left( A(t)=i,\; 2 \sqrt{\frac{L_t}{T_i(t)+N_i}} + \frac{N_i \omega_i}{T_i(t)+N_i}
  \geq \frac{2\Delta_i}{3} \right) \nonumber  \\
  \le & \sum_{t=1}^T \mathrm{Pr} \left( A(t)=i,\;  2 \sqrt{\frac{L_t}{T_i(t)+N_i}} \geq \frac{\Delta_i}{3}\;\; \text{or} \;\; \frac{N_i \omega_i}{T_i(t)+N_i}\geq \frac{\Delta_i}{3}  \right)  \nonumber \\
  = & \sum_{t=1}^T \mathrm{Pr} \left( A(t)=i,\; T_i(t)+N_i \leq \frac{36 L_t}{\Delta_i^2}\;\; \text{or} \;\; T_i(t)+N_i
  \leq \frac{3 N_i \omega_i}{\Delta_i}    \right)  \nonumber \\ 
  \le & \sum_{t=1}^T \mathrm{Pr} \left( A(t)=i,\; T_i(t)+N_i \leq \max\left\{\frac{36 L_t}{\Delta_i^2},\frac{3 N_i \omega_i}{\Delta_i} \right\}    \right) \nonumber \\
  \le & \left(\max\left\{
    36 \frac{L_T}{\Delta_i^2} ,\,
    3\frac{N_i \omega_i}{\Delta_i}
  \right\}-N_i\right)_+
  \leq
  \left(36 \frac{\log (T\Delta_i^2)}{\Delta_i^2}
  - N_i \Bigl(1 - 3 \frac{\omega_i}{\Delta_i}\Bigr)\right)_+ \label{eq:S_2,upper}
\end{align}

Above all, based on \eqref{eq:4.1 two terms}, \eqref{eq:S1:upper} and \eqref{eq:S_2,upper}, 
\begin{align}
\sum_{t=1}^T \Pr(\mathcal{E}_{1,t}) &  \le \min \left\{S_{1}, S_{2}\right\} \nonumber\\
    & \le\left(
  36\frac{\log (T\Delta_i^2)}{\Delta_i^2}
  - N_i\max\left\{\Bigl(1 - 3\frac{\omega_i}{\Delta_i}\Bigr)
  ,0 \right\}\right)_++ \frac{1}{\Delta_i^2}. 
  \label{eq:term41-bound}   
\end{align}
\end{proof}

\begin{proof}[Proof of Lemma \ref{lemma: median-trans}]
    Our goal is to demonstrate
\begin{equation}
    \Pr\bigl(A(t)=i, \; \hat{\theta}_i(t)\le y_i \mid\mathcal G_i(t),\; F_{t-1}\bigr)
\;\le\;
\frac{1-p_{i,t}}{p_{i,t}}\,
\Pr\bigl(A(t)=1,\; \hat{\theta}_i(t)\le y_i \mid\mathcal G_i(t),\; F_{t-1}\bigr).\label{ineq: sub-to-opt}
\end{equation}

For sub-optimal arm $i$ to be chosen given this constraint, every other arm $j$ must satisfy $\hat{\theta}_j(t)\le \hat{\theta}_i(t)\le y_i$. Besides, noting that given instantiation $F_{t-1}$, the posterior distribution between optimal arm 1 and other sub-optimal arms are independent, and $\Pr\left( \hat{\theta}_1(t)\le y_i \mid \mathcal{G}_i(t),\;F_{t-1}  \right)=\Pr\left( \hat{\theta}_1(t)\le y_i \mid F_{t-1}  \right)$. Therefore,
\begin{align*}
\text{LHS of }\eqref{ineq: sub-to-opt} &\le \Pr\left( \hat{\theta}_j(t)\le y_i,\forall\; j\mid \mathcal G_i(t),\; F_{t-1} \right)\\
    &=\Pr\left( \hat{\theta}_1(t)\le y_i \mid F_{t-1}  \right)\cdot \Pr\left(  \hat{\theta}_j(t)\le y_i,\forall\; j\ne 1\mid \mathcal G_i(t),\; F_{t-1}  \right)\\
    &= (1-p_{i,t})\cdot \Pr\left(  \hat{\theta}_j(t)\le y_i,\forall\; j\ne 1\mid \mathcal G_i(t),\;F_{t-1}  \right).
\end{align*}
    Next, consider the probability of selecting the optimal arm 1 under the same conditions. Arm 1 is chosen if its median index $\hat{\theta}_1(t)$ exceeds all others. Thus we have
    \begin{align*}
\Pr\bigl(A(t)=1\mid  \mathcal G_i(t),\;  F_{t-1}\bigr)&\ge \Pr\left(\hat{\theta}_1(t)>y_i\ge \hat{\theta}_j(t), \forall j\mid \mathcal G_i(t),\; F_{t-1} \right)\\
    &=\Pr\left( \hat{\theta}_1(t)> y_i \mid F_{t-1}  \right)\\
    &\quad\quad\quad \quad \cdot \Pr\left(  \hat{\theta}_j(t)\le y_i, \forall\; j\ne 1 \mid \mathcal G_i(t),\; F_{t-1} \right)\\
    & = p_{i,t} \cdot \Pr\left(  \hat{\theta}_j(t)\le y_i, \forall\; j\ne 1 \mid \mathcal G_i(t),\; F_{t-1}  \right).
\end{align*}
 Combining the above two inequalities, we get the first result in the lemma. 
 
For the term $\mathcal{E}_{2,t}$, taking expectations and summing over $t$ yields
\begin{align}
\sum_{t=1}^T\Pr\bigl(\mathcal{E}_{2,t}\bigr)
&=\sum_{t=1}^T
  \mathbb E \!\Bigl[\Pr\bigl(\mathcal{E}_{2,t}\mid\mathcal F_{t-1}\bigr)\Bigr] \nonumber\\
&\le
\sum_{t=1}^T
\mathbb E\!\left[
  \frac{1-p_{i,t}}{p_{i,t}}\,
  \Pr\bigl(A(t)=1,\mathcal G_i(t)\mid\mathcal F_{t-1}\bigr)
\right] \label{eq:using lemma-optiaml}\\
&= \sum_{t=1}^T
\mathbb E\!\left[
 \E\left[ \frac{1-p_{i,t}}{p_{i,t}}\,
  \bOne{A(t)=1,\mathcal G_i(t)}\mid \mathcal{F}_{t-1}\right]
\right] \label{equality}
\\
&=
\sum_{t=1}^T
\mathbb E\!\left[
  \frac{1-p_{i,t}}{p_{i,t}}\,
  \bOne{A(t)=1,\mathcal G_i(t)}
\right].\label{30}
\end{align}
\eqref{eq:using lemma-optiaml} is derived from the first result in the lemma, \eqref{equality} above uses that $p_{i,t}$ is fixed given $\mathcal{F}_{t-1}$.

Let $\tau_k$ be the time of the $k$-th pull of arm~1 ($k\ge1$) and set
$\tau_0:=0$.  Between two consecutive pulls of arm~1, neither its
posteriors nor their empirical mean estimations change, so both $p_{i,t}$ and
$E_1^{\mu(\mathrm{on})}(t)$ is invariant within each block
$\{\tau_k+1,\dots,\tau_{k+1}\}$.  Summing up the sum in~\eqref{30} block
wise we obtain
\begin{align}
\text{RHS of \eqref{30}}
&\le
\sum_{k=0}^{T-1}
\mathbb E\!\left[
  \frac{1-p_{i,\tau_k+1}}{p_{i,\tau_k+1}}
  \sum_{t=\tau_k+1}^{\tau_{k+1}}
    \bOne{A(t)=1,\mathcal{G}_i(t)}
\right] \nonumber\\
&\le
\sum_{k=0}^{T-1}
\mathbb E\!\left[
  \frac{1-p_{i,\tau_k+1}}{p_{i,\tau_k+1}}\,
\right]\label{eq:optimal ori-term}\\
&\le
\sum_{k=0}^{T-1}
\mathbb E\!\left[
  \frac{1-p_{i,\tau_k+1}}{p_{i,\tau_k+1}}\,\bOne{E_1^{\mu(\mathrm{on})}(\tau_k+1)}
\right]+\sum_{k=0}^{T-1}
\mathbb E\!\left[
  \frac{1-p_{i,\tau_k+1}}{p_{i,\tau_k+1}}\,\bOne{ \neg E_1^{\mu(\mathrm{on})}(\tau_k+1)}
\right], \nonumber 
\end{align}
\eqref{eq:optimal ori-term} is derived because each block contains at most one round with $A(t)=1$ and
$E_1^{\mu(\mathrm{on})}(t)$ coincides with $E_1^{\mu(\mathrm{on})}(\tau_k+1)$ throughout the block.
\end{proof}

\begin{proof}[Proof of Lemma \ref{lemma: optim-good-inter}]
For any time $t$, define $\mathrm{MIX}_t:=\max\{p^{(\mathrm{on})}_{i,t},\,p^{(\mathrm{hyb})}_{i,t}\}$. We first prove that when $E_1^{\mu(\mathrm{on})}(t)$ holds, 
\begin{equation}
p_{i,t}
=\Pr\bigl(\hat\theta_1(t)>y_i\mid\mathcal F_{t-1}\bigr)
\;\ge\;
\mathrm{MIX}_t \,.  \label{32}
\end{equation}

Recall that event $E_1^{\mu(\mathrm{on})}(t) = \{ \hat{\mu}_1^{\mathrm{(on)}}(t) >y_i \}$. This implies with $E_1^{\mu(\mathrm{on})}(t)$ being true,  
\[
\left(\{\theta^{(\mathrm{hyb})}_1(t)>y_i\}\cup\{\theta^{(\mathrm{on})}_1(t)>y_i\}\right)
\subseteq\{\hat\theta_1(t)>y_i\}.
\]
Taking conditional probabilities given $\mathcal F_{t-1}$ and using that
$\theta^{(\mathrm{on})}_1(t)$ and $\theta^{(\mathrm{hyb})}_1(t)$ are
independent of $\hat\mu^{(\mathrm{on})}_1(t)$ conditional on
$\mathcal F_{t-1}$, we obtain
\[
p_{i,t}
\;\ge \Pr\left(\{\theta^{(\mathrm{hyb})}_1(t)>y_i\}\cup\{\theta^{(\mathrm{on})}_1(t)>y_i\}\mid \mathcal F_{t-1}\right)  \ge \;\max\{p^{(\mathrm{on})}_{i,t},p^{(\mathrm{hyb})}_{i,t}\}
=\mathrm{MIX}_t
\text{ on $E_1^{\mu(\mathrm{on})}(t)$}.
\]

Further, using $\frac{1-p}{p}=\frac1p-1$ and the monotonicity of $x\mapsto1/x$
on $(0,1]$, \eqref{32} implies that on $E_1^{\mu(\mathrm{on})}(t)$
\[
\frac{1-p_{i,t}}{p_{i,t}}
\le\frac{1-\mathrm{MIX}_t}{\mathrm{MIX}_t}
=\frac1{\mathrm{MIX}_t}-1,
\]
and multiplying by $\bOne{E_1^{\mu(\mathrm{on})}(t)}$ gets the point-wise bound
\[
\frac{1-p_{i,t}}{p_{i,t}}\bOne{E_1^{\mu(\mathrm{on})}(t)}
\;\le\;\left(\frac1{\mathrm{MIX}_t}-1\right)\bOne{E_1^{\mu(\mathrm{on})}(t)}\le\frac1{\mathrm{MIX}_t}-1 .
\]
The last inequality comes from $\bOne{E_1^{\mu(\mathrm{on})}(t)}\le 1$.

Since $\mathrm{MIX}_t=\max\{p^{(\mathrm{on})}_{i,t},p^{(\mathrm{hyb})}_{i,t}\}$, 
\begin{equation}
\frac1{\mathrm{MIX}_t}-1
\;\le\;
\min\left\{
  \frac1{p^{(\mathrm{on})}_{i,t}}-1,\;
  \frac1{p^{(\mathrm{hyb})}_{i,t}}-1
\right\}.\label{33}
\end{equation}

Applying~\eqref{32}–\eqref{33} at time $\tau_k+1$, we obtain that for each $k$, 
\begin{equation}
\frac{1-p_{i,\tau_k+1}}{p_{i,\tau_k+1}}\bOne{E_1^{\mu(\mathrm{on})}(\tau_k+1)}
\;\le\;
\min\Bigl\{
  \frac1{p^{(\mathrm{on})}_{i,\tau_k+1}}-1,\;
  \frac1{p^{(\mathrm{hyb})}_{i,\tau_k+1}}-1
\Bigr\},\label{34}
\end{equation}
and hence
\begin{equation}
\sum_{k=0}^{T-1}
\mathbb E\!\left[
  \frac{1-p_{i,\tau_k+1}}{p_{i,\tau_k+1}}\,\bOne{E_1^{\mu(\mathrm{on})}(\tau_k+1)}
\right]
\;\le\;
\sum_{k=0}^{T-1}
\min\Bigl\{
  \mathbb E\bigl[\tfrac1{p^{(\mathrm{on})}_{i,\tau_k+1}}-1\bigr],\;
  \mathbb E\bigl[\tfrac1{p^{(\mathrm{hyb})}_{i,\tau_k+1}}-1\bigr]
\Bigr\}.\label{35}
\end{equation}
\end{proof}

\begin{proof}[Proof of Lemma \ref{lemma:hybrid bound-optimal}]
 Recall that $p_{i,t}^{\mathrm{(hyb)}}$ denotes the
probability that $\theta_1^{\mathrm{(hyb)}}(t)$ exceeds $y_i$ given $F_{t-1}$, and for the
algorithm with Gaussian priors we have
$$\theta_1^{\mathrm{(hyb)}}(t)\sim \mathcal{N}\left(\hat\mu^{\mathrm{(hyb)}}_1(t)+\frac{N_1 V_1}{T_1(t)+N_1},\,\frac{1}{T_1(t)+N_1+1}\right).$$

Given $F_{\tau_k}$, let $\Theta_k$ denote a Gaussian random variable  sampled from the above gaussian distribution with $T_1(t)=k$. 
Let $G_k$ be the geometric random variable denoting the number of
consecutive independent trials until a
sample of $\Theta_k$ becomes greater than $y_i$. Then
\[
p_{i,\tau_k+1} \;=\; \Pr(\Theta_k > y_i \mid \mathcal{F}_{\tau_k})
\]
and hence
\[
\mathbb{E}\!\left[\frac{1}{p_{i,\tau_k+1}}\right]
= \mathbb{E}\bigl[\mathbb{E}[G_k \mid \mathcal{F}_{\tau_k}]\bigr]
= \mathbb{E}[G_k].
\]
We first bound $\mathbb{E}[G_k]$ by a constant for all $k$.

Fix any integer $r \ge 1$. Let $z = \sqrt{\ln r}$ and let the random
variable $\operatorname{MAX}_r$ denote the maximum of $r$ independent
samples of $\Theta_k$. For brevity, write
$\hat\mu_1^{\mathrm{(hyb)}} = \hat\mu_1^{\mathrm{(hyb)}}(\tau_k+1)$. Then, for any integer $r \ge 1$,
\begin{align}
\Pr(G_k \le r)
\ge &\Pr(\operatorname{MAX}_r > y_i) \notag\\
\ge &\Pr\!\left(\operatorname{MAX}_r >
\hat\mu_1^{\mathrm{(hyb)}} + \frac{z}{\sqrt{k+N_1+1}}+ \frac{N_1V_1}{k+N_1}\ge y_i\right) \notag\\
= &\mathbb{E}\Bigl[
\mathbb{E}\bigl[\bOne{
\operatorname{MAX}_r >
\hat\mu_1^{\mathrm{(hyb)}} + \frac{z}{\sqrt{k+N_1+1}}+\frac{N_1V_1}{k+N_1} \ge y_i}
\mid \mathcal{F}_{\tau_k}\bigr]\Bigr] \notag\\
= &\mathbb{E}\Bigl[
\bOne{{
\hat\mu_1^{\mathrm{(hyb)}} + \frac{z}{\sqrt{k+N_1+1}}+\frac{N_1V_1}{k+N_1} \ge y_i
}} \cdot \nonumber \\
&\Pr \!\left(\operatorname{MAX}_r >
\hat\mu_1^{\mathrm{(hyb)}} + \frac{z}{\sqrt{k+N_1+1}}+\frac{N_1V_1}{k+N_1}
\mid \mathcal{F}_{\tau_k}\right)
\Bigr]. \label{eq:gj-le-r}
\end{align}

Given $F_{\tau_k}$, $\Theta_k$ is Gaussian with distribution
$\mathcal{N}\left(\hat\mu^{\mathrm{(hyb)}}_1(t)+\frac{N_1 V_1}{k+N_1},\,1/(k+N_1+1)\right)$, using a Gaussian tail (Lemma~\ref{lemma: gaussian ineq}) we have
\begin{align*}
\Pr\!\left(\operatorname{MAX}_r >
\hat\mu_1^{\mathrm{(hyb)}} + \frac{z}{\sqrt{k+N_1+1}}+\frac{N_1V_1}{k+N_1}
\;\middle|\; \mathcal{F}_{\tau_k}\right)
&\ge 1 -
\left(1 - \frac{1}{\sqrt{2\pi}}\,
\frac{z}{z^2+1}\,
e^{-z^2/2}\right)^{\!r} \\
&= 1 - \left(
1 - \frac{1}{\sqrt{2\pi}}\,
\frac{\sqrt{\ln r}}{\ln r + 1}\,
\frac{1}{\sqrt{r}}
\right)^{\!r}.
\end{align*}
For sufficiently large $r$ (in particular, for all
$r \ge e^{11}$), we have
\[
\left(
1 - \frac{1}{\sqrt{2\pi}}\,
\frac{\sqrt{\ln r}}{\ln r + 1}\,
\frac{1}{\sqrt{r}}
\right)^{\!r}
\le \exp\!\left(-\frac{\sqrt{r}}{4\pi r \ln r}\right)
\le \frac{1}{r^2},
\]
and hence for all $r \ge e^{11}$,
\begin{equation}\label{eq:maxr-lower}
\Pr\!\left(\operatorname{MAX}_r >
\hat\mu_1^{\mathrm{(hyb)}} + \frac{z}{\sqrt{j+N_1+1}}+\frac{N_1V_1}{j+N_1}
\;\middle|\; \mathcal{F}_{\tau_j}\right)
\;\ge\; 1 - \frac{1}{r^2}.
\end{equation}

Substituting \eqref{eq:maxr-lower} into \eqref{eq:gj-le-r} yields, for
$r \ge e^{11}$,
\begin{equation}\label{eq:gj-le-r-2}
\Pr(G_k \le r)
\;\ge\;
\left(1 - \frac{1}{r^2}\right)
\Pr\!\left(
\hat\mu_1^{\mathrm{(hyb)}} + \frac{z}{\sqrt{k+N_1+1}}+\frac{N_1V_1}{k+N_1} \ge y_i
\right).\nonumber
\end{equation}

Next we obtain a lower bound on the second term using Chernoff--Hoeffding
bound (Lemma~\ref{lemma: chernoff-hoeffding-1}). 
\begin{align}
    \Pr\!\left(
\hat\mu_1^{\mathrm{(hyb)}} + \frac{z}{\sqrt{k+N_1+1}}+\frac{N_1V_1}{k+N_1} \ge y_i
\right)
 \ge &\Pr\!\left(
\hat\mu_1^{\mathrm{(hyb)}} + \frac{z}{\sqrt{k+N_1+1}}+\frac{N_1V_1}{k+N_1} \ge \mu_1^{\mathrm{(on)}}
\right)  \nonumber \\
\ge &  \Pr\!\left(\hat\mu_1^{\mathrm{(hyb)}}+
 \frac{z}{\sqrt{k+N_1+1}} \ge \frac{k\cdot \mu^{\mathrm{(on)}}_1
           + N_1\mu^{\mathrm{(off)}}_1}
          {k+N_1}\right) \nonumber\,,
\end{align}
where the last inequality is from a simple decomposition for the hybrid mean like (\ref{eq:hyb-mean-abs}). 

Applying the Chernoff bound at time $t = \tau_k + 1$ (so that
$k_1(t) = k$), we get,
for any $x>0$,
\[
\Pr\!\left(
\hat\mu_1^{\mathrm{(hyb)}} + \frac{1}{k+N_1} + \frac{x}{\sqrt{k+N_1+1}} \ge \mu_1^{\mathrm{(hyb)}}
\right)
\ge 1 - e^{-2x^2}.
\]
Here, the term $\frac{1}{k+N_1+1}$ was added to $\hat\mu_1^{\mathrm{(hyb)}}$ to adjust for the fact that $\hat\mu_1^{\mathrm{(hyb)}}$ is not simply average of the past $k+N_1$ samples, instead, it is the sum of past $k+N_1$ samples divided by $k+N_1+1$. Now, we use $x:= z-\frac{1}{\sqrt{k+N_1+1}}\ge z-\frac{1}{\sqrt{N_1+1}}$for all $k\ge 0$, we obtain
\begin{align*}
    \Pr\!\left(
\hat\mu_1^{\mathrm{(hyb)}} + \frac{z}{\sqrt{k+N_1+1}}+\frac{N_1V_1}{k+N_1} \ge \mu_1
\right)
&\ge 1 - \exp{\left(-2\left(z-\frac{1}{\sqrt{N_1+1}}\right)^2\right)}\\
& = 1 - \frac{1}{r^2}\exp{\left(\frac{4}{\sqrt{N_1+1}}\sqrt{\log r}-\frac{2}{N_1+1}\right)}.
\end{align*}

Since $y_i \le \mu_1$, this further implies
\begin{equation}\label{eq:mu1hat-ge-yi}
\Pr\!\left(
\hat\mu_1^{\mathrm{(hyb)}} + \frac{z}{\sqrt{k+N_1+1}} +\frac{N_1V_1}{k+N_1}\ge y_i
\right)
\ge 1 - \frac{1}{r^2}\exp{\left(\frac{4}{\sqrt{N_1+1}}\sqrt{\log r}-\frac{2}{N_1+1}\right)}.
\end{equation}

Noting that for $r\ge \exp\left(\frac{28+16\sqrt{3}}{N_1+1} \right)$, we have 
\begin{equation}
    \frac{1}{r^2}\exp{\left(\frac{4}{\sqrt{N_1+1}}\sqrt{\log r}-\frac{2}{N_1+1}\right)}\le \frac{1}{r^{1.5}} \label{eq: key constant 1}
\end{equation}

and combining this with \eqref{eq:gj-le-r-2},
\eqref{eq:mu1hat-ge-yi} and \eqref{eq: key constant 1} yields, for all $r \ge \max\left\{e^{11}, \exp\left(\frac{28+16\sqrt{3}}{N_1+1}\right)\right\}$,
\[
\Pr(G_k \le r) \ge 1 - \frac{1}{r^2} - \frac{1}{r^{1.5}}.
\]

We can now bound $\mathbb{E}[G_k]$:
\begin{align}
\mathbb{E}[G_k]
&= \sum_{r=0}^{\infty} \Pr(G_k \ge r) \notag\\
&= 1 + \sum_{r=1}^{\infty} \Pr(G_k \ge r) \notag\\
&\le 1 + e^{11}
+ \sum_{r\ge 1}
\left(\frac{1}{r^2} + \frac{1}{r^{1.5}}\right) \notag\\
&\le 1 + \max\left\{e^{11}, \exp\left(\frac{28+16\sqrt{3}}{N_1+1}\right)\right\} + 2 + 2.7\nonumber
\\\;&\le\; \max\left\{e^{11}, \exp\left(\frac{28+16\sqrt{3}}{N_1+1}\right)\right\} + 6. \label{eq:Gj-const}
\end{align}
Hence
\[
\mathbb{E}\!\left[\frac{1}{p_{i,\tau_k+1}} - 1\right]
= \mathbb{E}[G_k] - 1
\le \max\left\{e^{11}, \exp\left(\frac{28+16\sqrt{3}}{N_1+1}\right)\right\} + 5
\quad\text{for all } k.
\]

Next we derive a tighter bound for large $k$.   $$k > L_i^{\mathrm{(hyb)}}(T) = \left(\frac{288\ln(T\Delta_i^2 + e^{6})}{\Delta_i^2} -N_1\right)_+.$$
Again fix $r \ge 1$, let $z=\sqrt{\ln r}$, and define $\operatorname{MAX}_r$
as before. Then
\begin{align}
\Pr(G_k \le r)
&\ge \Pr(\operatorname{MAX}_r > y_i) \notag\\
&\ge \Pr\!\left(
\operatorname{MAX}_r >
\hat\mu_1^{\mathrm{(hyb)}} + \frac{z}{\sqrt{k+N_1+1}}+\frac{N_1V_1}{k+N_1} - \frac{\Delta_i}{6} \ge y_i
\right) \notag\\
&= \mathbb{E}\bigl[
\bOne{
\hat\mu_1^{\mathrm{(hyb)}} + \frac{z}{\sqrt{k+N_1+1}}+\frac{N_1V_1}{k+N_1} + \frac{\Delta_i}{6}
\ge \mu_1^{\mathrm{(on)}}
}\\
&\quad\quad\quad \qquad \cdot\Pr\!\left(
\operatorname{MAX}_r >
\hat\mu_1^{\mathrm{(hyb)}} + \frac{z}{\sqrt{k+N_1+1}}+\frac{N_1V_1}{k+N_1} - \frac{\Delta_i}{6}
\;\middle|\; \mathcal{F}_{\tau_k}
\right)
\bigr], \label{eq:gj-large-j}
\end{align}
where we used that $y_i = \mu_1 - \Delta_i/3$.

By the definition of $L_i^{\mathrm{(hyb)}}(T)$ we have
\[
k+N_1 \;\ge\; \frac{288\ln(T\Delta_i^2+e^6)}{\Delta_i^2}
,
\]
and hence
\[
\frac{\sqrt{2\ln(T\Delta^2+e^6)}}{\sqrt{k+N_1+1}}
\le \frac{\Delta_i}{12}.
\]
Therefore, for all $r \le (T\Delta_i^2 + e^6)^2$,
\[
\frac{z}{\sqrt{k+N_1+1}}-\frac{\Delta_i}{6}
= \frac{\sqrt{\log r}}{\sqrt{k+N_1+1}} - \frac{\Delta_i}{6}
\le -\frac{\Delta_i}{12}.
\]
Using the upper tail bound for a Gaussian random variable
(Lemma~\ref{lemma: gaussian ineq}) we obtain, for any realization $F_{\tau_k}$,
\[
\Pr\!\left(
\Theta_k >
\hat\mu_1^{\mathrm{(hyb)}}(\tau_k+1)+\frac{N_1V_1}{k+N_1} - \frac{\Delta_i}{12}
\;\middle|\; F_{\tau_k}
\right)
\ge 1 - \frac{1}{2}
\exp\!\left(-\frac{(k+1)\Delta_i^2}{288}\right)
\ge 1 - \frac{1}{2(T\Delta_i^2 + e^{32})},
\]
which implies
\[
\Pr\!\left(
\operatorname{MAX}_r >
\hat\mu_1^{\mathrm{(hyb)}}(\tau_k+1) + \frac{z}{\sqrt{k+N_1+1}} +\frac{N_1V_1}{k+N_1} - \frac{\Delta_i}{6}
\;\middle|\; F_{\tau_k}
\right)
\ge 1 -  \frac{1}{2^r(T\Delta_i^2 + e^{6})^r}.
\]

Moreover, for any $t \ge \tau_k+1$ we have $T_1(t) \ge k$, and by
Chernoff--Hoeffding bound (Lemma~\ref{lemma: chernoff-hoeffding-1}),

\begin{align*}
    \Pr\!\left(
\hat\mu_1^{\mathrm{(hyb)}}(t) + \frac{z}{\sqrt{k+N_1+1}}+\frac{N_1V_1}{k+N_1} - \frac{\Delta_i}{6}
\ge y_i
\right)
&\ge \Pr\!\left(
\hat\mu_1^{\mathrm{(hyb)}}(t) \ge \mu_1^{\mathrm{(hyb)}} - \frac{\Delta_i}{6}
\right)\\
 &   \ge 1 - \exp\!\left(-\frac{2(T_1(t)+N_1)\Delta_i^2}{36}\right)\\
 &\ge 1 - \frac{1}{(T\Delta_i^2 + e^{6})^{16}}.
\end{align*}

Let $T' = (T\Delta_i^2+e^6)^2$. Therefore,
\begin{align*}
    \E[G_k]& \le \sum_{r=1}^\infty \Pr(G_k\ge r)\\
    & \le 1+\sum_{r=1}^{T'} \Pr(G_k\ge r) +\sum_{r=T'+1}^{\infty} \Pr(G_k\ge r)\\
    &\le 1+\sum_{r=1}^{T'} \left(\frac{1}{(2\sqrt{T'})^r}+\frac{1}{(T')^8}\right)+\sum_{r=T'+1}^{\infty} \left(\frac{1}{r^2}+\frac{1}{r^{1.5}}\right)\\ &
    \le 1+\frac{1}{\sqrt{T'}}+\frac{1}{(T')^7}+\frac{2}{T'} +\frac{3}{\sqrt{T'}}\\
    &\le 1+\frac{5}{T\Delta_i^2+e^6}.
\end{align*}
The above upper bound shows that $\E\left[\frac{1}{p_{i,\tau_k+1}^{\mathrm{(hyb)}}} \right]-1 =\E\left[G_k\right]-1\le \frac{5}{T\Delta_i^2}$ for $k > L_i^{\mathrm{(hyb)}}(T)$.
\end{proof}

\begin{proof}[Proof of Lemma \ref{lemma: optim-bad-term}]
On $E_1^{\mu(\mathrm{on})}(\tau_k+1)$ we have $\hat\mu^{(\mathrm{on})}_1(\tau_k+1)\le y_i$, hence

\begin{equation}
    \Big\{\operatorname{median}\{\theta^{(\mathrm{on})}_1(\tau_k{+}1),\,\hat\mu^{(\mathrm{on})}_1(\tau_k{+}1),\,\theta^{(\mathrm{hyb})}_1(\tau_k{+}1)\}>y_i\Big\}
=
\Big\{\theta^{(\mathrm{on})}_1(\tau_k{+}1)>y_i,\ \theta^{(\mathrm{hyb})}_1(\tau_k{+}1)>y_i\Big\}.\label{eq: median-trans-bad}
\end{equation}

We denote the right-hand of \eqref{eq: median-trans-bad} as
\[
p_k^{\wedge}
:=\Pr\!\left(\theta^{(\mathrm{on})}_1(\tau_k{+}1)>y_i,\ \theta^{(\mathrm{hyb})}_1(\tau_k{+}1)>y_i \mid \mathcal F_{\tau_k}\right).
\]

Conditioned on $\mathcal F_{\tau_k}$, draw i.i.d. copies
\[
\big(\Theta^{(\mathrm{on})}_{k,\ell},\Theta^{(\mathrm{hyb})}_{k,\ell}\big)_{\ell\ge 1}
\stackrel{\text{i.i.d.}}{\sim}
\big(\theta^{(\mathrm{on})}_1(\tau_k{+}1),\,\theta^{(\mathrm{hyb})}_1(\tau_k{+}1)\big)\mid \mathcal F_{\tau_k},
\]
and define the (conditional) geometric hitting time
\[
G_k := \min\Big\{\ell\ge 1:\ \Theta^{(\mathrm{on})}_{k,\ell}>y_i,\ \Theta^{(\mathrm{hyb})}_{k,\ell}>y_i\Big\}.
\]
Then $G_k\mid\mathcal F_k\sim\mathrm{Geom}(p_k^{\wedge})$ (supported on $\{1,2,\dots\}$), so
\[
\frac{1}{p_k^{\wedge}}=\mathbb E[G_k\mid \mathcal F_k].
\]

Fix any $\alpha\in(1,\tfrac32)$ and let $\beta := \frac{\alpha}{\alpha-1}$. By Jensen's inequality (since $x\mapsto x^\alpha$ is convex for $\alpha>1$),
\[
\Big(\frac{1}{p_k^{\wedge}}\Big)^\alpha
=\big(\mathbb E[G_k\mid \mathcal F_k]\big)^\alpha
\le \mathbb E[G_k^\alpha\mid \mathcal F_k],
\qquad\Rightarrow\qquad
\mathbb E\Big[\Big(\frac{1}{p_k^{\wedge}}\Big)^\alpha\Big]
\le \mathbb E[G_k^\alpha].
\]

Next we bound $\mathbb E[G_k^\alpha]$ uniformly in $k$.
For any integer $r\ge 1$,
\[
\{G_k>r\}
\subseteq
\Big\{\max_{1\le \ell\le r}\Theta^{(\mathrm{on})}_{k,\ell}\le y_i\Big\}
\ \cup\
\Big\{\max_{1\le \ell\le r}\Theta^{(\mathrm{hyb})}_{k,\ell}\le y_i\Big\}.
\]

Recall in Lemma~\ref{lemma: vanilla optim-term} and Lemma~\ref{lemma:hybrid bound-optimal} where we prove a constant upper bound for all $k\ge0$, for $r\ge r_0= e^{64}$, we have
\begin{align*}
    \Pr\left(\max_{1\le \ell\le r}\Theta^{(\mathrm{on})}_{k,\ell}\le y_i\right)&\le \frac{1}{r^2}+\frac{1}{r^{3/2}}\, ,\\
    \Pr\left(\max_{1\le \ell\le r}\Theta^{(\mathrm{hyb})}_{k,\ell}\le y_i\right)&\le \frac{1}{r^2}+\frac{1}{r^{3/2}}\,.   
\end{align*}

Therefore,

\[
\Pr(G_k>r) \le \frac{2}{r^2}+\frac{2}{r^{3/2}} \le \frac{4}{r^{3/2}}.
\]
For integer-valued $X\ge 1$, we first note that
$$
(r+1)^\alpha -r^\alpha \le \alpha (r+1)^{\alpha-1}\le \alpha 2^{\alpha-1}r^{\alpha-1},
$$
Then we have
\begin{align*}
    X^\alpha &= \sum_{r=1}^{X-1} \left((r+1)^\alpha -r^\alpha\right)+1\\
    & \le \sum_{r=1}^{X} \left((r+1)^\alpha -r^\alpha\right)\\
    &\le \alpha 2^{\alpha-1}  \sum_{r=1}^{X}  r^{\alpha-1}\\
    & \le \alpha 2^{\alpha-1}  \sum_{r=1}^{\infty}  r^{\alpha-1} \bOne{ r\le X }\\
    &=\alpha 2^{\alpha-1}  \sum_{r=1}^{\infty}  r^{\alpha-1} \bOne{  X\ge r }
\end{align*}

Taking exception on both sides, let $C_\alpha = \alpha2^{\alpha-1}$,we have

\[
\mathbb E[X^\alpha]\le C_\alpha \sum_{r=1}^\infty r^{\alpha-1}\mathbb P(X\ge r),
\]
hence with $X=G_k$,
\[
\mathbb E[G_k^\alpha]
\le C_\alpha\left( \sum_{r=1}^{r_0-1}r^{\alpha-1} +\sum_{r=r_0}^\infty r^{\alpha-1}\cdot \frac{4}{r^{3/2}}\right)
= \,C_\alpha\left(C_{r_0}+4 \sum_{r=r_0}^\infty \,r^{\alpha-\frac{5}{2}}
\right)\le M_\alpha <\infty,
\]
where the last inequality uses $\alpha<\tfrac{3}{2}$ so that $\alpha-\tfrac{5}{2}<-1$, and we use $C_{r_0}$ to denote $\sum_{r=1}^{r_0-1} r^{\alpha-1}$ as a constant. 

Consequently,
\begin{equation}\label{eq:uniform-moment}
\sup_{k\ge 0}\ \mathbb E\Big[\Big(\frac{1}{p_k^{\wedge}}\Big)^\alpha\Big]
\le M_\alpha <\infty.
\end{equation}

By H\"older's inequality (Lemma~\ref{lemma: HOLDER ine}) with exponents $(\alpha,\beta)$,
\begin{align*}
    \E\Big[\frac{1}{p_{i,\tau_k+1}} \bOne{\neg E_1^{\mu(\mathrm{on})}(\tau_k+1)}\Big]&
=\E\Big[\frac{1}{p_k^{\wedge}}    
\bOne{\neg E_1^{\mu(\mathrm{on})}(\tau_k+1)}\Big]\\
&\le
\Big(\E\Big[\Big(\frac{1}{p_k^{\wedge}}\Big)^\alpha\Big]\Big)^{1/\alpha}
\cdot \Pr(\neg E_1^{\mu(\mathrm{on})}(\tau_k+1))^{1/\beta}
\\& \le
M_\alpha^{1/\alpha}\,  \Pr(\neg E_1^{\mu(\mathrm{on})}(\tau_k+1))^{1/\beta}.
\end{align*}
Noting that, by approximation, taking $r_0=e^{64}$, $(C_{r_0})^{1/\alpha}\approx e^{64}$.

Finally, since at time $\tau_k+1$ the online empirical mean of arm~$1$ is computed from exactly $k$
i.i.d. Gaussian samples with mean $\mu_1$ and variance $1$, we have the standard tail bound
\[
\Pr(\neg E_1^{\mu(\mathrm{on})}(\tau_k+1))
=\mathbb P\big(\hat\mu^{(\mathrm{on})}_{1,k}\le y_i\big)
\le \exp\Big(-\frac{k(\mu_1-y_i)^2}{2}\Big)
= \exp\Big(-\frac{k\Delta_i^2}{18}\Big),
\]
where we used $y_i=\mu_1-\Delta_i/3$.
Therefore,
\begin{equation}
    \sum_{k=1}^{T-1}\E\Big[\frac{1}{p_{i,\tau_k+1}} \bOne{\neg E_1^{\mu(\mathrm{on})}(\tau_k+1)}\Big]
\le
M_\alpha^{1/\alpha}\sum_{k=1}^{\infty}\exp\Big(-\frac{k\Delta_i^2}{18\beta}\Big)
\le \frac{C}{\Delta_i^2},\label{eq:optimal const-term}
\end{equation}
for some constant $C>0$ depending only on $\alpha$ (hence on $\beta$), which completes the bound
for the $\1\{\neg E_1^{\mu(\mathrm{on})}(\tau_k+1)\}$-part of Lemma \ref{lemma: median-trans}. 
\end{proof}

\section{Useful Lemmas}

\begin{lemma}[Chernoff-Hoeffding Bound]\label{lemma: chernoff-hoeffding-1} Let $X_1,\dots,X_n$ be independent random variables in $[0,1]$ with $\E[X_i]=\mu_i$ (not necessarily equal). Let $X=\frac{1}{n}\sum_{i=1}^n X_i$, $\mu=\E[X] = \frac{1}{n}\sum_{i=1}^n \mu_i$. Then for any $0<\epsilon<1-\mu$,
$$
\Pr(X\ge \mu+\epsilon)\le e^{-2n\epsilon^2},
$$
and, for any $0<\epsilon<\mu$,
$$
\Pr(X\le \mu-\epsilon )\le e^{-2n\epsilon^2}.
$$
\end{lemma}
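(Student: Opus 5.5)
The plan is the standard exponential-moment (Cramér--Chernoff) argument combined with Hoeffding's lemma for bounded variables. I will prove the upper-tail inequality and obtain the lower tail by symmetry. The conditions $0<\epsilon<1-\mu$ and $0<\epsilon<\mu$ only guarantee that the events are non-trivial; they play no role in the argument.

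\emph{Upper tail.} Fix $\lambda>0$. By Markov's inequality applied to $e^{\lambda n X}$ and independence of the $X_i$,
\begin{equation*}
\Pr(X\ge \mu+\epsilon)\le e^{-\lambda n(\mu+\epsilon)}\,\E\bigl[e^{\lambda\sum_i X_i}\bigr]
= e^{-\lambda n\epsilon}\prod_{i=1}^n \E\bigl[e^{\lambda (X_i-\mu_i)}\bigr].
\end{equation*}
The key step is Hoeffding's lemma: for a random variable $Y$ with $\E[Y]=0$ and $a\le Y\le b$, we have $\E[e^{\lambda Y}]\le e^{\lambda^2(b-a)^2/8}$. Its proof goes in three moves.
\begin{itemize}
\item By convexity of $y\mapsto e^{\lambda y}$, bound $e^{\lambda y}$ by the chord through $(a,e^{\lambda a})$ and $(b,e^{\lambda b})$.
\item Take expectations, which gives $\E[e^{\lambda Y}]\le e^{L(h)}$ with $h=\lambda(b-a)$, $p=-a/(b-a)$, and $L(h)=-hp+\log(1-p+pe^{h})$.
\item Check that $L(0)=L'(0)=0$ and $L''(h)\le 1/4$, since $L''$ has the form $q(1-q)$ for some $q\in[0,1]$. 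Taylor's theorem then gives $L(h)\le h^2/8$.
\end{itemize}
Applying the lemma with $Y=X_i-\mu_i\in[-\mu_i,1-\mu_i]$, so that $b-a=1$, yields
\begin{equation*}
\Pr(X\ge\mu+\epsilon)\le \exp\bigl(-\lambda n\epsilon+n\lambda^2/8\bigr).
\end{equation*}
Optimizing at $\lambda=4\epsilon$ gives $e^{-2n\epsilon^2}$.

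\emph{Lower tail.} Apply the upper-tail bound to $X_i'=1-X_i$. These are independent, lie in $[0,1]$, and have means $1-\mu_i$. The event $\{X\le\mu-\epsilon\}$ is exactly the event that the average of the $X_i'$ exceeds $(1-\mu)+\epsilon$, which gives the second inequality.

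The only nontrivial step is Hoeffding's lemma, specifically the bound $L''\le 1/4$. I would write $L''(h)=\frac{pe^{h}}{1-p+pe^{h}}\bigl(1-\frac{pe^{h}}{1-p+pe^{h}}\bigr)$ and use $q(1-q)\le 1/4$. Everything else is routine.
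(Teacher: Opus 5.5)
Your proof is correct and complete. The Markov/exponential-moment step, Hoeffding's lemma via the chord bound and $L''(h)=q(1-q)\le 1/4$, the choice $\lambda=4\epsilon$, and the reflection $X_i\mapsto 1-X_i$ for the lower tail together make up the classical Hoeffding argument, and you are right that the restrictions $\epsilon<1-\mu$ and $\epsilon<\mu$ are superfluous. The paper gives no proof of this lemma and quotes it as a standard tool, so your derivation supplies the textbook proof it relies on.
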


\begin{lemma}[Concentration inequality of Gaussian variables \citep{abramowitz2006handbook}]\label{lemma: gaussian ineq} For a Gaussian distributed random variable $Z$ with mean $\mu$ and variance $\sigma^2$, for any $x>0$,

$$
\Pr(Z>\mu+x\sigma)\ge \frac{1}{\sqrt{2\pi}}\frac{x}{x^2+1}e^{-x^2/2}.
$$
\end{lemma}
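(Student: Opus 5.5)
We first reduce to the standard normal case and then prove a Mills-ratio type inequality by a monotonicity argument. Write $Z=\mu+\sigma W$ with $W\sim\mathcal N(0,1)$. Since $\sigma>0$, we have $\Pr(Z>\mu+x\sigma)=\Pr(W>x)=:Q(x)$. It therefore suffices to show, for all $x>0$,
\begin{equation*}
Q(x)\;\ge\; g(x):=\frac{x}{x^2+1}\,\phi(x),\qquad \phi(x):=\frac{1}{\sqrt{2\pi}}e^{-x^2/2}.
\end{equation*}

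The plan is to study the difference $h(x):=Q(x)-g(x)$ on $[0,\infty)$ and show that it is nonincreasing with limit $0$ at infinity. This forces $h\ge 0$.

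\textbf{Step 1 (limit).} As $x\to\infty$ we have $Q(x)\to 0$. Also $g(x)\le \phi(x)/x\to 0$, so $h(x)\to 0$.

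\textbf{Step 2 (derivative).} We use $Q'(x)=-\phi(x)$ and $\phi'(x)=-x\phi(x)$, together with
\begin{equation*}
\frac{d}{dx}\frac{x}{1+x^2}=\frac{1-x^2}{(1+x^2)^2}.
\end{equation*}
By the product rule,
\begin{equation*}
g'(x)=\phi(x)\left(\frac{1-x^2}{(1+x^2)^2}-\frac{x^2}{1+x^2}\right)=\phi(x)\,\frac{1-2x^2-x^4}{(1+x^2)^2}.
\end{equation*}
Hence
\begin{equation*}
h'(x)=-\phi(x)-g'(x)=-\phi(x)\,\frac{(1+x^2)^2+1-2x^2-x^4}{(1+x^2)^2}=-\frac{2\phi(x)}{(1+x^2)^2}<0.
\end{equation*}

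\textbf{Step 3 (conclusion).} Since $h$ is strictly decreasing on $[0,\infty)$ and tends to $0$, we get $h(x)>\lim_{s\to\infty}h(s)=0$ for every finite $x\ge 0$. This gives $Q(x)\ge g(x)$, which is exactly the claimed bound after undoing the standardization.

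The only step requiring care is the algebra in Step 2, namely verifying that the numerator collapses to the constant $2$. Everything else is routine. If that cancellation had failed, the fallback would be the integration-by-parts identity
\begin{equation*}
Q(x)=\frac{\phi(x)}{x}-\int_x^\infty \frac{\phi(s)}{s^2}\,ds,
\end{equation*}
combined with the bound $\int_x^\infty \phi(s)s^{-2}\,ds\le Q(x)/x^2$, which also yields $Q(x)(1+x^{-2})\ge \phi(x)/x$.
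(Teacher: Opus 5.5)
Your proof is correct. The numerator in Step 2 does collapse, $(1+x^2)^2+1-2x^2-x^4=2$, so $h'(x)=-2\phi(x)/(1+x^2)^2<0$. Combined with $h(x)\to 0$, this gives $h(x)>h(x+1)\ge 0$ for every $x\ge 0$. Your fallback via integration by parts is also valid.

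The paper does not prove this statement at all. It imports the bound as a known fact from the Abramowitz--Stegun handbook, so your route differs only in that you supply a self-contained derivation where the paper relies on a citation.

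Your monotonicity argument buys self-containment and a bit more information. Integrating $h'$ over $[x,\infty)$ gives the exact slack $Q(x)-\frac{x}{1+x^2}\phi(x)=\int_x^\infty \frac{2\phi(s)}{(1+s^2)^2}\,ds>0$. This shows the inequality is strict and quantifies how loose it is. The citation buys only brevity.

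Your explicit use of $\sigma>0$ in the standardization is also appropriate. The statement fails in the degenerate case $\sigma=0$, where the left side is $0$ and the right side is positive.
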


\begin{lemma}[H\"older's inequality]\label{lemma: HOLDER ine}
    Let $p,q\in(1,\infty)$ be conjugate exponents such that $\frac{1}{p}+\frac{1}{q}=1$. Let $X$ and $Y$ be real-valued random variables defined on the same probability space. If $\E[|X|^p]< \infty$ and $\E[|Y|^q]< \infty$, then

    $$
    \E[|XY|]\le (\E[|X|^p])^{\frac{1}{p}}(\E[|Y|^q])^{\frac{1}{q}}.
    $$
    
\end{lemma}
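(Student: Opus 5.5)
We have to prove the last statement, Lemma (Hölder's inequality): for conjugate exponents $p,q\in(1,\infty)$ and random variables with $\E[|X|^p]<\infty$, $\E[|Y|^q]<\infty$, show $\E[|XY|]\le (\E[|X|^p])^{1/p}(\E[|Y|^q])^{1/q}$. The route is the standard one: Young's inequality applied pointwise, then normalization.

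First I will dispose of the degenerate cases. If $\E[|X|^p]=0$, then $X=0$ almost surely. Hence $XY=0$ almost surely and both sides vanish. The same holds if $\E[|Y|^q]=0$. So I may assume $A:=(\E[|X|^p])^{1/p}>0$ and $B:=(\E[|Y|^q])^{1/q}>0$, both finite by hypothesis.

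Next I will establish Young's inequality: for $a,b\ge 0$,
\[
ab\le \frac{a^p}{p}+\frac{b^q}{q}.
\]
If $a=0$ or $b=0$ this is trivial. Otherwise write $ab=\exp\bigl(\tfrac1p\log a^p+\tfrac1q\log b^q\bigr)$. Since $\exp$ is convex and $\tfrac1p+\tfrac1q=1$, this is at most $\tfrac1p a^p+\tfrac1q b^q$. This convexity step is the only nontrivial ingredient, and it is elementary.

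Then I will apply Young pointwise with $a=|X|/A$ and $b=|Y|/B$:
\[
\frac{|XY|}{AB}\le \frac{|X|^p}{pA^p}+\frac{|Y|^q}{qB^q}.
\]
The right side is integrable. Taking expectations and using linearity and monotonicity of expectation for nonnegative variables gives
\[
\frac{\E|XY|}{AB}\le \frac1p+\frac1q=1.
\]
Multiplying through by $AB$ yields the claim, and in particular shows $\E|XY|<\infty$.

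I expect no real obstacle. The only points requiring care are the zero-norm normalization case and noting that all expectations involved are of nonnegative variables, so they are well defined.
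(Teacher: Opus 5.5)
Your proof is correct and complete. You handle the degenerate case $\E[|X|^p]=0$ (or $\E[|Y|^q]=0$) properly. Your derivation of Young's inequality from convexity of $\exp$ is valid. Normalizing by finite $A,B>0$ and taking expectations of nonnegative variables then gives the bound, along with finiteness of $\E|XY|$.

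The paper gives no proof of this lemma. It lists it among the ``Useful Lemmas'' as a standard tool, used with exponents $(\alpha,\beta)$, $\beta=\alpha/(\alpha-1)$, in the proof of Lemma~\ref{lemma: optim-bad-term}. Your Young's-inequality-plus-normalization argument is the standard textbook proof one would supply there.
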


\begin{lemma}[Lemma 2.15 in \cite{agrawal2013ts}]\label{lemma: vanilla-bad-event-mu} For any sub-optimal arm $i\ne 1$, 
$$
\sum_{t=1}^T \Pr(A(t)=i, \hat{\mu}_i^{\mathrm{(on)}}(t)\le x_i)
 \le \frac{1}{d(x_i,y_i)}+1\le \frac{9}{2\Delta_i^2}+1,
$$    
where $d(a, b)=a \ln \frac{a}{b}+(1-a) \ln \frac{(1-a)}{(1-b)}$.

\end{lemma}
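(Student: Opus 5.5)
The plan is to follow the argument behind Lemma~2.15 of \citet{agrawal2013ts} and reduce the sum over rounds to a sum over pull counts of arm $i$.

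\textbf{Step 1: re-index by pull counts.} Let $\tau^{(i)}_k$ denote the round of the $k$-th pull of arm $i$, with $\tau^{(i)}_0:=0$. Between consecutive pulls, $\hat\mu_i^{\mathrm{(on)}}(t)$ is frozen. Each block $\{\tau^{(i)}_k+1,\dots,\tau^{(i)}_{k+1}\}$ contains exactly one round with $A(t)=i$, and on that round $\hat\mu_i^{\mathrm{(on)}}(t)=\hat\mu^{\mathrm{(on)}}_{i,k}$, the average of the first $k$ online rewards of $i$. Hence
\[
\sum_{t=1}^T \Pr\bigl(A(t)=i,\ \hat\mu_i^{\mathrm{(on)}}(t)\le x_i\bigr)\le \sum_{k=0}^{T-1}\Pr\bigl(\hat\mu^{\mathrm{(on)}}_{i,k}\le x_i\bigr).
\]
This exchange of the adaptive round index for the deterministic count $k$ is exactly the device used in the cited proof. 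It is legitimate because the $k$ rewards are i.i.d.\ from $P_i^{\mathrm{(on)}}$ regardless of when they are collected. The $k=0$ term is at most $1$ and produces the additive $+1$.

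\textbf{Step 2: per-$k$ bound.} For $k\ge1$, apply the KL form of the Chernoff--Hoeffding bound to $\hat\mu^{\mathrm{(on)}}_{i,k}$ (the relative-entropy refinement of Lemma~\ref{lemma: chernoff-hoeffding-1}), taking $x_i$ and $y_i$ as the two thresholds. This gives a per-term bound $e^{-k\,d(x_i,y_i)}$. Summing the geometric series yields
\[
\sum_{k\ge1}e^{-k d(x_i,y_i)}\le \frac{1}{d(x_i,y_i)},
\]
which is the first inequality of the statement.

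\textbf{Step 3: second inequality.} By Pinsker's inequality, $d(x_i,y_i)\ge 2(y_i-x_i)^2=2\Delta_i^2/9$, so $1/d(x_i,y_i)\le 9/(2\Delta_i^2)$.

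\textbf{Main obstacle: Step 2.} A Chernoff bound controls only a \emph{deviation} of $\hat\mu^{\mathrm{(on)}}_{i,k}$ away from its mean $\mu^{\mathrm{(on)}}_i$. Since $x_i=\mu^{\mathrm{(on)}}_i+\Delta_i/3$ lies above that mean, the event as literally worded, $\{\hat\mu^{\mathrm{(on)}}_{i,k}\le x_i\}$, is the typical event, and its probability tends to $1$, not to $0$. The geometric decay can therefore only be justified by the convention of the cited Lemma~2.15, in which the event counted is the deviation event.

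My first attempt would be to verify carefully which inequality direction Lemma~2.15 of \citet{agrawal2013ts} actually states, and to invoke it verbatim. If it is stated for $\{\hat\mu_i>x_i\}$, as I expect, then the stated event direction cannot be obtained by Chernoff, and the bound holds only under that convention. This is consistent with how the lemma is used in \eqref{eq:due:bad:events} to control $\neg E_i^{\mu(\mathrm{on})}(t)$.
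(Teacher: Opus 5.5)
Your diagnosis of the event direction is right. As literally written, $\{\hat\mu_i^{\mathrm{(on)}}(t)\le x_i\}$ is the good event. The sum then differs from the expected number of pulls of arm $i$ by at most a constant, so it grows at least logarithmically in $T$ for any consistent policy. The lemma is actually invoked in \eqref{eq:due:bad:events} for $\neg E_i^{\mu(\mathrm{on})}(t)=\{\hat\mu_i^{\mathrm{(on)}}(t)>x_i\}$, which is also how Lemma~2.15 of \citet{agrawal2013ts} reads; the paper gives no proof beyond that citation. Your Steps~1 and~3 are exactly the cited argument: a block decomposition over pulls of arm $i$, with the $k=0$ block contributing $1$, followed by a geometric sum and Pinsker.

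The step that fails is Step~2. The online rewards of arm $i$ have mean $\mu_i^{\mathrm{(on)}}$, so the KL form of the Chernoff bound gives
\[
\Pr\bigl(\hat\mu^{\mathrm{(on)}}_{i,k}>x_i\bigr)\le e^{-k\,d(x_i,\mu_i^{\mathrm{(on)}})} .
\]
The threshold $y_i$ cannot enter, because nothing in this event involves $y_i$; there is no sense in which $x_i$ and $y_i$ act as ``the two thresholds.'' Your per-term bound $e^{-k\,d(x_i,y_i)}$ is in fact false in general. Take Bernoulli$(1/2)$ rewards for arm $i$ and $\mu_1^{\mathrm{(on)}}=0.95$, so $x_i=0.65$ and $y_i=0.8$. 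Then $d(x_i,y_i)\approx 0.061>d(x_i,\mu_i^{\mathrm{(on)}})\approx 0.046$. Since the exponent $d(x_i,\mu_i^{\mathrm{(on)}})$ is tight for Bernoulli averages (Cram\'er), $\Pr(\hat\mu^{\mathrm{(on)}}_{i,k}>x_i)>e^{-k\,d(x_i,y_i)}$ for all large $k$. The $d(x_i,y_i)$ in the statement is a second transcription slip (the cited lemma has $d(x_i,\mu_i)$), and you should not try to derive it.

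The repair is immediate. Sum $e^{-k\,d(x_i,\mu_i^{\mathrm{(on)}})}$ over $k\ge 1$ to get $1/d(x_i,\mu_i^{\mathrm{(on)}})$. Pinsker then gives $d(x_i,\mu_i^{\mathrm{(on)}})\ge 2(x_i-\mu_i^{\mathrm{(on)}})^2=2\Delta_i^2/9$. Because $x_i-\mu_i^{\mathrm{(on)}}=y_i-x_i=\Delta_i/3$, this yields the same final bound $\frac{9}{2\Delta_i^2}+1$, which is all that \eqref{eq:due:bad:events} uses. You can even skip KL entirely: Lemma~\ref{lemma: chernoff-hoeffding-1} with $\epsilon=\Delta_i/3$ (admissible since $\mu_i^{\mathrm{(on)}}+\Delta_i\le 1$) gives $e^{-2k\Delta_i^2/9}$ directly, and $\sum_{k\ge1}e^{-ck}\le 1/c$.
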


\begin{lemma}[Lemma~2.16 in \cite{agrawal2013ts}]\label{lemma: vanilla-bad-event-theta} For any sub-optimal arm $i\ne 1$,
$$
\sum_{t=1}^T \Pr(A(t)=i, \theta_i^{\mathrm{(on)}}(t) >y_i, \hat{\mu}_i^{\mathrm{(on)}}(t) \le x_i)
 \le L_i(T)+\frac{1}{\Delta_i^2},
$$
where $L_i(T)\ge \frac{2\log(T\Delta_i^2)}{(y_i-x_i)^2}$.
\end{lemma}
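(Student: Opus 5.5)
The plan is to follow the standard argument of \cite{agrawal2013ts} for this online-only event. The starting point is that, conditional on the history, the online sample $\theta_i^{\mathrm{(on)}}(t)$ is Gaussian with mean $\hat\mu_i^{\mathrm{(on)}}(t)$ and variance $1/(T_i(t)+1)$. Set $L_i(T) := 2\log(T\Delta_i^2)/(y_i-x_i)^2 = 18\log(T\Delta_i^2)/\Delta_i^2$. I would split the sum according to whether $T_i(t)\le L_i(T)$ or $T_i(t)>L_i(T)$.

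\textbf{Step 1 (small $T_i(t)$).} The sum $\sum_t \Pr(A(t)=i,\ T_i(t)\le L_i(T),\ \dots)$ is at most $\E\bigl[\sum_t \bOne{A(t)=i,\ T_i(t)\le L_i(T)}\bigr]$. Each pull of arm $i$ increments $T_i$, so this event can occur at most $\lceil L_i(T)\rceil$ times. This gives the $L_i(T)$ term, up to an additive $1$ that is absorbed by rounding or by the slack in the definition of $L_i(T)$.

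\textbf{Step 2 (large $T_i(t)$).} Fix a history $F_{t-1}$ with $T_i(t)>L_i(T)$ and $\hat\mu_i^{\mathrm{(on)}}(t)\le x_i$. Then
\[
\Pr\bigl(\theta_i^{\mathrm{(on)}}(t)>y_i \mid F_{t-1}\bigr)\le \Pr\bigl(\mathcal N(0,1)>(y_i-x_i)\sqrt{T_i(t)+1}\bigr)\le \tfrac12 e^{-(T_i(t)+1)(y_i-x_i)^2/2}.
\]
Using $T_i(t)>L_i(T)$, the exponent is at least $\log(T\Delta_i^2)$, so the conditional probability is at most $1/(T\Delta_i^2)$. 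The $\hat\mu_i^{\mathrm{(on)}}(t)\le x_i$ event is $F_{t-1}$-measurable, so conditioning on it is harmless.

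\textbf{Step 3 (summation).} Drop the $A(t)=i$ constraint and sum the bound from Step 2 over $t\le T$, which gives at most $T\cdot 1/(T\Delta_i^2)=1/\Delta_i^2$. Combining with Step 1 yields $L_i(T)+1/\Delta_i^2$. Any $L_i(T)$ at least this threshold works, since enlarging the threshold only moves more rounds into Step 1, where they are already accounted for.

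\textbf{Main obstacle.} The delicate points are the Gaussian tail constant and the case $T\Delta_i^2\le 1$, where the logarithm is nonpositive. In that case the claim is trivial: the left-hand side is at most $T\le 1/\Delta_i^2$. The other minor issue is the posterior variance convention $1/(T_i+1)$ versus $1/T_i$, which only helps here because a smaller variance gives a lighter tail.
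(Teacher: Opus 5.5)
Your proposal is correct and matches the argument the paper relies on. The paper does not reprove this lemma but cites Lemma 2.16 of \cite{agrawal2013ts}, and it reuses the same structure in its appendix on the pure online setting: split on $T_i(t)$ against $L_i(T)$, apply a Gaussian tail bound with variance $1/(T_i(t)+1)$ centered at most at $x_i$, and sum $T\cdot 1/(T\Delta_i^2)$. The additive $1$ from counting in Step 1 is most cleanly absorbed by splitting at $T_i(t)+1\ge L_i(T)$ instead of $T_i(t)>L_i(T)$. The $+1$ in the posterior variance permits this split, and it leaves at most $L_i(T)$ small-count pulls.
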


\begin{lemma}[Lemma~2.13 in \cite{agrawal2013ts}]\label{lemma: vanilla optim-term} Let $\tau_j$ denote the time of the $j^{th}$ pull of the optimal arm 1. Then
$$
\mathbb{E}\left[\frac{1}{p_{i,\tau_j+1}^{\mathrm{(on)}}}-1\right] \leq\left\{\begin{array}{ll}
e^{64}+5 & \forall j, \\
\frac{5}{T \Delta_{i}^{2}}, & j>L_{i}(T),
\end{array}\right.
$$
where $L^{\mathrm{(on)}}_i(T)=\frac{288\log(T\Delta_i^2+e^{32})}{\Delta_i^2}$.
\end{lemma}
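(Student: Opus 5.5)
The plan is to reproduce the geometric hitting-time argument of \citet{agrawal2013ts}, which the proof of Lemma~\ref{lemma:hybrid bound-optimal} already adapts to the hybrid posterior, now specialised to $N_1=0$ and no shift. Fix $j$. Conditional on $\mathcal F_{\tau_j}$, the online posterior of arm~$1$ is $\Theta_j\sim\mathcal N\bigl(\hat\mu_1^{\mathrm{(on)}}(\tau_j+1),\,1/(j+1)\bigr)$, and it stays frozen until the next pull of arm~$1$. Let $G_j$ be the number of i.i.d.\ draws of $\Theta_j$ needed to exceed $y_i$. Then $G_j\mid\mathcal F_{\tau_j}\sim\mathrm{Geom}(p^{\mathrm{(on)}}_{i,\tau_j+1})$, and therefore
\[
\E\bigl[1/p^{\mathrm{(on)}}_{i,\tau_j+1}\bigr]=\E[G_j]=\sum_{r\ge 0}\Pr(G_j>r).
\]
So it suffices to bound $\Pr(G_j>r)$, splitting into a sample-randomness part and a data-randomness part.

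\emph{Uniform bound (all $j$).} Fix $r\ge1$, set $z=\sqrt{\ln r}$, and let $\mathrm{MAX}_r$ be the maximum of $r$ draws. Then
\[
\Pr(G_j\le r)\ \ge\ \E\Bigl[\bOne{\hat\mu_1^{\mathrm{(on)}}+z/\sqrt{j+1}\ge y_i}\,\Pr\bigl(\mathrm{MAX}_r>\hat\mu_1^{\mathrm{(on)}}+z/\sqrt{j+1}\mid\mathcal F_{\tau_j}\bigr)\Bigr].
\]
\begin{itemize}
\item By the Gaussian anti-concentration bound (Lemma~\ref{lemma: gaussian ineq}), each draw exceeds the threshold with probability at least $\frac{1}{\sqrt{2\pi}}\frac{z}{z^2+1}e^{-z^2/2}$. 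Hence the inner probability is at least $1-1/r^2$ once $r\ge e^{64}$; this large threshold is what produces the constant $e^{64}$.
\item For the indicator, use $y_i\le\mu_1^{\mathrm{(on)}}$ and Chernoff--Hoeffding (Lemma~\ref{lemma: chernoff-hoeffding-1}). The $1/(j+1)$ normalisation of $\hat\mu_1^{\mathrm{(on)}}$ must be corrected, exactly as in the hybrid proof. This gives $\Pr(\hat\mu_1^{\mathrm{(on)}}+z/\sqrt{j+1}<\mu_1)\le e^{-2(z-1)^2}\le r^{-3/2}$ for large $r$.
\end{itemize}
Summing $\Pr(G_j>r)\le r^{-2}+r^{-3/2}$ over $r\ge e^{64}$ and bounding the terms $r<e^{64}$ by $1$ gives $\E[G_j]\le e^{64}+6$, i.e.\ the first case.

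\emph{Refined bound ($j>L_i(T)$).} Now use the slack $\mu_1-y_i=\Delta_i/3$. Split it as $\Delta_i/6$ for the empirical mean and $\Delta_i/6$ for the sample.
\begin{itemize}
\item For $r\le T':=(T\Delta_i^2+e^{32})^2$ and $j\ge 288\ln(T\Delta_i^2+e^{32})/\Delta_i^2$, we get $z/\sqrt{j+1}\le\Delta_i/12$. So a single draw falls below $\hat\mu_1^{\mathrm{(on)}}-\Delta_i/6$ only with probability $\tfrac12 e^{-(j+1)\Delta_i^2/288}\le 1/(2\sqrt{T'})$. The failure probability of $\mathrm{MAX}_r$ is then at most $(2\sqrt{T'})^{-r}$.
\item Chernoff gives $\Pr(\hat\mu_1^{\mathrm{(on)}}<\mu_1-\Delta_i/6)\le e^{-j\Delta_i^2/18}\le (T')^{-8}$.
\item For $r>T'$, reuse the uniform tail $r^{-2}+r^{-3/2}$.
\end{itemize}
Summing the three pieces gives $\E[G_j]-1\le O\bigl(1/\sqrt{T'}\bigr)\le 5/(T\Delta_i^2)$.

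\emph{Main obstacle.} The delicate step is the constant bookkeeping in the refined case. The choice of $L_i(T)$ with constant $288$ and offset $e^{32}$ must make the per-draw failure probability at most $(T')^{-1/2}$ uniformly over all $r\le T'$, not just for small $r$. The Hoeffding term must simultaneously be polynomially small in $T'$. I would first fix the split $\Delta_i/6+\Delta_i/6$, then check that $288$ is large enough for the $z\le\sqrt{\ln T'}$ margin, adjusting constants if necessary.
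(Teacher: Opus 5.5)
Your proposal is correct and follows essentially the same route as the paper. The paper imports this statement from Agrawal--Goyal's Lemma~2.13 rather than proving it, and your geometric hitting-time argument (a uniform bound via $\mathrm{MAX}_r$ plus Hoeffding with the $1/(j+1)$ normalization correction, then the $\Delta_i/6+\Delta_i/6$ split for $r\le T'$) is the proof the paper itself replays for the hybrid posterior in Lemma~\ref{lemma:hybrid bound-optimal}.

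Two bookkeeping corrections, neither affecting validity. First, the $e^{64}$ threshold comes from the Hoeffding step $e^{-2(z-1)^2}\le r^{-3/2}$, which needs $\ln r\ge 28+16\sqrt{3}\approx 55.7$; this matches the paper's $\exp\bigl((28+16\sqrt{3})/(N_1+1)\bigr)$ at $N_1=0$. The anti-concentration step already holds for $r\ge e^{11}$, so it is not the source of $e^{64}$. Second, the offset $e^{32}$ in $L_i(T)$ is exactly what makes $T'=(T\Delta_i^2+e^{32})^2\ge e^{64}$. That is what justifies reusing the uniform tail $r^{-2}+r^{-3/2}$ for $r>T'$, and you should state it explicitly.
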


\end{document}